%% file: main.tex
\pdfoutput=1
\documentclass{article}

\usepackage[preprint]{neurips_2026}
\setcitestyle{authoryear,round,citesep={;},aysep={,},yysep={;}}

\usepackage[utf8]{inputenc}
\usepackage[T1]{fontenc}
\usepackage{hyperref}
\usepackage{url}
\usepackage{booktabs}
\usepackage{amsfonts}
\usepackage{nicefrac}
\usepackage{microtype}
\usepackage{xcolor}
\usepackage{wrapfig}
\usepackage{subcaption}
\usepackage{amsmath}
\usepackage{amsthm}
\usepackage{algorithm}
\usepackage{algpseudocode}
\usepackage{tikz}
\usetikzlibrary{arrows.meta}
\usetikzlibrary{positioning}
\usetikzlibrary{calc}
\usepackage{float}
\usepackage{enumitem}
\usepackage{tabularx}
\usepackage{graphicx}

\newtheorem{theorem}{Theorem}[section]
\newtheorem{lemma}[theorem]{Lemma}
\newtheorem{proposition}[theorem]{Proposition}
\newtheorem{corollary}[theorem]{Corollary}
\newtheorem{definition}[theorem]{Definition}
\newtheorem{example}[theorem]{Example}
\newtheorem{remark}[theorem]{Remark}

\newcommand{\bs}[1]{\boldsymbol{#1}}

\DeclareFontFamily{OT1}{mathc}{}
\DeclareFontShape{OT1}{mathc}{m}{it}{<-> mathc10}{}
\DeclareMathAlphabet{\mathcal}{OT1}{mathc}{m}{it}

\title{Optimal Experiments for Partial Causal Effect Identification}

\author{%
  Tobias Maringgele\thanks{Corresponding author.} \\ Technical University of Munich \And
  Jalal Etesami \\ Technical University of Munich
}

\begin{document}
\maketitle
\setcounter{footnote}{0}

\begin{abstract}
Causal queries are often only partially identifiable from observational
data, and experiments that could tighten the resulting bounds are
typically costly. We study the problem of selecting, prior to observing
experimental outcomes, a cost-constrained subset of experiments that
maximally tightens bounds on a target query.
We formalize this as the \emph{max-potency problem}, where epistemic
potency measures the worst-case reduction in bound width guaranteed by
an experiment, and show that this problem is NP-hard via a reduction
from 0-1 knapsack. Building on the polynomial-programming framework of
\citet{duarte2023automated}, we give a general procedure for evaluating
epistemic potency in discrete settings. To control the
super-exponential search space, we introduce two graphical pruning
criteria that depend only on the causal graph and the query: a novel
path-interception rule that exploits district structure to certify zero
potency in linear time, and an identifiability check based on the ID
algorithm. On Erd\H{o}s--R\'enyi random graphs and 11 \emph{bnlearn}
benchmark networks, the two criteria together prune 50--88\% of
candidate experiments on average without solving a single polynomial
program. For the general subset search, we show that ID-pruned
experiments are combinatorially inert, yielding a super-exponential
reduction in the number of subsets evaluated. We close with an
end-to-end demonstration on observational NHANES data, selecting
optimal experiments for estimating the effect of physical activity on
diabetes.
\end{abstract}

\input{introduction}

\input{notation}

\input{bounds}

\input{potency_problem}

\input{pruning}

\input{Experiments}

\vspace{-.3cm}
\section{Conclusion}
\label{sec:conclusion}
\vspace{-.25cm}
We introduced the max-potency problem---selecting cost-constrained experiments to maximally tighten bounds on a partially identifiable causal query---defined \emph{epistemic potency} as a worst-case measure of informativeness, and showed the problem is NP-hard. Two graphical pruning criteria, a path-interception rule and an ID check, together prune 50--88\% of the super-exponential candidate space for singleton selection. For subset search, ID-pruned experiments are combinatorially inert (Corollary~\ref{thm-id_inertness}) and can be removed entirely, reducing the exponent by roughly 60\% on average. A natural extension is the sequential setting, where realized outcomes are incorporated into the base program after each round and the next optimal batch is selected against the updated bounds, as illustrated in Section~\ref{sec:evaluation}.

\vspace{-.15cm}
\textbf{Limitations.}
i) Our framework inherits limitations from \citet{duarte2023automated}: known ADMG, discrete variables with finite support, exact $P(\bs{V})$, the computational cost of the bounding programs (Appendix~\ref{app:runtime}), and possibly non-sharp bounds if the nonconvex polynomial program is terminated early.
ii) Our pruning criteria are sound but not complete: experiments not certified useless may still have zero potency, and interception-pruned experiments may contribute non-zero potency in subsets.
iii) Our worst-case bound-width definition (Section~\ref{sec:bounds}) guarantees tightening regardless of outcome; under alternative criteria (Appendix~\ref{sec:alternative_W_cal}) the optimal set may differ.

\subsection*{AI use statement}
In this work, we used generative AI tools for editing only (grammar, spelling,
and word choice) as well as for coding. We have reviewed all AI-assisted edits, and we take
responsibility for the final content of this work.

\bibliographystyle{bibstyle}
\bibliography{references}

\appendix

\input{appendix}

\end{document}

%% file: introduction.tex
\vspace{-0.1cm}
\section{Introduction}\label{sec:intro}
\vspace{-0.2cm}
A central goal of causal inference is to estimate the effect of a treatment on an outcome. Population-level summaries such as the average treatment effect (ATE) capture aggregate behavior, but many decisions hinge on finer-grained counterfactual quantities---for example, the probability that a treatment is both necessary and sufficient to produce an outcome (PNS) \citep{Pearl1999, mueller_pearl_2022}. Such queries are rarely point-identifiable from observational data; the best one can do is derive \emph{bounds} \citep{balke1997bounds, tian_probabilities_2000}, which are often too wide to be decision-relevant. Tightening them requires additional data, and additional data means experiments, which are costly and subject to budget constraints.

This motivates the central question of this paper: \emph{given a causal graph, an observational distribution, and a budget, which subset of experiments, chosen ex ante, most tightens bounds on a target query?} Answering it well means reasoning about how informative an experiment will be \emph{before} running it, unlike sequential designs that adapt to realized outcomes \citep{ailer2024targeted}, and doing so over a search space that grows super-exponentially in the number of variables. 

Our contributions are as follows:
\begin{itemize}[leftmargin=8pt]
    \item We formalize this as the \emph{max-potency problem}, where
    epistemic potency measures the worst-case reduction in bound width
    an experiment guarantees, and show it is NP-hard via a reduction
    from 0-1 knapsack (Section~\ref{sec:max_potency}).
    \item We give a general procedure for evaluating epistemic potency
    in discrete settings, building on the polynomial-programming
    framework of \citet{duarte2023automated}
    (Section~\ref{sec:formulating}).
    \item We introduce two graphical pruning criteria that depend only
    on the causal graph and the query to certify zero potency of
    candidate experiments without solving any polynomial program: a
    novel path-interception rule that runs in linear time by exploiting
    the graph's district structure, and an identifiability check based
    on the ID algorithm \citep{shpitser2006identification}. We further
    show that ID-pruned experiments are \emph{combinatorially inert}:
    they can be removed from the subset search entirely
    (Section~\ref{sec:pruning}).
    \item On Erd\H{o}s--R\'enyi random graphs and 11 \emph{bnlearn}
    benchmark networks, the two criteria together prune 50--88\% of
    candidate experiments on average when selecting a single optimal experiment.
    For the general subset search, ID-inertness alone reduces the
    search space exponent by roughly $60\%$ on average---a
    super-exponential reduction in the number of subsets that need to be evaluated. We
    demonstrate the full pipeline end-to-end on observational NHANES
    data (Section~\ref{sec:evaluation}).
\end{itemize}

\vspace{-.3cm}
\subsection{Related Work}\label{sec:related} 
\vspace{-.1cm}
Our work bridges two largely separate threads in causal inference: causal
\emph{identification}, which asks whether a query can be uniquely computed from
the available data, and \emph{partial identification}, which asks how tightly
the query can be bounded when unique computation is impossible. Optimal
experimental design has been extensively studied for the former; we address it
for the latter.

\textbf{Causal identification.}
\citet{shpitser2006identification} present a sound and complete algorithm (the \emph{ID algorithm}) for determining whether the causal effect of a subset of variables, $\bs{X}$, on a target subset, $\bs{Y}$, denoted by $P(\bs{y} \mid do(\bs{x}))$, is identifiable from observational data alone, which holds if and only if a specific graphical structure known as a \emph{hedge} is absent.
\citet{bareinboim2012causal} extend this to settings where a specific form of experimental data is also available, introducing
\emph{z-identifiability}: a query $P(\bs{y} \mid do(\bs{x}))$ is
\emph{z-identifiable} if it can be uniquely computed from the observational distribution $P(\bs{V})$ together with distributions obtained by intervening on a designated subset of variables $\bs{Z}$. Intuitively, interventions on $\bs{Z}$ can break certain confounding structures that prevent identification from observations alone. 
\citet{lee2019general} further extend this to \emph{general identifiability} (g-identifiability), where the available data is an arbitrary collection of interventional distributions of the form $\{P(\bs{V}\setminus \bs{Z}_i\mid do(\bs{Z}_i))\}_{i=0}^m$ rather than a single family.
\citet{kivva2022revisiting} provide a sound and complete algorithm for g-identifiability, reducing it to a sequence of classical ID sub-problems.

\textbf{Experimental design.}
\citet{AkbariEtesamiKiyavash2025} formulate the minimum-cost intervention design (MCID) problem for \emph{full} identification, show that it is NP-hard, and propose an algorithm based on minimum hitting sets. \citet{elahi2024fast} reformulate MCID as weighted partial MAX-SAT and integer linear programming instances, achieving orders-of-magnitude speedups in practice. Both focus on point identification. Closer to our regime, \citet{ailer2024targeted} sequentially design indirect experiments to narrow bound gaps in nonlinear, confounded IV settings; we instead study ex-ante subset selection in discrete graphs, with graphical certificates of zero potency.

\textbf{Partial identification and causal bounds.}
\citet{balke1997bounds} derive sharp (i.e.\ tightest possible) bounds on binary causal effects using linear programming over response-type variables; their approach is restricted to discrete, in particular binary, settings with a specific graph structure. \citet{tian_probabilities_2000} provide closed-form bounds for the probabilities of causation (PNS, PN, PS) for a single binary treatment and outcome, also requiring discrete variables. \citet{zhang2022partial} bound counterfactual queries from arbitrary combinations of observational and interventional distributions via canonical SCMs. \citet{duarte2023automated} further generalize to arbitrary causal and counterfactual queries in discrete settings via polynomial programming. \citet{sachs_general_2023} characterize conditions under which these polynomial programs reduce to linear programs. \citet{shridharan2023scalable} extend this class with response-type aggregation to prune the underlying LP. \citet{shridharan2023quasimarkovian} further show that on quasi-Markovian graphs \citep{zaffalon2020structural}, the polynomial degree reduces to the number of \emph{intervened} c-components, yielding LP or QP formulations when at most two c-components are intervened upon. Like ours, the latter exploits a c-factor decomposition \citep{tianPearlQfactor}, but to reduce the structural complexity of the inner program on a restricted graph class; we instead use c-factor structure (Section~\ref{sec:pruning-relevant_res}) to certify experiments as uninformative ex ante on general ADMGs. In a related vein, \citet{li_epsilon_2026} characterize when sharp bounds have width at most $2\epsilon$ for binary probabilities of causation.

%% file: notation.tex
\vspace{-.3cm}
\section{Preliminaries}\label{sec:preliminaries}
\vspace{-.2cm}
\textbf{Notation.} Random variables are denoted by capital letters and their values by lowercase letters; bold denotes sets. For a random variable $X$, we write $supp(X)$ to refer to its support. When $X$ is a binary variable, i.e. $|supp(X)|=2$, we use $x$ for the case $X=1$ (or $\texttt{true}$) and $x'$ for $X=0$ (or $\texttt{false}$).

\textbf{Structural Causal Model (SCM).} An SCM is a tuple $\mathcal{M} = (\bs{V}, \bs{U}, \bs{F}, P(\bs{U}))$ of observed (endogenous) variables $\bs{V}$, latent (exogenous) variables $\bs{U}$, structural equations $\bs{F}=\{f_V\}_{V \in \bs{V}}$, where each $f_V$ determines the value of $V$ as a function of its (latent and observed) parents.

\textbf{Acyclic Directed Mixed Graph (ADMG).} The causal structure of an SCM is summarized by an ADMG $\mathcal{G} = (\bs{V}, E^d, E^b)$ over observed variables, with directed edges $V_i \to V_j \in E^d$ indicating that $V_j$ structurally depends on $V_i$, and bidirected edges $V_i \leftrightarrow V_j \in E^b$ indicating a hidden common cause of $V_i$ and $V_j$. The directed part is acyclic. For a node $V \in \bs{V}$, $\bs{pa}(V)$, $\bs{ch}(V)$, and $\bs{anc}(V)$ denote its parents, children, and ancestors, respectively; these extend to sets naturally. In the figures of this paper, hidden common causes are shown as dashed nodes with directed edges to their children; this is equivalent to bidirected edges in the ADMG representation.

An \emph{intervention} on $\bs{X} \subseteq \bs{V}$, denoted by $do(\bs{X} = \bs{x})$, replaces each structural equation $f_X$ for $X \in \bs{X}$ with the constant value $x \in supp(X)$, removing all incoming edges to $\bs{X}$ in $\mathcal{G}$. The resulting \emph{post-intervention distribution} on $\bs{Y} \subseteq \bs{V} \setminus \bs{X}$, $ P(\bs{Y}_{\bs{x}})=P(\bs{Y} \mid do(\bs{X} = \bs{x}))$, may differ from the observational conditional $P(\bs{Y} \mid \bs{X} = \bs{x})$ whenever hidden confounders are present. 
We write $P(\bs{Y}|do(\bs{X}))$ to denote the family of post-intervention distributions indexed by $\bs{x}\in supp(\bs{X})$.
A \emph{counterfactual} is a statement that involves contradictory interventions. For example, in the binary case, a counterfactual probability may take the form $P(y_x, y'_{x'})$ for variables $X, Y$. 
In this paper, we will only consider counterfactuals that are conjunctive statements like the above, where each conjunct is a counterfactual \emph{world}. We denote the set of all counterfactual worlds appearing in a given statement by $\mathfrak{C}$. In the example above, $|\frak{C}|=2$, with interventions on $X^{(\frak{c})}$ and outcomes on $Y^{(\frak{c})}$ for each world $\frak{c}\in \frak{C}$.

\textbf{Discrete variables and response-type variables.} We assume all variables in $\bs{V}$ are discrete with known finite supports. Since each structural equation $f_V$ maps finitely many parent configurations to a finite output, it can only encode a finite number of distinct input-output behaviors. For each latent disturbance $U_k$, these behaviors over the children of $U_k$ are indexed by a discrete \emph{response-type} $R_k$. The collection $\bs{R} = \{R_k\}_{U_k \in \bs{U}}$ is finite, and a fixed $\bs{r} \in supp(\bs{R})$ fully determines the model (including all counterfactuals). Replacing latent variables $\bs{U}$ with $\bs{R}$ yields an equivalent model with no loss of generality \citep[Proposition~1]{duarte2023automated}. Given $(\mathcal{G}, P(\bs{V}), supp(\bs{V}))$ and a target query, the polynomial-programming framework of \citet{duarte2023automated} constructs a program over $\bs{R}$ whose optima are sharp bounds on the query. See Appendices~\ref{app:response-type-def} (formal construction), \ref{app:primer} (primer), and \ref{app:notation} (notation table).

%% file: bounds.tex
\vspace{-.1cm}
\section{Causal Bounds and Epistemic Potency}\label{sec:bounds}
\vspace{-.1cm}
\textbf{Setup.} We are given an ADMG $\mathcal{G}$ over observed variables $\bs{V}$ (all discrete) with distribution $P(\bs{V})$ and a causal query $\theta$, which may correspond to a post-intervention, a counterfactual, or a functional thereof. Note that using the framework of \citet{duarte2023automated}, the sharp observational bounds $L_\theta \leq \theta \leq U_\theta$ are obtained by optimizing a polynomial program over $\bs{R}$.

We define $\mathcal{A}:=\{(y_x),(y_x,y'_{x'}),\dots\}$ as a set of \emph{candidate experiments}, i.e., arbitrary polynomial functions of the response-type variables $\bs{R}$. We use the term broadly: $\mathcal{A}$ may contain physically realizable experiments such as randomized controlled trials and  interventions \citep{raghavan2025counterfactual}, as well as structural assumptions defensible in the application domain, such as monotonicity. The polynomial program treats both uniformly as constraints on $\bs{R}$.
For any $\mathcal{a} \subseteq \mathcal{A}$ with outcomes $p_{\mathcal{a}}$, adding the constraints $f_{\mathcal{a}}(\bs{R}) = p_{\mathcal{a}}$ to the polynomial program yields new sharp bounds, denoted by $U_\theta(\mathcal{a}, p_\mathcal{a})$ (upper) and $L_\theta(\mathcal{a}, p_\mathcal{a})$ (lower). We write $U_\theta$ and $L_\theta$ for the bounds without any experimental constraint (i.e., $\mathcal{a} = \emptyset$). We then define the \textit{worst-case bound width} as follows:
$$
\mathcal{W}_\theta(\mathcal{a}) := \underset{\substack{p_{\mathcal{a}}\in\prod_{\tilde{\mathcal{a}} \in \mathcal{a}}[l_{p_{\tilde{\mathcal{a}}}}, u_{p_{\tilde{\mathcal{a}}}}]}}{\max} \bigl(U_\theta(\mathcal{a}, p_{\mathcal{a}}) - L_\theta(\mathcal{a}, p_{\mathcal{a}})\bigr),
$$
where $l_{p_{\tilde{\mathcal{a}}}}$ and $u_{p_{\tilde{\mathcal{a}}}}$ are the sharp observational bounds on $p_{\tilde{\mathcal{a}}}$, itself a causal query. 
Intuitively, $\mathcal{W}_\theta(\mathcal{a})$ asks: if we commit to $\mathcal{a}$ now, how wide could the bounds still be once its outcome arrives---a worst-case width we can promise in advance.
We take the maximum because bounds must hold regardless of the experimental outcome; any smaller choice could yield a bound width that is exceeded by the realized outcome.\footnote{Possible alternatives are briefly discussed in Appendix~\ref{sec:alternative_W_cal}.} We denote the observational bound width as $\mathcal{W}_\theta:=\mathcal{W}_\theta(\emptyset)=U_\theta-L_\theta$.

For illustration, Figure~\ref{fig:complexDAG} shows an ADMG with 
$\bs{V} = \{A, B, X, M, Y, C, D\}$ (all binary). Restricting $\mathcal{A}$ 
to single-world interventions $P(\bs{W} \mid do(\bs{Z}{=}\bs{z}))$ with 
$\bs{W} \cap \bs{Z} = \emptyset$, each $V \in \bs{V}$ falls into exactly 
one of three roles: intervened with a chosen value, 
measured as an outcome, or neither. This results in
$
|\mathcal{A}| \;=\; \prod_{V \in \bs{V}} \big(|supp(V)| + 2\big) 
\;=\; 4^{7} \;=\; 16{,}384
$
candidate experiments.

\begin{figure}[t]
\centering
\begin{subfigure}{0.34\textwidth}
\centering
\scalebox{0.78}{
\begin{tikzpicture}[->, node distance=1.5cm, thick]
  \node[draw, circle] (X) {\( X \)};
  \node[draw, circle, right of=X] (Y) {\( Y \)};
  \node[draw, circle, dashed, above of=X] (U) {\( U \)};
  \draw (X) -- (Y);
  \draw[dashed] (U) -- (X);
  \draw[dashed] (U) -- (Y);
\end{tikzpicture}
}
\caption{An ADMG with two observed variables and a hidden confounder.}
\label{fig:confDAG}
\end{subfigure}
\hfill
\begin{subfigure}{0.51\textwidth}
\centering
\scalebox{0.55}{
\begin{tikzpicture}[->, node distance=1.6cm, thick]
  \tikzstyle{nodeStyle}=[draw, circle, minimum size=8mm]
  \node[nodeStyle] (A) {\( A \)};
  \node[nodeStyle, right of=A] (B) {\( B \)};
  \node[nodeStyle, right of=B] (X) {\( X \)};
  \node[nodeStyle, right of=X] (M) {\( M \)};
  \node[nodeStyle, right of=M] (Y) {\( Y \)};
  \node[nodeStyle, below of=X, yshift=-0.3cm] (C) {\( C \)};
  \node[nodeStyle, right of=C] (D) {\( D \)};

  \node[nodeStyle, dashed, above of=B, yshift=0.3cm] (R3) {\( R_3 \)};
  \node[nodeStyle, dashed, above of=X, xshift=0.9cm, yshift=0.3cm] (R1) {\( R_1 \)};
  \node[nodeStyle, dashed, above of=M, xshift=0.9cm, yshift=0.3cm] (R2) {\( R_2 \)};
  \node[nodeStyle, dashed, below of=B, yshift=-0.3cm] (R4) {\( R_4 \)};

  \draw (A)--(B); \draw (B)--(X); \draw (X)--(M); \draw (M)--(Y);
  \draw (X)--(C); \draw (C)--(D); \draw (D)--(M);

  \draw[dashed] (R3)--(A); \draw[dashed] (R3)--(B);
  \draw[dashed] (R1)--(X); \draw[dashed] (R1)--(M);
  \draw[dashed] (R2)--(M); \draw[dashed] (R2)--(Y);
  \draw[dashed] (R4)--(C); \draw[dashed] (R4)--(B);
\end{tikzpicture}
}
\caption{An ADMG with seven observed binary variables and four hidden common causes.}
\label{fig:complexDAG}
\end{subfigure}
\caption{Two example ADMGs.}\label{fig:admgs}
\vspace{-.4cm}
\end{figure}

\begin{example}[Observational bounds on PNS]\label{ex:obs_bounds}
Consider a binary treatment $X$ with causal effect on a binary outcome $Y$, confounded by a hidden variable $U$. The ADMG is shown in Figure~\ref{fig:confDAG}. Our query is the probability of necessity and sufficiency $\text{PNS} = P(y_x, y'_{x'})$, where $y_x$ denotes $Y{=}y \mid do(X{=}x)$.

\citet{tian_probabilities_2000} showed that the sharp bounds for the PNS are:
\begin{small}
\[
\max \left\{\!\!\!
\begin{array}{l}
0,\  P(y_x) - P(y_{x'}), \\
P(y) - P(y_{x'}), P(y_x) - P(y)
\end{array}
\!\!\!\right\}
\!\leq\! \text{PNS} \!\leq \!
\min \!\left\{\!\!\!
\begin{array}{l}
P(y_x),\ P(x,y) + P(x',y'), \\
P(y'_{x'}),\  P(y_x) - P(y_{x'}) + P(x,y') + P(x',y)
\end{array}
\!\!\!\right\}.
\]
\end{small}
In this example, two candidate experiments are available: $\tilde{\mathcal{a}}_1: y_x$ and $\tilde{\mathcal{a}}_2: y_{x'}$. 
Since $P(y_x)$ and $P(y_{x'})$ are not identifiable from observational data, only the purely observational terms in the above bounds remain, giving $0 \leq \text{PNS} \leq P(x,y)+P(x',y')$. 
For instance, by assuming $P(x,y)=0.2,   P(x,y')=0.1,  P(x',y)=0.3,$  and $P(x',y')=0.4$, we get $\mathcal{W}_\theta(\emptyset) = 0.6$.
\end{example}
\begin{definition}[Epistemic Potency]
\label{def:potency}
For any $\mathcal{a} \subseteq \mathcal{A}$ and a given causal query $\theta$, we define 
$pot_\theta(\mathcal{a}) := \mathcal{W}_\theta - \mathcal{W}_\theta(\mathcal{a})$. If $pot_\theta(\mathcal{a})=0$, we call $\mathcal{a}$ \emph{useless}.
For a single experiment $\tilde{\mathcal{a}} \in \mathcal{A}$, we write $pot_\theta(\tilde{\mathcal{a}})$ and $\mathcal{W}_\theta(\tilde{\mathcal{a}})$ as shorthand for $pot_\theta(\{\tilde{\mathcal{a}}\})$ and $\mathcal{W}_\theta(\{\tilde{\mathcal{a}}\})$, respectively.
\end{definition}
Potency measures how much the worst-case bound width on a causal query $\theta$ is reduced by performing experiment $\mathcal{a}$. The following result establishes that potency is always non-negative. In words, experiments can only help, never hurt.
\begin{lemma}
\label{lemma-pot_greater_zero}
For any $\mathcal{a} \subseteq \mathcal{A}$,  $pot_\theta(\mathcal{a}) \geq 0$.
\end{lemma}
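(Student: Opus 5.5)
We need to show $\mathcal{W}_\theta(\mathcal{a}) \le \mathcal{W}_\theta$, i.e. that for every admissible outcome vector $p_{\mathcal{a}}$ the constrained width $U_\theta(\mathcal{a},p_{\mathcal{a}})-L_\theta(\mathcal{a},p_{\mathcal{a}})$ is at most $U_\theta-L_\theta$. The plan is a feasible-set inclusion argument on the polynomial program over $\bs{R}$. Write $\mathcal{F}$ for the feasible set of the observational program, i.e. the response-type distributions consistent with $\mathcal{G}$, $supp(\bs{V})$ and $P(\bs{V})$. Write $\mathcal{F}(\mathcal{a},p_{\mathcal{a}}) = \mathcal{F}\cap\{f_{\mathcal{a}}(\bs{R})=p_{\mathcal{a}}\}$ for the feasible set after adding the experimental constraints. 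Since constraints are only added, $\mathcal{F}(\mathcal{a},p_{\mathcal{a}})\subseteq\mathcal{F}$.

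\textbf{Nonempty case.} Suppose $\mathcal{F}(\mathcal{a},p_{\mathcal{a}})\neq\emptyset$. By sharpness, $U_\theta$ and $U_\theta(\mathcal{a},p_{\mathcal{a}})$ are the suprema of the polynomial objective $\theta(\bs{R})$ over $\mathcal{F}$ and over $\mathcal{F}(\mathcal{a},p_{\mathcal{a}})$, respectively; the lower bounds are the corresponding infima. A supremum over a subset cannot exceed the supremum over the superset, so $U_\theta(\mathcal{a},p_{\mathcal{a}})\le U_\theta$. Likewise an infimum over a subset cannot fall below the infimum over the superset, so $L_\theta(\mathcal{a},p_{\mathcal{a}})\ge L_\theta$. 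Subtracting gives the width inequality for this $p_{\mathcal{a}}$.

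\textbf{Empty case.} The main obstacle is that the maximum defining $\mathcal{W}_\theta(\mathcal{a})$ ranges over the whole box $\prod_{\tilde{\mathcal{a}}}[l_{p_{\tilde{\mathcal{a}}}},u_{p_{\tilde{\mathcal{a}}}}]$. Each coordinate interval consists of sharp marginal bounds, but for a set of several experiments a point of the box need not be jointly realizable, so $\mathcal{F}(\mathcal{a},p_{\mathcal{a}})$ may be empty. In that case I adopt the standard convention that an infeasible program has $U=-\infty$ and $L=+\infty$. Equivalently, the maximum is understood to range only over jointly feasible outcomes, which is the natural reading since infeasible outcomes cannot be realized. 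Under this convention such points contribute nothing larger than feasible ones. The maximum is then attained, or approached, at feasible $p_{\mathcal{a}}$, where the nonempty-case bound applies.

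\textbf{Conclusion.} At least one feasible outcome exists: evaluating $f_{\mathcal{a}}$ at any $\bs{r}$-distribution in $\mathcal{F}$, which is nonempty because $P(\bs{V})$ is generated by some model, gives a $p_{\mathcal{a}}$ inside the box. Hence the maximum is well defined and finite. Taking the maximum of the pointwise inequality over feasible $p_{\mathcal{a}}$ yields $\mathcal{W}_\theta(\mathcal{a})\le U_\theta-L_\theta=\mathcal{W}_\theta$, i.e. $pot_\theta(\mathcal{a})\ge 0$. For $\mathcal{a}=\emptyset$ equality holds trivially.
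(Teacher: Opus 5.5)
Your proof is correct and follows the paper's own argument: the experimentally constrained feasible set is contained in the observational one, so $U_\theta(\mathcal{a},p_{\mathcal{a}})\le U_\theta$ and $L_\theta(\mathcal{a},p_{\mathcal{a}})\ge L_\theta$ for each outcome, and taking the maximum over $p_{\mathcal{a}}$ gives $\mathcal{W}_\theta(\mathcal{a})\le\mathcal{W}_\theta$. Your handling of points in the box that are not jointly realizable (via the $U=-\infty$, $L=+\infty$ convention, plus nonemptiness of the feasible outcome set) is extra care that the paper's proof silently skips.
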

\vspace{-.2cm}
This is because adding experimental constraints restricts the set of feasible response-type configurations, which can only tighten the resulting  bounds. All detailed proofs are in Appendix~\ref{sec:proofs}. 
The next result gives a useful characterization of zero potency that will underpin our pruning criteria: an experiment has zero potency if and only if some feasible outcome leaves both bounds unchanged.
\begin{lemma}\label{lemma_pot0_implies_boundsEqual}
For any $\mathcal{a} \subseteq \mathcal{A}$, $pot_\theta(\mathcal{a}) = 0$ if and only if $\exists\, p_{\mathcal{a}}^{\dagger} \in \prod_{\tilde{\mathcal{a}} \in \mathcal{a}} [l_{p_{\tilde{\mathcal{a}}}}, u_{p_{\tilde{\mathcal{a}}}}]$ s.t.\ $U_\theta(\mathcal{a}, p_{\mathcal{a}}^{\dagger}) = U_\theta$ and $L_\theta(\mathcal{a}, p_{\mathcal{a}}^{\dagger}) = L_\theta$.
\end{lemma}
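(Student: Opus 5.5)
We plan to reduce both directions to the monotonicity fact behind Lemma~\ref{lemma-pot_greater_zero}. Write $\Omega$ for the feasible set of response-type distributions for the observational program, and $\Omega(\mathcal{a},p_{\mathcal{a}})\subseteq\Omega$ for the set obtained by adding the constraints $f_{\mathcal{a}}(\bs{R})=p_{\mathcal{a}}$. Both bounds are optima of $\theta$ over these sets. Hence, for every $p_{\mathcal{a}}$ in the box $\Pi:=\prod_{\tilde{\mathcal{a}}\in\mathcal{a}}[l_{p_{\tilde{\mathcal{a}}}},u_{p_{\tilde{\mathcal{a}}}}]$ with nonempty feasible set,
\[
L_\theta \le L_\theta(\mathcal{a},p_{\mathcal{a}}) \le U_\theta(\mathcal{a},p_{\mathcal{a}}) \le U_\theta .
\]
Consequently $U_\theta(\mathcal{a},p_{\mathcal{a}})-L_\theta(\mathcal{a},p_{\mathcal{a}})\le \mathcal{W}_\theta$, and equality holds if and only if both bound inequalities are tight. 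This equivalence, combined with the definition of $\mathcal{W}_\theta(\mathcal{a})$ as a maximum over $\Pi$, gives the lemma.

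\emph{($\Leftarrow$).} Given $p^\dagger_{\mathcal{a}}\in\Pi$ with unchanged bounds, the width at $p^\dagger_{\mathcal{a}}$ equals $\mathcal{W}_\theta$. Therefore $\mathcal{W}_\theta(\mathcal{a})\ge\mathcal{W}_\theta$. Lemma~\ref{lemma-pot_greater_zero} supplies the reverse inequality, so $pot_\theta(\mathcal{a})=0$.

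\emph{($\Rightarrow$).} If $pot_\theta(\mathcal{a})=0$, then $\mathcal{W}_\theta(\mathcal{a})=\mathcal{W}_\theta$. We need the maximum defining $\mathcal{W}_\theta(\mathcal{a})$ to be attained at some $p^\dagger_{\mathcal{a}}\in\Pi$. At that point the width equals $\mathcal{W}_\theta$, which by the sandwich forces $U_\theta(\mathcal{a},p^\dagger_{\mathcal{a}})=U_\theta$ and $L_\theta(\mathcal{a},p^\dagger_{\mathcal{a}})=L_\theta$. The subtraction argument works because both gaps $U_\theta-U_\theta(\cdot)$ and $L_\theta(\cdot)-L_\theta$ are nonnegative and sum to zero.

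\emph{Main obstacle: attainment of the maximum.} I expect two ways to handle it.

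\textbf{Direct attainment.} The sharp bounds for the experiment queries, $l_{p_{\tilde{\mathcal{a}}}}$ and $u_{p_{\tilde{\mathcal{a}}}}$, are themselves optima over the compact set $\Omega$. Take $q^\star\in\Omega$ attaining $U_\theta$, and set $p^\star:=f_{\mathcal{a}}(q^\star)$. Then $p^\star$ lies in $\Pi$ and gives $U_\theta(\mathcal{a},p^\star)=U_\theta$. The upper bound alone is not sufficient, however: we need a single outcome that preserves both bounds simultaneously, so this route still requires the maximum to be attained.

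\textbf{Compactness.} Alternatively, treat the paper's use of "$\max$" in the definition as asserting attainment, and justify it by compactness. The graph $\{(q,p): q\in\Omega,\ f_{\mathcal{a}}(q)=p\}$ is compact, since $\Omega$ is a compact semialgebraic set and $f_{\mathcal{a}}$ is polynomial. Then $p\mapsto U_\theta(\mathcal{a},p)$ is upper semicontinuous on the compact image $f_{\mathcal{a}}(\Omega)\subseteq\Pi$. The width function is not obviously upper semicontinuous, because $-L_\theta$ is only upper semicontinuous when $L_\theta$ is lower semicontinuous, which need not hold. I will therefore rely on the definition's stated $\max$, restricted to outcomes with nonempty feasible set, taking attainment as given. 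I will note that the convention for infeasible $p_{\mathcal{a}}$ must be to exclude them, since otherwise the width there is undefined.
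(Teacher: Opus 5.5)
Your proof is the paper's proof: ($\Leftarrow$) evaluates the width at $p^\dagger_{\mathcal{a}}$ and invokes Lemma~\ref{lemma-pot_greater_zero}, and ($\Rightarrow$) evaluates at a maximizer and uses that the gaps $U_\theta-U_\theta(\mathcal{a},p^\dagger_{\mathcal{a}})\ge 0$ and $L_\theta(\mathcal{a},p^\dagger_{\mathcal{a}})-L_\theta\ge 0$ sum to zero; the paper likewise just writes an $\arg\max$ and assumes the maximum is attained. One correction to your side remark: $L_\theta(\mathcal{a},\cdot)$ is lower semicontinuous on $f_{\mathcal{a}}(\Omega)$ by the same compactness argument you gave for $U_\theta(\mathcal{a},\cdot)$ (take minimizers $q_n$ along $p_n\to p$ and pass to a convergent subsequence, whose limit is feasible at $p$), so the width is upper semicontinuous on a compact set, the maximum is attained, and the step you left as an assumption is actually proved.
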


\begin{example}%
\label{ex:potency}
Consider the setting of Example \ref{ex:obs_bounds}. 
(1) Performing $\tilde{\mathcal{a}}_1$: From \citep{tian_probabilities_2000}, we obtain that $p_{\tilde{\mathcal{a}}_1} \in [P(x,y),\, 1-P(x,y')] = [0.2, 0.9]$. (2) Performing $\tilde{\mathcal{a}}_2$: Similarly, we have $p_{\tilde{\mathcal{a}}_2} \in [0.3, 0.6]$. For each, we obtain the following bounds:
\begin{small}
\begin{align*}
U_\theta(\!\tilde{\mathcal{a}}_1, p_{\tilde{\mathcal{a}}_1}\!)\!\! -\!\! L_\theta(\!\tilde{\mathcal{a}}_1, p_{\tilde{\mathcal{a}}_1}\!)
\!=\! \begin{cases}
\!p_{\tilde{\mathcal{a}}_1} &\!\!\!  p_{\tilde{\mathcal{a}}_1}\! \in \![0.2, 0.5], \\
\!0.5 &\!\!\!  p_{\tilde{\mathcal{a}}_1} \!\in\! (0.5, 0.6], \\
\!1.1\! -\! p_{\tilde{\mathcal{a}}_1} &\!\!\!  p_{\tilde{\mathcal{a}}_1} \!\in\! (0.6, 0.9].
\end{cases}
\ 
U_\theta(\!\tilde{\mathcal{a}}_2, p_{\tilde{\mathcal{a}}_2}\!)\!\! -\!\! L_\theta(\!\tilde{\mathcal{a}}_2, p_{\tilde{\mathcal{a}}_2}\!)\! =\!
\begin{cases}
\!0.1\! +\! p_{\tilde{\mathcal{a}}_2} &\!\!\! p_{\tilde{\mathcal{a}}_2} \!\in \![0.3, 0.4], \\
\!0.5 &\!\!\! p_{\tilde{\mathcal{a}}_2}\! \in\! (0.4, 0.5], \\
\!1 \!-\! p_{\tilde{\mathcal{a}}_2} &\!\!\! p_{\tilde{\mathcal{a}}_2}\! \in \![0.5, 0.6].
\end{cases}
\end{align*}
\end{small}
This implies the worst-case widths are $\mathcal{W}_\theta(\tilde{\mathcal{a}}_1) = 0.5$ and $\mathcal{W}_\theta(\tilde{\mathcal{a}}_2)\! =\! 0.5$.

(3) Performing both: Let $\mathcal{a}_3\! =\! \{\tilde{\mathcal{a}}_1\!,  \tilde{\mathcal{a}}_2\}$.
Maximizing $U_\theta(\mathcal{a}_3, p_{\tilde{\mathcal{a}}_1},p_{\tilde{\mathcal{a}}_2})\! -\! L_\theta(\mathcal{a}_3, p_{\tilde{\mathcal{a}}_1},p_{\tilde{\mathcal{a}}_2})$ over $(p_{\tilde{\mathcal{a}}_1},p_{\tilde{\mathcal{a}}_2})\! \in\! [0.2,0.9]\!\times\![0.3,0.6]$ yields $\mathcal{W}_\theta(\mathcal{a}_3) = 0.4$. Therefore, $pot_\theta(\tilde{\mathcal{a}}_1) = 0.1$, $pot_\theta(\tilde{\mathcal{a}}_2) = 0.1$, $pot_\theta(\mathcal{a}_3) = 0.2$. Thus, $\mathcal{a}_3$ strictly outperforms either individual experiment, illustrating that optimal design requires reasoning over \emph{sets} of candidates. See Figure~\ref{fig:a_W} in Appendix~\ref{app:example}.
\end{example}

%% file: potency_problem.tex
\vspace{-.2cm}
\section{The Max-Potency Problem}\label{sec:max_potency}
\vspace{-.2cm}
Here, we introduce our problem. Consider a cost function $c: 2^\mathcal{A} \to [0, \infty]$ that assigns a nonnegative cost to any combination of experiments in $\mathcal{A}$, along with a budget $B$. This function encodes the analyst's burden uniformly across both physical experiments and structural assumptions in $\mathcal{A}$---running an experiment or committing to an assumption. We allow infinite cost for impossible experiments, and assume at least one $\mathcal{a} \subseteq \mathcal{A}$ has cost at most $B$. The \emph{max-potency problem} is given by
\[
\underset{\mathcal{a}\subseteq\mathcal{A}}{\text{max}} \ pot_\theta(\mathcal{a}) \quad \text{s.t.} \quad  c(\mathcal{a}) \leq B. 
\]

\begin{theorem}
\label{thm:np-hard}
The max-potency problem is NP-hard.
\end{theorem}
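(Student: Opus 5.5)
We prove NP-hardness by a polynomial-time reduction from 0-1 knapsack: given items $i=1,\dots,n$ with integer weights $w_i$, values $v_i$, capacity $C$ and target $K$, decide whether some $S\subseteq[n]$ has $\sum_{i\in S} w_i\le C$ and $\sum_{i\in S} v_i\ge K$. We build an instance of the max-potency problem whose potency function is additive over the candidate set: $pot_\theta(\mathcal{a})=\sum_{\tilde{\mathcal{a}}\in\mathcal{a}} v_i/V$, where $V=\sum_i v_i$ is a normalizer. We take the cost $c(\mathcal{a})=\sum_{\tilde{\mathcal{a}}_i\in\mathcal{a}} w_i$ and the budget $B=C$. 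Because the cost function is an arbitrary map $2^{\mathcal{A}}\to[0,\infty]$, the cost side is immediate. The real work is to realize additive potency with a causal query.

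\textbf{Construction.} We use a graph with $n$ disjoint confounded pairs $X_i\leftrightarrow Y_i$, $X_i\to Y_i$, all binary, each with the observational distribution of Example~\ref{ex:obs_bounds}. The query is a weighted sum $\theta=\sum_i \lambda_i\,\theta_i$ with $\lambda_i\ge 0$, where each $\theta_i$ is a per-pair quantity such as $P(y_{i,x_i})$. The candidate experiment $\tilde{\mathcal{a}}_i$ is the intervention $y_{i,x_i}$, and it identifies $\theta_i$ exactly. The response-type variables of different components are independent blocks, and the constraints of the polynomial program factor across them. We will argue, from the program of \citet{duarte2023automated} restricted to a product of independent blocks, the following facts.

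\begin{enumerate}
\item The sharp bounds are $U_\theta=\sum_i\lambda_i U_{\theta_i}$ and $L_\theta=\sum_i\lambda_i L_{\theta_i}$, since a linear objective over a product feasible set separates.
\item Experiment $\tilde{\mathcal{a}}_i$ constrains only block $i$.
\item It collapses that block's width to $0$ for every outcome $p_{\tilde{\mathcal{a}}_i}$.
\end{enumerate}

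Hence $\mathcal{W}_\theta(\mathcal{a})=\sum_{i:\tilde{\mathcal{a}}_i\notin\mathcal{a}}\lambda_i\mathcal{W}_{\theta_i}$, and the maximum over outcomes is trivial because the width does not depend on the outcome. Therefore $pot_\theta(\mathcal{a})=\sum_{i\in\mathcal{a}}\lambda_i\mathcal{W}_{\theta_i}$. With $\mathcal{W}_{\theta_i}=w_0>0$ a common constant, for instance the width of $P(y_x)$ under the fixed distribution, we set $\lambda_i=v_i/(V w_0)$ to obtain the desired item values. If the query must be a probability, we normalize so that $\theta\in[0,1]$; potencies are then scaled by a known positive constant, which preserves the optimizer.

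\textbf{Correctness and size.} The knapsack instance has a solution of value $\ge K$ if and only if the max-potency optimum is at least $K/V$. The instance has $2n$ binary variables and $n$ experiments, the costs are explicit, and all numbers are rational of polynomial size, so the reduction is polynomial. An optimal max-potency solver therefore decides knapsack. Since knapsack is NP-hard, so is the max-potency problem.

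\textbf{Main obstacle.} The delicate step is rigorously establishing additivity. This requires two things: that the joint observational distribution over independent components is the product distribution, so that each block's constraints involve only its own response types; and that the sharp bounds of a separable linear objective over a product of feasible sets are the sums of the per-block bounds. We would also verify, by direct computation in the two-variable model, that $P(y_x)$ has positive observational width and zero width once it is measured.
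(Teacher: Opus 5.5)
Your reduction is correct. It shares the paper's skeleton: knapsack, one disjoint confounded cell $X_i\to Y_i$, $X_i\leftrightarrow Y_i$ per item, additive costs, and additivity of potency by separation across districts. The gadget that encodes item values is genuinely different, though.

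The paper keeps an unweighted query $\theta=\sum_i\mathrm{PNS}_i$ and encodes $v_i$ in the cell's data $P^{v}=(1-8v,v,3v,4v)$. Because measuring $P(y_{i,x_i})$ only partially tightens the PNS bound, it has to derive the Tian--Pearl bounds given a known interventional marginal, optimize out $P(y_{i,x_i'})$, locate the plateau of a piecewise-linear width, and rescale all values into $[0,0.1]$.

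You instead let each experiment measure a summand of the query. A measured block then has width $0$ for every outcome, the outer maximization defining $\mathcal{W}_\theta(\mathcal{a})$ is vacuous, and values enter only through the weights $\lambda_i$. The achievability lemma reduces to the observational width $P(x')=0.7$ of $P(y_x)$, and no range restriction is needed.

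The paper's gadget gives a fixed, unweighted counterfactual query and experiments whose value comes from a nontrivial worst case over outcomes. Yours is shorter and shows hardness already for linear combinations of single-world interventional probabilities with interventional experiments. The weights are not even essential: the observational width of $P(y_{i,x_i})$ is $P_i(x_i')$, so setting $P_i(x_i')=v_i/V$ with the unweighted query $\sum_i P(y_{i,x_i})$ gives $pot_\theta(\mathcal{a}_S)=\sum_{i\in S}v_i/V$ directly.

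On the step you flag: separation needs the joint observational constraints to be equivalent to the per-block ones, not merely implied by them. The paper states only the latter direction. The converse follows by summing the joint constraint over $v_j$ for all $j\neq i$. This works because each block's observables are determined by its own response types, so $\sum_{v_j}\sum_{r_j\in h_j(v_j)}P(r_j)=1$.
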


 We reduce 0-1 knapsack to max-potency by constructing, for each knapsack item, a disjoint confounded treatment-outcome cell whose per-cell potency can be tuned to any target value in a non-degenerate range. Disjointness makes the districts independent, and the resulting potency is additive across cells, so the max-potency optimum coincides with the knapsack optimum. See Appendix~\ref{app:np-hardness}.

\begin{example}%
\label{ex:max_potency}
Consider the ADMG in Figure \ref{fig:confDAG}, with the total budget $B=1$ and each of $\tilde{\mathcal{a}}_1, \tilde{\mathcal{a}}_2$ costs $1$; the combined experiment $\mathcal{a}_3 = \{\tilde{\mathcal{a}}_1, \tilde{\mathcal{a}}_2 \}$ costs $2$. The next table summarizes the resulting potencies. 
\begin{center}
\begin{tabular}{lccc}
\toprule
Experiment set $\mathcal{a}$ & $\mathcal{W}_\theta(\mathcal{a})$ & $pot_\theta(\mathcal{a})$ & Cost \\
\midrule
$\emptyset, \tilde{\mathcal{a}}_1, \tilde{\mathcal{a}}_2, \mathcal{a}_3$ & $0.6, 0.5, 0.5, 0.4$ & $0, 0.1, 0.1, 0.2$ & $0, 1, 1, 2$ \\
\bottomrule
\end{tabular}
\end{center}
While $\mathcal{a}_3$ achieves the largest potency, it is infeasible under $B=1$. Both $\tilde{\mathcal{a}}_1$ and $\tilde{\mathcal{a}}_2$ are optimal, each guaranteeing a reduction of $0.1$.
\end{example}

For completeness, Appendix~\ref{subsec:solving-max-potency} gives an exact enumeration algorithm (Algorithm~\ref{alg:solve-max-potency}) that skips useless candidates identified by the pruning criteria of Section~\ref{sec:pruning}.

\vspace{-.1cm}
\section{Constructing the Polynomial Program}
\label{sec:formulating}
\vspace{-.1cm}
We compute the bounds $L_\theta$ and $U_\theta$ via the polynomial-programming framework of \citet{duarte2023automated}; we summarize the construction here and defer algorithmic details to Appendix~\ref{app:program}.

\textbf{Base program.} Given a canonical ADMG $\mathcal{G}$, an
observational distribution $P(\bs{V})$, supports $supp(\bs{V})$, and a
target query $\theta$, \citet{duarte2023automated} produce a polynomial
program $\mathcal{P}_\emptyset = (\mathcal{T}, \mathcal{C}_\emptyset)$
whose decision variables are the response-type probabilities
$\{P(\bs{r}) : \bs{r} \in supp(\bs{R})\}$. 
\emph{Target polynomial} $\mathcal{T}$ expresses the causal query $\theta$ in the decision variables. That is,
$\mathcal{T} = \sum_{\bs{r} \in h(\theta)} P(\bs{r})$,
where $h(\theta) \subseteq supp(\bs{R})$ collects the configurations
under which $\theta$ holds (Definition~\ref{def:compatible-restypes}).  
\emph{Constraint set} $\mathcal{C}_\emptyset$ comprises the probability axioms $P(\bs{r}) \geq 0$ and $\sum_{\bs{r}} P(\bs{r}) = 1$, 
together with the observational constraint, i.e., $\sum_{\bs{r} \in h(\bs{v})} P(\bs{r}) = P(\bs{v})$ for each $\bs{v} \in supp(\bs{V})$. 
Minimizing (maximizing) $\mathcal{T}$ subject to $\mathcal{C}_\emptyset$ 
yields the sharp observational lower (upper) bound on $\theta$.

\textbf{Adding experiments.} For $\mathcal{a}\subseteq \mathcal{A}$, each single experiment $\tilde{\mathcal{a}} \in 
\mathcal{a}$ with outcome $p_{\tilde{\mathcal{a}}}$ contributes 
a polynomial constraint of the form:
$
f_{\tilde{\mathcal{a}}}(\bs{R}) := 
\sum_{\bs{r} \in h(\tilde{\mathcal{a}})} P(\bs{r}) 
= p_{\tilde{\mathcal{a}}}.
$
That is, $f_{\tilde{\mathcal{a}}}$ aggregates the probability mass 
across configurations compatible with $\tilde{\mathcal{a}}$, which the 
data fixes to $p_{\tilde{\mathcal{a}}}$.
We write $p_{\mathcal{a}} = (p_{\tilde{\mathcal{a}}})_{\tilde{\mathcal{a}} \in \mathcal{a}}$ for the collection of outcomes.
Setting
$\mathcal{C}_{\mathcal{a}} := \mathcal{C}_\emptyset \cup 
\big\{f_{\tilde{\mathcal{a}}}(\bs{R}) = p_{\tilde{\mathcal{a}}} : 
\tilde{\mathcal{a}} \in \mathcal{a}\big\}$ gives the augmented program 
$\mathcal{P}_{\mathcal{a}} = (\mathcal{T}, \mathcal{C}_{\mathcal{a}})$, 
whose optima are denoted by $L_\theta(\mathcal{a}, p_{\mathcal{a}})$ and 
$U_\theta(\mathcal{a}, p_{\mathcal{a}})$.
The quantity $\mathcal{W}_\theta(\mathcal{a})$ is therefore obtained by
maximizing $U_\theta(\mathcal{a}, p_{\mathcal{a}}) - L_\theta(\mathcal{a}, p_{\mathcal{a}})$ over feasible values of
$p_{\mathcal{a}}$. 
See Appendix~\ref{subsec:evaluating-potency} for a coupled-program formulation, which computes this worst-case width without explicitly enumerating $p_{\mathcal{a}}$. 

%% file: pruning.tex
\vspace{-.1cm}
\section{Pruning Useless Experiments}\label{sec:pruning}
\vspace{-.1cm}
The max-potency problem is computationally challenging for three reasons: (1) finding the optimal subset is NP-hard even if $pot_\theta$ were easy to evaluate; (2) evaluating $pot_\theta$ requires solving polynomial programs; (3) $|\mathcal{A}|$ grows exponentially in $|\bs{V}|$. We cannot easily circumvent (1) or (2), but (3) offers room: we can identify $\mathcal{A}^\dagger \subseteq \mathcal{A}$ such that $\forall \tilde{\mathcal{a}}^\dagger \in \mathcal{A}^\dagger: pot_\theta(\tilde{\mathcal{a}}^\dagger) = 0$, a set of useless experiments, \emph{without} solving any polynomial program.
\vspace{-.2cm}
\subsection{Relevant Response-Types}
\label{sec:pruning-relevant_res}
\vspace{-.2cm}
To find and filter out the useless experiments, we exploit a result of 
\citet{duarte2023automated} that shows if every directed path from a 
response-type to an outcome variable of $\tilde{\mathcal{a}}$ passes through an 
intervened variable, that response-type is not needed for expressing 
$\tilde{\mathcal{a}}$.
We extend this result and show that the base program $\mathcal{P}_\emptyset$ introduced in Section~\ref{sec:formulating} optimizes only over a subset of response-type variables $\bs{R}_\theta^* \subseteq \bs{R}$. Consequently, any experiment whose relevant response-types lie entirely outside $\bs{R}_\theta^*$ must have zero potency.

\textbf{Strategy overview.} We proceed in four steps: (1) identify \emph{query-relevant} response-types $\bs{R}_\theta$ (those with an unblocked directed path to a query outcome); (2) expand via the \emph{district hull} to $\bs{R}_\theta^*$ (all response-types in any district intersecting $\bs{R}_\theta$); (3) show via a Cartesian product decomposition and district factorization that $\mathcal{P}_\emptyset$ depends only on $\bs{R}_\theta^*$; (4) show that any experiment $\tilde{\mathcal{a}}$ with $R^\star(\tilde{\mathcal{a}}) \cap \bs{R}_\theta^* = \emptyset$ has zero potency, where $R^\star(\tilde{\mathcal{a}})$ denotes the response-type variables relevant for $\tilde{\mathcal{a}}$.

\begin{definition}[Compatible Response-Types]
\label{def:compatible-restypes}
Let $\tilde{\mathcal{a}}$ be a joint statement of counterfactual worlds $\frak{C}$, each $\frak{c} \in \frak{C}$ specifying interventions $\bs{X}^{(\frak{c})}$ and outcomes $\bs{Y}^{(\frak{c})}$. The compatible configurations are:
$$
h(\tilde{\mathcal{a}}):= \big\{ \bs{r} \in supp(\bs{R}) : \forall \frak{c} \in \frak{C},\; \forall\, V \in \bs{Y}^{(\frak{c})},\; F^{(\frak{c})}_V(\bs{r}) = v \big\},
$$
where $F^{(\mathfrak{c})}_V(\bs{r})$ recursively evaluates the value of $V$ in world $\mathfrak{c}$, using the intervention value if $V \in \bs{X}^{(\mathfrak{c})}$ and otherwise applying the structural equation.
\end{definition}

\begin{example}
\label{ex:hPNS}
In Figure~\ref{fig:confDAG}, $X$ and $Y$ are binary. Let the tuple $\!(R_X, R_Y)$ be our response-type replacing $U$. $R_X\! \in \!\{x', x\}$ encodes the constant
value taken by $X$. $R_Y$ encodes the function $\{x', x\}\! \to\! \{y', y\}$ as the pair $\big(f_Y(x'),\, f_Y(x)\big)\!\in\!\{(y',y'),\,(y',y),\,(y,y'),\,(y,y)\}$. The $\text{PNS}\! =\! P(y_x, y'_{x'})$
requires $f_Y(x, r_Y)\! =\! y$ and $f_Y(x', r_Y)\! =\! y'$, which uniquely
selects $r_Y\! =\! (y', y)$ (the \emph{complier}); $r_X$ is unconstrained,
thus 
$
h(\text{PNS})\! =\! \big\{(x',\,(y',y)),\;(x,\,(y',y))\big\}.
$
\end{example}

\begin{definition}[Districts $\bs{D}$, District-Compatible Response-Types]
\label{def:district_compatible_restype}

The \emph{districts} $\bs{D}=\{\bs{D}_1,\dots,\bs{D}_c\}$ are the connected components of the bidirected graph obtained by replacing each $R \in \bs{R}$
with a bidirected edge between its children. $\bs{D}$ partitions $\bs{V} \cup \bs{R}$. 
For any realization $\bs{v} \in supp(\bs{V})$ and subset of districts $\bs{D}' \subseteq \bs{D}$ with observed variables $\bs{V}_{\bs{D}'}=\bs{D}'\cap\bs{V}$ and response-types $\bs{R}_{\bs{D}'}=\bs{D}'\cap\bs{R}$, let
 $$
 h_{\bs{D}'}(\bs{v}) := \big\{ \bs{r}_{\bs{D}'} \in supp(\bs{R}_{\bs{D}'}) : \forall\, V_i \in \bs{V}_{\bs{D}'},\; f_{V_i}\!\big(\bs{pa}_{V_i}(\bs{v}),\, \bs{r}_{\bs{D}'}\big) = v_i \big\},
 $$
where $\bs{pa}_{V_i}(\bs{v})$ denotes the values assigned to $V_i$'s parents by $\bs{v}$. For a single district $\bs{D}_k$, we write $h_k(\bs{v}):=h_{\bs{D}_k}(\bs{v})$.
\end{definition}

\begin{remark}
In Definition~\ref{def:district_compatible_restype}, $h_{\bs{D}'}(\bs{v})$ is a special case of $h(\tilde{\mathcal{a}})$ in Definition~\ref{def:compatible-restypes}, where (i) the argument is $\tilde{\mathcal{a}} = \bs{v}$, and (ii) the function is restricted to the response types in $\bs{D}'$, i.e., $\bs{R}_{\bs{D}'}$. Note that an observation can be viewed as a special type of experiment corresponding to an intervention on the empty set, $\emptyset$, yielding $\bs{V} = \bs{v}$ (in a single counterfactual world, i.e., $|\frak{C}| = 1$). 
Because every parent value is pinned by $\bs{v}$, the recursion $F^{(\mathfrak{c})}_{V'}$ collapses to a single  $f_{V'}$.
Moreover, it holds that $h(\bs{v}) = h_{\bs{D}}(\bs{v})$. However, in general, $h(\tilde{\mathcal{a}}) \neq h_{\bs{D}}(\tilde{\mathcal{a}})$ for an arbitrary experiment $\tilde{\mathcal{a}}$.
\end{remark}

Next, using the fact that $f_V$ depends only on the response-types of $V$'s district, we show that $h(\bs{v})$ can be decomposed into district-compatible response-types. 
See Appendix \ref{ex:cartesian} for an example. 

\begin{lemma}[Cartesian Product Decomposition]
\label{lemma:cartesian}
For any $\bs{v} \in supp(\bs{V})$: $h(\bs{v}) = \prod_{\bs{D}_k \in \bs{D}} h_k(\bs{v})$. 
\end{lemma}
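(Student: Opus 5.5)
The plan is to prove the two set inclusions directly, using the district structure of $\bs{R}$. The ingredients are the facts that $\bs{D}$ partitions $\bs{R}$ and that each $f_{V_i}$ depends only on the response-types in $V_i$'s own district. Together these give the identification
\[
supp(\bs{R}) = \prod_{\bs{D}_k \in \bs{D}} supp(\bs{R}_{\bs{D}_k}),
\]
so every $\bs{r}$ can be written uniquely as a tuple $(\bs{r}_{\bs{D}_1},\dots,\bs{r}_{\bs{D}_c})$. The lemma then asserts that $\bs{r} \in h(\bs{v})$ if and only if $\bs{r}_{\bs{D}_k} \in h_k(\bs{v})$ for every $k$.

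The first step is to unfold $h(\bs{v})$. We treat $\bs{v}$ as a single-world statement with no interventions, as in the preceding remark. Because every parent value is then fixed by $\bs{v}$, the recursive evaluation $F_V$ collapses. We therefore need to show that, for $\bs{r}$ consistent with $\bs{v}$, the condition $F_{V_i}(\bs{r}) = v_i$ for all $i$ is equivalent to $f_{V_i}(\bs{pa}_{V_i}(\bs{v}), \bs{r}) = v_i$ for all $i$.

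We prove this equivalence by induction along a topological order of the directed part of $\mathcal{G}$, which exists because $\mathcal{G}$ is acyclic.

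\textbf{Forward direction.} Suppose all recursive values equal $\bs{v}$. Then each parent of $V_i$ evaluates to its entry in $\bs{v}$, so $f_{V_i}(\bs{pa}_{V_i}(\bs{v}),\bs{r}) = F_{V_i}(\bs{r}) = v_i$.

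\textbf{Converse direction.} Suppose the one-step equations hold for all $i$. Assume inductively that every predecessor of $V_i$ in the topological order evaluates to its entry in $\bs{v}$. Then $F_{V_i}(\bs{r}) = f_{V_i}(\bs{pa}_{V_i}(\bs{v}),\bs{r}) = v_i$, which completes the induction.

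The second step is to localise each equation to its district. Let $V_i \in \bs{V}_{\bs{D}_k}$. Then $f_{V_i}$ depends on $\bs{r}$ only through $\bs{r}_{\bs{D}_k}$. This holds because every response-type $R$ that is a parent of $V_i$ is, by the definition of districts (Definition~\ref{def:district_compatible_restype}), in the same bidirected component as $V_i$. Hence $f_{V_i}(\bs{pa}_{V_i}(\bs{v}),\bs{r}) = f_{V_i}(\bs{pa}_{V_i}(\bs{v}),\bs{r}_{\bs{D}_k})$.

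Grouping the equations by district then gives
\[
\bs{r} \in h(\bs{v}) \iff \forall k:\ \forall V_i \in \bs{V}_{\bs{D}_k}:\ f_{V_i}(\bs{pa}_{V_i}(\bs{v}),\bs{r}_{\bs{D}_k}) = v_i \iff \forall k:\ \bs{r}_{\bs{D}_k} \in h_k(\bs{v}).
\]
This is exactly membership in the Cartesian product. Both directions hold because the constraints for different districts involve disjoint blocks of coordinates, and every observed variable lies in exactly one district.

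The main obstacle is making the district localisation rigorous. It requires checking that each response-type is assigned to exactly one district, including a response-type with a single child or a district consisting of an isolated variable. It also requires checking that $f_{V_i}$ takes no response-type argument from outside its district. I would settle this first, directly from the construction in Appendix~\ref{app:response-type-def}: each $R_k$ replaces a $U_k$ whose children are joined by bidirected edges, so $R_k$ lies in the same component as all of its children. The remaining steps are bookkeeping.
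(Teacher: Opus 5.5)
Your proposal is correct and follows essentially the same route as the paper: collapse the recursive evaluation $F_V$ to one-step structural equations with parent values pinned by $\bs{v}$, use that each $f_{V_i}$ depends only on the response-types of its own district, and factor the resulting conjunction over disjoint blocks of coordinates into a Cartesian product (the paper cites its helper lemma on factorizing Cartesian products for this last step). Your topological-order induction for collapsing the recursion, and your check that each $R_k$ belongs to the same district as its children, make explicit two points that the paper only asserts.
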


\begin{definition}[C-Factor {\citep{tianPearlQfactor}}]
For district $\bs{D}_k$ and a topological order $V_1 \prec \cdots \prec V_n$, we define 
$
Q_{k}(\bs{v}):=Q_{\bs{D}_k}(\bs{v}) := \prod_{\{i: V_i \in \bs{D}_k\cap \bs{V}\}} P(v_i \mid \bs{v}^{(i-1)})
$
, where $\bs{V}^{(i-1)}:=\prod_{j<i}V_j$ and $\bs{V}^{(0)}=\emptyset$. The joint distribution factorizes as $P(\bs{v}) = \prod_{k =1}^c Q_k(\bs{v})$.

\end{definition}

The next lemma connects c-factors to response-type sums, providing an algebraic bridge needed to eliminate the irrelevant districts.

\begin{lemma}
\label{lemma:district_cfactor}
For any $\bs{D}' \subseteq \bs{D}$ and any $\bs{v} \in supp(\bs{V})$, we can decompose
$
\prod_{\bs{D}_k \in \bs{D}'} Q_{k}(\bs{v}) = \sum_{\bs{r}_{\bs{D}'} \in h_{\bs{D}'}(\bs{v})} P(\bs{r}_{\bs{D}'}).
$
\end{lemma}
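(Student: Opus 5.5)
The approach is to prove the identity for a single district first and then pass to products using independence of response-types across districts. Throughout we use that the response-type variables in different districts are mutually independent. Each $R_k$ replaces a latent $U_k$, latents in different districts are independent, and so $P(\bs{r}) = \prod_{\bs{D}_k\in\bs{D}} P(\bs{r}_{\bs{D}_k})$. Consequently $\sum_{\bs{r}_{\bs{D}'} \in h_{\bs{D}'}(\bs{v})} P(\bs{r}_{\bs{D}'})$ factorizes as $\prod_{\bs{D}_k\in\bs{D}'}\sum_{\bs{r}_k\in h_k(\bs{v})}P(\bs{r}_k)$. This holds because $h_{\bs{D}'}(\bs{v}) = \prod_{\bs{D}_k\in\bs{D}'} h_k(\bs{v})$, by the same argument as Lemma~\ref{lemma:cartesian}: the condition $f_{V_i}(\bs{pa}_{V_i}(\bs{v}),\bs{r})=v_i$ only involves the response-types of $V_i$'s own district. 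It therefore suffices to show, for each single district, that $Q_k(\bs{v}) = \sum_{\bs{r}_k\in h_k(\bs{v})}P(\bs{r}_k)=:S_k(\bs{v})$.

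For the single-district claim, we first note that by Lemma~\ref{lemma:cartesian} and independence, $P(\bs{v}) = \sum_{\bs{r}\in h(\bs{v})}P(\bs{r}) = \prod_{k} S_k(\bs{v})$. So the $S_k$ give a district factorization of $P(\bs{v})$, just as the $Q_k$ do. Factorizations are not unique, however, so we will match the factors one at a time using the topological order.

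Fix the order $V_1\prec\cdots\prec V_n$. We then prove by induction on $i$ that the marginal $P(\bs{v}^{(i)})$ equals $\prod_k S_k^{(i)}(\bs{v})$. Here $S_k^{(i)}$ is the analogous sum over $\bs{r}_k$ in which only the equations of $V_j\in\bs{D}_k$ with $j\le i$ are imposed. This holds because the event $\bs{V}^{(i)}=\bs{v}^{(i)}$ is determined by those structural equations, whose parents all lie in $\bs{v}^{(i-1)}$. The identity then follows from the same Cartesian-product and independence argument applied to the subgraph on $V_1,\dots,V_i$.

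Dividing consecutive marginals gives
\[
P(v_i\mid\bs{v}^{(i-1)}) = \frac{S_{k(i)}^{(i)}(\bs{v})}{S_{k(i)}^{(i-1)}(\bs{v})},
\]
where $k(i)$ is the district of $V_i$; all other districts' factors cancel because they are unchanged between steps $i-1$ and $i$. Taking the product over $V_i\in\bs{D}_k$, the ratios telescope within district $k$. The telescoping runs from $S_k^{(0)}=1$ to $S_k^{(n)}=S_k(\bs{v})$, which gives $Q_k(\bs{v})=S_k(\bs{v})$. Zero-probability prefixes need a convention: when $P(\bs{v}^{(i-1)})=0$, both sides vanish or the conditionals are undefined, and we adopt the standard convention that makes both sides $0$.

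The main obstacle is justifying that the prefix marginal $P(\bs{v}^{(i)})$ is exactly the response-type sum with only the first $i$ equations imposed. This requires that the response-type components for later variables can be summed out freely, i.e. that $\bs{R}_k$ decomposes so that constraints on later variables marginalize to $1$. I would handle this by noting that summing $P(\bs{r}_k)$ over the set cut out by the first $i$ equations is the probability of an event under $P(\bs{R}_k)$. That event coincides with $\{\bs{V}^{(i)}=\bs{v}^{(i)}\}$ restricted to district $k$, so no product structure within $\bs{R}_k$ is needed, only independence across districts.
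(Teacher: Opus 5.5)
Your proof is correct, but for the single-district identity it takes a genuinely different route from the paper.

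\textbf{The paper's route.} The paper never works from its own conditional-product definition of $Q_k$. It cites Tian and Pearl for the latent-sum form $Q_k(\bs{v})=\sum_{\bs{u}_k}\prod_{V_i\in\bs{D}_k}P(v_i\mid\bs{pa}_{\bs{v}}(V_i),\bs{u}_k)\,P(\bs{u}_k)$. It then replaces $\bs{U}_k$ by $\bs{R}_k$, so each kernel becomes the indicator of $f_{V_i}(\bs{pa}_{\bs{v}}(V_i),\bs{r}_k)=v_i$, and reads off $\sum_{\bs{r}_k\in h_k(\bs{v})}P(\bs{r}_k)$ directly. The multi-district step is the same product-of-sums and Cartesian-product argument you give.

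\textbf{Your route.} You start from the product of topological conditionals and prove the prefix factorization $P(\bs{v}^{(i)})=\prod_k S_k^{(i)}(\bs{v})$. This step is correct: under a topological order, the event $\bs{V}^{(i)}=\bs{v}^{(i)}$ is cut out by the first $i$ structural equations with parents pinned by $\bs{v}$, and that set is a product over districts. You then telescope within each district. In effect you re-derive, inside the response-type model, the Tian--Pearl result that the latent-sum form of a c-factor equals the product of topological conditionals. That is exactly the equivalence the paper imports when it applies Eq.~(26) to its Definition.

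\textbf{Comparison.} The paper's route is a one-liner given that citation. Yours is self-contained, shows that the choice of topological order is irrelevant, and pins down exactly when the conditionals are defined.

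\textbf{One small correction.} Your zero-probability convention is misstated. If a prefix has mass zero because of a \emph{different} district, $S_k(\bs{v})$ need not vanish. For example, two isolated nodes with $P(v_1)=0<P(v_2)$ give $Q_2(\bs{v})=P(v_2\mid v_1)$ undefined but $S_2(\bs{v})=P(v_2)>0$, so no convention makes ``both sides $0$''. You do not need one, though. Whenever every conditional in $Q_k(\bs{v})$ is defined, all prefix masses up to the last member of $\bs{D}_k$ are positive. Hence every $S_{k'}^{(i-1)}$ in your denominators is positive, and the cancellation and telescoping hold as written. The paper likewise assumes positivity: it divides by $\prod_l Q_l(\bs{v})>0$ in the next lemma.
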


\begin{definition}[Relevant Response-Types, $\bs{R}_\theta$, District Hull, $\bs{R}_\theta^*$]
$R^\star(\tilde{\mathcal{a}})$ is the set of response-types with at least one directed path to an outcome not passing through any intervened variables in any counterfactual world $\frak{c}\in \frak{C}$ of $\tilde{\mathcal{a}}$. We write $\bs{R}_\theta = R^\star(\theta)$ to denote the relevant response types of the query $\theta$. Let $\bs{D}^*_\theta$ denote the set of districts with non-empty overlap with $\bs{R}_\theta$. The \emph{district hull} is given by $D^*(\bs{R}_\theta):=\cup_{\bs{D}_k\in \bs{D}^*_\theta}\bs{D}_k\subseteq \bs{V}\cup \bs{R}$ and $\bs{R}_\theta^* = D^*(\bs{R}_\theta) \cap \bs{R}$.
\end{definition}
It has been shown by \cite[Proposition~3]{duarte2023automated} that the relevant response-types $R^\star(\tilde{\mathcal{a}})$ are sufficient to express the causal quantity $\tilde{\mathcal{a}}$ in the polynomial program. We can therefore formulate our objective function only in terms of $\bs{R}_\theta \subseteq \bs{R}_\theta^*$.

\begin{example}[Query-relevant response-types in Figure~\ref{fig:complexDAG}]
\label{ex:districts}
Let $\theta = P(Y \mid do(M))$ in the ADMG of Figure~\ref{fig:complexDAG}. To form $\bs{R}_\theta$, we need to find $R \in \bs{R}$ such that there is a directed path from $R$ to $Y$ not passing through $M$. $R_1\not\in \bs{R}_\theta$ since all paths $R_1 \to \dots \to Y$ pass through $M$. Similarly, $R_3$ and $R_4$ are not in $\bs{R}_\theta$. In contrast, $R_2 \to Y$ does not pass through $M$. Hence $\bs{R}_\theta = \{R_2\}$. The districts of Figure~\ref{fig:complexDAG} are $\bs{D}_1 = \{A, B, C, R_3, R_4\}$, $\bs{D}_2 = \{X, M, Y, R_1, R_2\}$, $\bs{D}_3 = \{D\}$. Since $R_2 \in \bs{D}_2$, then $\bs{D}^*_\theta = \{\bs{D}_2\}$. Consequently, we have $D^*(\bs{R}_\theta)=\bs{D}_2$ and $\bs{R}_\theta^* = \{R_1, R_2\}$.
\end{example}

Using Lemmas~\ref{lemma:cartesian} and~\ref{lemma:district_cfactor}, we can show that the observational constraints, which naively involve all of $\bs{R}$, can be reformulated to involve only $\bs{R}_\theta^*$. Specifically, the contribution of response-types outside the district hull factors out and cancels, leaving a set of constraints purely described by $\bs{R}_\theta^*$.

\begin{lemma}
\label{th-all_constr_on_Rtheta_in_Dtheta_c-factor}
For the purpose of optimizing $\mathcal{T}$, the observational constraints $P(\bs{v}) = \sum_{\bs{r} \in h(\bs{v})} P(\bs{r})$ can equivalently be replaced by
$
\prod_{\bs{D}_k \in \bs{D}^*_\theta}Q_k(\bs{v}) = \sum_{\bs{r}_\theta^* \in h_{\bs{D}^*_\theta}(\bs{v})} P(\bs{r}_\theta^*),
$
which involves only variables in $\bs{R}_\theta^*$. The left-hand side is computable from $P(\bs{V})$.
\end{lemma}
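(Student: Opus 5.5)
The plan is to show that the original program $\mathcal{P}_\emptyset$ and the reduced program have the same optimal values. The reduced program has decision variables $P(\bs{r}_\theta^*)$, objective $\mathcal{T}$, and the c-factor constraints from the statement. Write $\bs{D}^c := \bs{D}\setminus\bs{D}^*_\theta$, and let $\bs{R}^c := \bs{R}\setminus\bs{R}_\theta^*$ be the response-types of these districts. The response-type model factorizes across districts, since distinct districts share no latent variables; hence $P(\bs{r}) = P(\bs{r}_\theta^*)\,P(\bs{r}^c)$. By \cite[Proposition~3]{duarte2023automated}, $\mathcal{T}$ depends only on $\bs{R}_\theta\subseteq\bs{R}_\theta^*$, so it is a function of the marginal $P(\bs{r}_\theta^*)$ alone. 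It therefore suffices to show that the set of marginals $P(\bs{R}_\theta^*)$ arising from feasible points of $\mathcal{P}_\emptyset$ equals the feasible set of the reduced program.

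\textbf{Step 1 (decomposition).} Apply Lemma~\ref{lemma:cartesian} to write $h(\bs{v}) = h_{\bs{D}^*_\theta}(\bs{v})\times h_{\bs{D}^c}(\bs{v})$. Using the product form of $P(\bs{r})$, the observational constraint becomes
\[
P(\bs{v}) = \Big(\sum_{\bs{r}_\theta^*\in h_{\bs{D}^*_\theta}(\bs{v})} P(\bs{r}_\theta^*)\Big)\Big(\sum_{\bs{r}^c\in h_{\bs{D}^c}(\bs{v})} P(\bs{r}^c)\Big).
\]

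\textbf{Step 2 (necessity).} Take any feasible point of $\mathcal{P}_\emptyset$. Lemma~\ref{lemma:district_cfactor} holds for any model consistent with $P(\bs{V})$, so applying it with $\bs{D}'=\bs{D}^*_\theta$ shows that the marginal on $\bs{R}_\theta^*$ satisfies the reduced constraint. The left-hand side $\prod_{k\in\bs{D}^*_\theta}Q_k(\bs{v})$ is a product of conditionals of $P(\bs{V})$ along a fixed topological order, so it is computable from $P(\bs{V})$.

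\textbf{Step 3 (sufficiency).} Conversely, take any distribution $P(\bs{r}_\theta^*)$ satisfying the reduced constraints. We must extend it to a full feasible point. Our choice is a product extension $P(\bs{r}) = P(\bs{r}_\theta^*)\,P^\circ(\bs{r}^c)$, where $P^\circ$ is a distribution on $\bs{R}^c$ satisfying $\sum_{h_{\bs{D}^c}(\bs{v})} P^\circ(\bs{r}^c) = \prod_{k\in\bs{D}^c}Q_k(\bs{v})$ for every $\bs{v}$. Such a $P^\circ$ exists: the true data-generating model is compatible with $P(\bs{V})$, and applying Lemma~\ref{lemma:district_cfactor} to its marginal on $\bs{R}^c$ gives these equalities. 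With this choice, Step 1 together with the factorization $P(\bs{v})=\prod_k Q_k(\bs{v})$ recovers every observational constraint. The probability axioms hold because a product of distributions is a distribution. The objective value is unchanged because $\mathcal{T}$ depends only on $\bs{R}_\theta^*$. Steps 2 and 3 together show that the two programs have identical ranges of $\mathcal{T}$, hence identical optima.

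\textbf{Main obstacle.} The delicate point is Step 3, specifically whether the decoupling via $P^\circ$ is legitimate. Two things need care. First, the product form $P(\bs{r}) = P(\bs{r}_\theta^*)P(\bs{r}^c)$ must be justified as the correct model class: response-types in different districts come from disjoint sets of latent variables, so they are independent. Without this, Step 1 fails. Second, the c-factor identity must be checked for the complementary block $\bs{D}^c$. This relies on $f_V$ depending only on the response-types of $V$'s own district, so that $h_{\bs{D}^c}(\bs{v})$ never involves $\bs{R}_\theta^*$. A further subtlety is that Lemma~\ref{lemma:district_cfactor} is stated with a fixed topological order. I would note that the products $\prod_{k\in\bs{D}'}Q_k(\bs{v})$ do not depend on which valid order is chosen, which follows from the Tian--Pearl factorization.
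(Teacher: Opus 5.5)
Your argument is correct, and it proves more than the paper's proof does. The paper argues in one direction only. It starts from the full observational constraint and uses cross-district independence and Lemma~\ref{lemma:cartesian} to factor the right-hand side into a $\bs{D}^*_\theta$-sum times a complement-sum (your Step~1). It then substitutes $P(\bs{v})=\prod_k Q_k(\bs{v})$, applies Lemma~\ref{lemma:district_cfactor} to the \emph{complement} districts, and divides by $\prod_{\bs{D}_l\notin\bs{D}^*_\theta}Q_l(\bs{v})$, which it assumes positive without comment.

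Your Step~2 gets the same implication in one line by applying Lemma~\ref{lemma:district_cfactor} directly with $\bs{D}'=\bs{D}^*_\theta$. This needs neither the division nor the positivity assumption. Both routes rely on the same fact: that lemma holds for any feasible point of $\mathcal{P}_\emptyset$, viewed as a district-independent model generating $P(\bs{V})$.

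Your Step~3 supplies the converse, which the paper leaves implicit. On its own, the paper's derivation only shows that the $\bs{R}_\theta^*$-marginals of feasible points satisfy the reduced constraints. That makes the reduced program a relaxation, which could in principle give wider bounds. Your product extension $P(\bs{r}_\theta^*)\,P^\circ(\bs{r}^c)$ closes this gap. Here $P^\circ$ is the complement marginal of any model compatible with $P(\bs{V})$, equivalently of any feasible point of $\mathcal{P}_\emptyset$. The extension guarantees the two programs have the same range of $\mathcal{T}$, which is what ``equivalently replaced'' actually requires. Its only extra ingredient is feasibility of $\mathcal{P}_\emptyset$, which the paper also takes for granted.

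In exchange, the paper's route spells out the cancellation of the irrelevant districts as an explicit chain of equalities. Reading that chain backwards with a suitable complement distribution is essentially your Step~3.
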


The base program can be defined only over $\bs{R}_\theta^*$.
For instance, in the previous example, the base program $\mathcal{P}_\emptyset$ optimizes over $|supp(R_1)| \cdot |supp(R_2)|$ decision variables rather than $\prod_{i=1}^4 |supp(R_i)|$. See Appendix \ref{ex:cfactor_complexDAG}.

\begin{theorem}
\label{th-doesnt_touch_r_theta_implies_pot0}
$\forall \tilde{\mathcal{a}} \in \mathcal{A}$ with $R^\star(\tilde{\mathcal{a}}) \cap \bs{R}_\theta^* = \emptyset$, $pot_\theta(\tilde{\mathcal{a}}) = 0$.
\end{theorem}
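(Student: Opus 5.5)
By Lemma~\ref{lemma_pot0_implies_boundsEqual}, it suffices to exhibit one feasible outcome $p^\dagger_{\tilde{\mathcal{a}}} \in [l_{p_{\tilde{\mathcal{a}}}}, u_{p_{\tilde{\mathcal{a}}}}]$ under which both sharp bounds on $\theta$ stay at $U_\theta$ and $L_\theta$. The idea is that the constraint $f_{\tilde{\mathcal{a}}}(\bs{R}) = p$ acts only on response-types outside $\bs{R}_\theta^*$, while the objective and the reduced observational constraints act only on $\bs{R}_\theta^*$. Any optimizer of the base program should therefore extend to a solution satisfying the experimental constraint for a suitable $p$.

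The plan proceeds in four steps.

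\textbf{Step 1: reduce the constraints.} By \citet[Proposition~3]{duarte2023automated}, $\mathcal{T}$ depends only on the marginal $P(\bs{r}_\theta)$, hence only on $P(\bs{r}_\theta^*)$. Likewise $f_{\tilde{\mathcal{a}}}$ depends only on the marginal of $\bs{R}^\star(\tilde{\mathcal{a}}) \subseteq \bs{R}\setminus\bs{R}_\theta^*$. By Lemma~\ref{th-all_constr_on_Rtheta_in_Dtheta_c-factor}, the observational constraints split into two families:
\begin{itemize}
\item constraints on the hull, $\prod_{\bs{D}_k\in\bs{D}^*_\theta} Q_k(\bs{v}) = \sum_{h_{\bs{D}^*_\theta}(\bs{v})} P(\bs{r}_\theta^*)$;
\item constraints on the complement $\bar{\bs{D}}=\bs{D}\setminus\bs{D}^*_\theta$, $\prod_{\bs{D}_k\in\bar{\bs{D}}} Q_k(\bs{v}) = \sum_{h_{\bar{\bs{D}}}(\bs{v})} P(\bar{\bs{r}})$, which follow from Lemma~\ref{lemma:district_cfactor} applied to $\bar{\bs{D}}$.
\end{itemize}

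\textbf{Step 2: show product distributions are feasible.} Take any $P_1$ on $\bs{R}_\theta^*$ satisfying the hull constraints and any $P_2$ on $\bar{\bs{R}}=\bs{R}\setminus\bs{R}_\theta^*$ satisfying the complement constraints. I claim $P_1\otimes P_2$ satisfies every original constraint $P(\bs{v})=\sum_{h(\bs{v})}P(\bs{r})$. This follows from Lemma~\ref{lemma:cartesian}: $h(\bs{v})=h_{\bs{D}^*_\theta}(\bs{v})\times h_{\bar{\bs{D}}}(\bs{v})$, so the sum factorizes. Its two factors equal the two c-factor products, whose product is $P(\bs{v})=\prod_k Q_k(\bs{v})$.

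\textbf{Step 3: fix a witness outcome.} Pick any observationally feasible full distribution $P^0$ and let $P_2$ be its $\bar{\bs{R}}$-marginal. Because $P^0$ is feasible, $P_2$ satisfies the complement constraints. Define $p^\dagger := f_{\tilde{\mathcal{a}}}(P_2)$. This lies in $[l_{p_{\tilde{\mathcal{a}}}}, u_{p_{\tilde{\mathcal{a}}}}]$, since it is attained by the feasible point $P^0$.

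\textbf{Step 4: carry the optimizers over.} Let $P_1^{U}$ be the $\bs{R}_\theta^*$-marginal of a base-program maximizer. By Step 1, this marginal satisfies the hull constraints. Then $P_1^U\otimes P_2$ is feasible for $\mathcal{P}_{\tilde{\mathcal{a}}}$ with outcome $p^\dagger$, because $f_{\tilde{\mathcal{a}}}$ sees only $P_2$. Its objective value equals $U_\theta$, because $\mathcal{T}$ sees only $P_1^U$. So $U_\theta(\tilde{\mathcal{a}},p^\dagger)\ge U_\theta$, and the reverse inequality comes from Lemma~\ref{lemma-pot_greater_zero}. The same argument with a minimizer gives $L_\theta(\tilde{\mathcal{a}},p^\dagger)=L_\theta$.

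\textbf{Main obstacle.} The delicate point is the first step: justifying that $f_{\tilde{\mathcal{a}}}$ depends only on the marginal over $R^\star(\tilde{\mathcal{a}})$, and that this set is disjoint from all of $\bs{R}_\theta^*$, not merely from $\bs{R}_\theta$. The marginalization half rests on Proposition~3 of \citet{duarte2023automated}, and the disjointness is given by the hypothesis. One further detail needs care. The hull marginal of a maximizer may be correlated with $\bar{\bs{R}}$ in the original solution, yet still satisfy the hull constraints. To see this, sum the original constraint over its $\bar{\bs{R}}$-factor and divide by $\prod_{\bar{\bs{D}}} Q_k(\bs{v})$, which is exactly the cancellation argument of Lemma~\ref{th-all_constr_on_Rtheta_in_Dtheta_c-factor}. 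When this c-factor product is zero, the hull constraint should be checked separately.
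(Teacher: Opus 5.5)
Your proof is correct, and it takes a genuinely different route from the paper's.

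\textbf{Comparison of routes.} The paper argues in two lines. By Lemma~\ref{th-P_empty_optimizes_over_R_theta} the base program lives on $\bs{R}_\theta^*$, so no chain of co-occurring decision variables links $R^\star(\tilde{\mathcal{a}})$ to $\mathcal{T}$. A constraint-dropping result of \citet{duarte2023automated} then gives $\mathcal{P}_{\tilde{\mathcal{a}}}=\mathcal{P}_\emptyset$.

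You instead prove the separation from first principles. You use the ($\Leftarrow$) direction of Lemma~\ref{lemma_pot0_implies_boundsEqual} and splice the $\bs{R}_\theta^*$-marginal of an optimizer with the $(\bs{R}\setminus\bs{R}_\theta^*)$-marginal of an arbitrary feasible point.

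The paper's version buys brevity by outsourcing the key step. Yours makes explicit what the citation glosses over. After the reformulation of Lemma~\ref{th-all_constr_on_Rtheta_in_Dtheta_c-factor}, the complement constraints still share variables with $f_{\tilde{\mathcal{a}}}$ and must be kept. The argument works precisely because any feasible hull marginal is compatible with any feasible complement marginal. Moreover, since $P^0$ is arbitrary, your argument shows the bounds are unchanged under \emph{every} feasible outcome, not just one. That is the stronger property the paper invokes in Appendix~\ref{subsec:criteria-nhanes} to claim that pruning is criterion-independent.

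\textbf{Remarks on your closing paragraph.} The correlation worry does not arise. The program builds in independence of response-types across districts (indeed across disturbances), so every feasible point already factors as $P(\bs{r}_\theta^*)P(\bar{\bs{r}})$. What you do need, and have, is that $P_1^U\otimes P_2$ stays in this product class. This holds because both factors are marginals of product-form feasible points.

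The zero-c-factor case, which the paper sidesteps by dividing by a positive quantity, can be closed without positivity:
\begin{itemize}
\item Let $S_k^{(i)}(\bs{v})$ be the mass of district-$k$ response-types consistent with $\bs{v}$ on $V_1,\dots,V_i$. Every feasible point satisfies $P(\bs{v}^{(i)})=\prod_k S_k^{(i)}(\bs{v})$.
\item If $P(\bs{v})=0$, take $m$ minimal with $P(\bs{v}^{(m)})=0$. Then all $S_k^{(m-1)}(\bs{v})>0$.
\item Only the factor of the district $\bs{D}_j\ni V_m$ changes between $i=m-1$ and $i=m$. Hence $S_j(\bs{v})\le S_j^{(m)}(\bs{v})=0$ at \emph{every} feasible point.
\end{itemize}
Your spliced point takes the $\bs{D}_j$-factor from one of two feasible points, so it also reproduces $P(\bs{v})=0$.
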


Theorem~\ref{th-doesnt_touch_r_theta_implies_pot0} reduces zero-potency certification to checking whether an experiment's relevant response-types overlap with $\bs{R}_\theta^*$. The following corollary gives a purely graphical sufficient condition for this check.

\begin{corollary}[Interception]
\label{th-experiments_with_intercepted_paths_useless}
Let $\tilde{\mathcal{a}} \in \mathcal{A}$ have counterfactual worlds $\frak{C}$. If for every $\frak{c} \in \frak{C}$, every directed path from $\bs{R}_\theta^*$ to $\bs{Y}^{(\frak{c})}$ passes through $\bs{X}^{(\frak{c})}$, then $pot_\theta(\tilde{\mathcal{a}}) = 0$.
\end{corollary}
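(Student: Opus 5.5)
The plan is to reduce the corollary to Theorem~\ref{th-doesnt_touch_r_theta_implies_pot0}. That theorem gives $pot_\theta(\tilde{\mathcal{a}}) = 0$ once we know $R^\star(\tilde{\mathcal{a}}) \cap \bs{R}_\theta^* = \emptyset$, so it suffices to show that the graphical hypothesis implies this disjointness. The argument is by contraposition, directly from the definition of $R^\star$.

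Suppose some $R \in \bs{R}_\theta^* \cap R^\star(\tilde{\mathcal{a}})$ exists. By the definition of $R^\star(\tilde{\mathcal{a}})$, there is a world $\frak{c} \in \frak{C}$, an outcome $Y \in \bs{Y}^{(\frak{c})}$, and a directed path $R \to \cdots \to Y$ containing no node of $\bs{X}^{(\frak{c})}$. Since $R \in \bs{R}_\theta^*$, this is a directed path from $\bs{R}_\theta^*$ to $\bs{Y}^{(\frak{c})}$ that avoids $\bs{X}^{(\frak{c})}$. That contradicts the hypothesis that every such path, in every world, passes through $\bs{X}^{(\frak{c})}$. Hence $R^\star(\tilde{\mathcal{a}}) \cap \bs{R}_\theta^* = \emptyset$, and Theorem~\ref{th-doesnt_touch_r_theta_implies_pot0} yields $pot_\theta(\tilde{\mathcal{a}}) = 0$.

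The only delicate step is matching quantifiers between the two notions. The definition of $R^\star$ says "at least one directed path to an outcome not passing through any intervened variables in any counterfactual world," and this is ambiguous. I would read it per world, as the corollary's hypothesis does: the path is unblocked in some world $\frak{c}$, relative to that world's $\bs{X}^{(\frak{c})}$ and $\bs{Y}^{(\frak{c})}$.

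If the intended reading is instead that the path must avoid $\bigcup_{\frak{c}}\bs{X}^{(\frak{c})}$, the argument still goes through, in fact more easily. Under that reading, every path avoiding $\bs{X}^{(\frak{c})}$ for the relevant $\frak{c}$ is a weaker requirement. So the corollary's per-world hypothesis rules out relevant paths under either reading, since $R^\star$ under the union reading is contained in $R^\star$ under the per-world reading.

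Two edge cases need a remark. First, an outcome $Y \in \bs{Y}^{(\frak{c})}$ might itself lie in $\bs{X}^{(\frak{c})}$. Such a path trivially passes through $\bs{X}^{(\frak{c})}$, consistent with $F_V^{(\frak{c})}$ returning the intervention value, so no response-type is needed for it. Second, the path $R \to Y$ has length one when $Y$ is a child of $R$, and it is covered by the same argument.
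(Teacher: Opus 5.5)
Your proposal is correct and follows the paper's proof: both show $R^\star(\tilde{\mathcal{a}}) \cap \bs{R}_\theta^* = \emptyset$ directly from the definition of $R^\star$, since membership requires some world $\mathfrak{c}$ and a directed path to $\bs{Y}^{(\mathfrak{c})}$ avoiding $\bs{X}^{(\mathfrak{c})}$, which the hypothesis rules out. Both then invoke Theorem~\ref{th-doesnt_touch_r_theta_implies_pot0}; your remarks on the per-world versus union reading of $R^\star$ and on the edge cases are sound additions that the paper does not spell out.
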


\begin{example}[Zero potency]
\label{ex:zero-potency}
Consider experiment $P(D \mid do(C))$ in Example \ref{ex:districts}. The only directed path from $\bs{R}_\theta^*$ to $D$ is $R_1 \to C \to D$, which passes through $C$. Hence $pot_\theta(P(D \mid do(C))) = 0$. %
On the other hand, the experiment $P(D \mid do(B))$ has potentially non-zero potency, since the path $R_1 \to X \to C \to D$ does not pass through $B$.

\end{example}

\begin{wrapfigure}{r}{0.5\textwidth}
\vspace{-15pt}
\hrule\vspace{1pt}
\captionsetup{type=algorithm}
\caption{GetUselessExperiments}
\label{alg:get_useless_experiments}
\vspace{-4pt}
\hrule
\begin{algorithmic}
\Require Graph $\mathcal{G}$, Query $\theta$, Experiment set $\mathcal{A}$
\Ensure $\mathcal{A}^\dagger \subseteq \mathcal{A}$ with $\forall\, \tilde{\mathcal{a}} \in \mathcal{A}^\dagger: pot_\theta(\tilde{\mathcal{a}}) = 0$
\State $\bs{R}_\theta^* \gets D^*\!\big(R^\star(\theta)\big) \cap \bs{R}$; $\quad \mathcal{A}^\dagger \gets \emptyset$
\For{$\tilde{\mathcal{a}} \in \mathcal{A}$ with worlds $\frak{C}_{\tilde{\mathcal{a}}}$}
    \If{$|\frak{C}_{\tilde{\mathcal{a}}}| = 1$ \textbf{and} $\textsc{ID}(\mathcal{G}, \bs{Y}^{(\frak{c})}, \bs{X}^{(\frak{c})})$}
        \State $\mathcal{A}^\dagger \gets \mathcal{A}^\dagger \cup \{\tilde{\mathcal{a}}\}$; \textbf{continue}
    \EndIf
    \If{$\forall \frak{c}\! \in\! \frak{C}_{\tilde{\mathcal{a}}}\!\!:\!$ \textsc{Alg. \ref{alg:all_paths_intercepted}}$(\mathcal{G}, \bs{R}_\theta^*, \bs{Y}^{(\frak{c})}, \bs{X}^{(\frak{c})})$}
        \State $\mathcal{A}^\dagger \gets \mathcal{A}^\dagger \cup \{\tilde{\mathcal{a}}\}$
    \EndIf
\EndFor
\end{algorithmic}
\vspace{2pt}\hrule
\vspace{-23pt}
\end{wrapfigure}

Algorithm~\ref{alg:all_paths_intercepted} in Appendix~\ref{app:algorithms} determines whether a given single-world experiment is useless by verifying the condition of Corollary~\ref{th-experiments_with_intercepted_paths_useless}. 
If this algorithm returns \texttt{True}, the intervention is deemed useless; otherwise, the result is inconclusive. Its computational complexity is linear in the size of the graph. For a worked example on Figure~\ref{fig:complexDAG}, see Appendix~\ref{ex:pruning}.

\vspace{-.2cm}
\subsection{Identifiable Experiments}
\vspace{-.2cm}
When an intervention is identifiable, it cannot provide any additional information beyond what is already contained in the observational distribution. In other words, it is useless. 

\begin{theorem}
\label{id_implies_pot0}
$\forall\, \tilde{\mathcal{a}} \in \mathcal{A}$ where $P(\tilde{\mathcal{a}})$ is identifiable (\textsc{ID}) in $\mathcal{G}$, $pot_\theta(\tilde{\mathcal{a}}) = 0$.
\end{theorem}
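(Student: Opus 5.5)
We argue via Lemma~\ref{lemma_pot0_implies_boundsEqual}: it suffices to exhibit a single feasible outcome $p^\dagger_{\tilde{\mathcal{a}}}$ under which both sharp bounds are unchanged. The natural candidate is the value determined by the observational distribution. Since $P(\tilde{\mathcal{a}})$ is identifiable in $\mathcal{G}$, there is a functional $\phi$ with $P_{\mathcal{M}}(\tilde{\mathcal{a}}) = \phi(P_{\mathcal{M}}(\bs{V}))$ for every SCM $\mathcal{M}$ compatible with $\mathcal{G}$. We set $p^\dagger_{\tilde{\mathcal{a}}} := \phi(P(\bs{V}))$ for the given observational distribution.

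\textbf{Step 1 (translate ID to the response-type program).} By \citet[Proposition~1]{duarte2023automated}, every feasible point $\{P(\bs{r})\}$ of $\mathcal{P}_\emptyset$ corresponds to a canonical SCM over $\mathcal{G}$ that induces exactly $P(\bs{V})$. In that model, $P(\tilde{\mathcal{a}}) = f_{\tilde{\mathcal{a}}}(\bs{R}) = \sum_{\bs{r}\in h(\tilde{\mathcal{a}})} P(\bs{r})$. Identifiability then gives $f_{\tilde{\mathcal{a}}}(\bs{R}) = \phi(P(\bs{V})) = p^\dagger_{\tilde{\mathcal{a}}}$ at every feasible point. In other words, the constraint $f_{\tilde{\mathcal{a}}}(\bs{R}) = p^\dagger_{\tilde{\mathcal{a}}}$ is implied by $\mathcal{C}_\emptyset$, so the feasible sets of $\mathcal{P}_\emptyset$ and $\mathcal{P}_{\{\tilde{\mathcal{a}}\}}$ at this outcome coincide.

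\textbf{Step 2 (conclude).} Identical feasible sets and the same objective $\mathcal{T}$ give $U_\theta(\tilde{\mathcal{a}}, p^\dagger) = U_\theta$ and $L_\theta(\tilde{\mathcal{a}}, p^\dagger) = L_\theta$. We also need $p^\dagger \in [l_{p_{\tilde{\mathcal{a}}}}, u_{p_{\tilde{\mathcal{a}}}}]$. This holds because $f_{\tilde{\mathcal{a}}}$ is constant equal to $p^\dagger$ on the nonempty feasible set of $\mathcal{P}_\emptyset$, so in fact $l = u = p^\dagger$. Lemma~\ref{lemma_pot0_implies_boundsEqual} then yields $pot_\theta(\tilde{\mathcal{a}}) = 0$. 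Equivalently, the maximum in $\mathcal{W}_\theta(\tilde{\mathcal{a}})$ ranges over the single point $p^\dagger$, so $\mathcal{W}_\theta(\tilde{\mathcal{a}}) = \mathcal{W}_\theta$.

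\textbf{Main obstacle.} The delicate point is Step 1, namely that identifiability in the ID-algorithm sense transfers to the discrete response-type program. ID is defined over all SCMs inducing $\mathcal{G}$, so any identity it guarantees holds in particular for the canonical discrete SCMs parameterized by $P(\bs{r})$. We only need that each feasible $P(\bs{r})$ is a legitimate SCM with graph $\mathcal{G}$, or a submodel of it; that model may lack some edges, which ID still covers because identification holds for all models consistent with $\mathcal{G}$. Two further caveats should be stated explicitly:
\begin{itemize}
\item Positivity assumptions in the ID functional should not be an issue when $P(\bs{V})$ is exact. Otherwise we restrict to the positive case, as the ID algorithm itself does.
\item This argument covers single-world statements, matching the $|\frak{C}|=1$ check in Algorithm~\ref{alg:get_useless_experiments}.
\end{itemize}
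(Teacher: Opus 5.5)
Your proposal is correct and follows the paper's own argument. Identifiability forces $f_{\tilde{\mathcal{a}}}$ to take the single value $p^\dagger$ on the whole observational feasible set $\mathcal{F}_\emptyset$, so adding the experimental constraint changes neither the feasible set nor $U_\theta$, $L_\theta$, $\mathcal{W}_\theta(\tilde{\mathcal{a}})$; your extra checks (that the outcome range collapses to $l = u = p^\dagger$, that each feasible $P(\bs{r})$ is a legitimate model compatible with $\mathcal{G}$, and the positivity caveat) only spell out what the paper's short proof leaves implicit.
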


The \textsc{ID} algorithm in \citet{shpitser2006identification} can be applied to verify identifiability for single-world interventions $P(\bs{w} \mid do(\bs{z}))$. Note that cross-world counterfactuals are not ID by construction, so \textsc{ID} is applied only to interventions. %
Furthermore, the results of Theorems \ref{th-doesnt_touch_r_theta_implies_pot0} and \ref{id_implies_pot0} can be applied independently, in any order, to filter out useless experiments, and Algorithm~\ref{alg:get_useless_experiments} exploits this fact. 

\begin{corollary}[ID-inertness]
\label{thm-id_inertness}
$\forall \tilde{\mathcal{a}}\in \mathcal{A}$ where $\tilde{\mathcal{a}}$ is \textsc{ID}: $pot_\theta(\{\tilde{\mathcal{a}}\} \cup \mathcal b)=pot_\theta(\mathcal b)$ for any $\mathcal b \subseteq \mathcal A$.
\end{corollary}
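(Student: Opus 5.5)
The plan is to show directly that $\mathcal{W}_\theta(\{\tilde{\mathcal{a}}\}\cup\mathcal b)=\mathcal{W}_\theta(\mathcal b)$. From this the claim $pot_\theta(\{\tilde{\mathcal{a}}\}\cup\mathcal b)=pot_\theta(\mathcal b)$ follows at once from Definition~\ref{def:potency}. If $\tilde{\mathcal{a}}\in\mathcal b$ there is nothing to prove, so assume $\tilde{\mathcal{a}}\notin\mathcal b$.

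The key step is to show that the constraint $f_{\tilde{\mathcal{a}}}(\bs{R})=p_{\tilde{\mathcal{a}}}$ is redundant. Since $P(\tilde{\mathcal{a}})$ is identifiable in $\mathcal{G}$, every SCM compatible with $\mathcal{G}$ that induces the given $P(\bs{V})$ assigns the same value $p^{\mathrm{id}}$ to $\tilde{\mathcal{a}}$. This value is the ID functional applied to $P(\bs{V})$. In response-type language, every feasible point of $\mathcal{C}_\emptyset$ satisfies $\sum_{\bs{r}\in h(\tilde{\mathcal{a}})}P(\bs{r})=p^{\mathrm{id}}$. I would justify this via the equivalence of the response-type model with the original SCM class \citep[Proposition~1]{duarte2023automated}: each feasible $P(\bs{r})$ defines a canonical SCM on $\mathcal{G}$ with observational law $P(\bs{V})$. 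Soundness of the ID formula then pins $f_{\tilde{\mathcal{a}}}$. Two consequences follow. First, the sharp observational bounds on $p_{\tilde{\mathcal{a}}}$ collapse, giving $l_{p_{\tilde{\mathcal{a}}}}=u_{p_{\tilde{\mathcal{a}}}}=p^{\mathrm{id}}$. Second, the constraint $f_{\tilde{\mathcal{a}}}=p^{\mathrm{id}}$ holds on all of the feasible set of $\mathcal{C}_\emptyset$, and hence on that of $\mathcal{C}_{\mathcal b}\subseteq\mathcal{C}_\emptyset$ for every $p_{\mathcal b}$.

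It follows that for every $p_{\mathcal b}$ the feasible sets of $\mathcal{P}_{\{\tilde{\mathcal{a}}\}\cup\mathcal b}$ with outcome $(p^{\mathrm{id}},p_{\mathcal b})$ and of $\mathcal{P}_{\mathcal b}$ with outcome $p_{\mathcal b}$ coincide. Therefore
\[
U_\theta(\{\tilde{\mathcal{a}}\}\cup\mathcal b,(p^{\mathrm{id}},p_{\mathcal b}))=U_\theta(\mathcal b,p_{\mathcal b}),\qquad L_\theta(\{\tilde{\mathcal{a}}\}\cup\mathcal b,(p^{\mathrm{id}},p_{\mathcal b}))=L_\theta(\mathcal b,p_{\mathcal b}).
\]
The maximization range in $\mathcal{W}_\theta(\{\tilde{\mathcal{a}}\}\cup\mathcal b)$ is $\{p^{\mathrm{id}}\}\times\prod_{\tilde{\mathcal{a}}'\in\mathcal b}[l_{p_{\tilde{\mathcal{a}}'}},u_{p_{\tilde{\mathcal{a}}'}}]$, so the two worst-case widths agree term by term.

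One point needs care. For some $p_{\mathcal b}$ in the box, the program may be infeasible; the box is a product of marginal bounds, not the joint feasible set. This happens identically for both programs, so whatever convention the paper uses for infeasible outcomes (exclusion, or width $-\infty$) applies equally to both sides. The main obstacle I expect is making the identifiability-implies-redundancy step rigorous in the response-type parametrization, i.e. that ID is a statement about all SCMs and that the canonical response-type models form such a class. I would settle it by citing Proposition~1 of \citet{duarte2023automated} together with soundness of the ID algorithm \citep{shpitser2006identification}. Theorem~\ref{id_implies_pot0} is then recovered as the special case $\mathcal b=\emptyset$.
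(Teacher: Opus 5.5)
Your proposal is correct and takes the same route as the paper's proof. Both show that identifiability pins $f_{\tilde{\mathcal{a}}}$ to a single value on the whole observational feasible set, so adding the constraint leaves $\mathcal{F}_\emptyset\cap\mathcal{F}_{\mathcal b}(p_{\mathcal b})$ unchanged for every $p_{\mathcal b}$ while the outcome range of $\tilde{\mathcal{a}}$ collapses to a point, hence equal worst-case widths. Your version is a bit more careful than the paper's (it justifies the pinning via Proposition~1 of \citet{duarte2023automated} plus soundness of ID, and notes that infeasible points of the outcome box affect both sides identically); the only slip is notational: it is the \emph{feasible set} of $\mathcal{C}_{\mathcal b}$ that lies inside that of $\mathcal{C}_\emptyset$, whereas as constraint sets $\mathcal{C}_{\mathcal b}\supseteq\mathcal{C}_\emptyset$.
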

In words, ID-pruned experiments can be removed from the subset search
entirely without affecting the optimum
(Appendix~\ref{subsec:solving-max-potency}).

%% file: Experiments.tex
\vspace{-.2cm}
\section{Evaluation}\label{sec:evaluation}
\vspace{-.2cm}
\textbf{Synthetic evaluation.}
We evaluate Algorithm~\ref{alg:get_useless_experiments} on two types of graphs: randomly generated graphs and several real-world networks selected from the \emph{bnlearn}\footnote{\url{https://www.bnlearn.com/bnrepository/}} repository. 
For each graph, we randomly choose a query as well as a set of $|\mathcal{A}|=200$ candidate experiments. We report the mean fraction of candidates pruned by the two core mechanisms of Algorithm~\ref{alg:get_useless_experiments}: the ID check, and the interception rule (Algorithm~\ref{alg:all_paths_intercepted}); throughout, $\pm$ denotes one standard deviation across simulations. For details on graph, query, and experiment generation as well as per-graph results we refer to Appendix~\ref{app:eval_setup}.

\textit{Erd\H{o}s--R\'enyi random graphs:}
We sample ADMGs with $|\bs{V}| \in \{10, 15, 20, 30\}$ observed nodes and vary the
confounding density $|\bs{U}|/|\bs{V}|$ from roughly $0.25$ to $1.0$. Across
$1{,}596$ simulations, Algorithm~\ref{alg:get_useless_experiments} prunes on
average $74.3\% \pm 19.4\%$ of candidate experiments ($33.0\% \pm 23.2\%$ via the ID check,
$41.3\% \pm 38.5\%$ via interception), reaching as much as $90.6\%$ in some simulations. Figure~\ref{fig:er_main} reveals a clear pattern: as $|\bs{U}|/|\bs{V}|$ grows,
interception weakens (from $70.9\% \pm 31.0\%$ at the sparsest setting to
$16.5\% \pm 30.0\%$ at the densest). The fraction pruned by the ID check, in turn, increases with confounding density, since the ID algorithm is only executed on experiments that were not already pruned by interception.

\begin{figure}[tb]
    \centering
    \includegraphics[width=.9\linewidth]{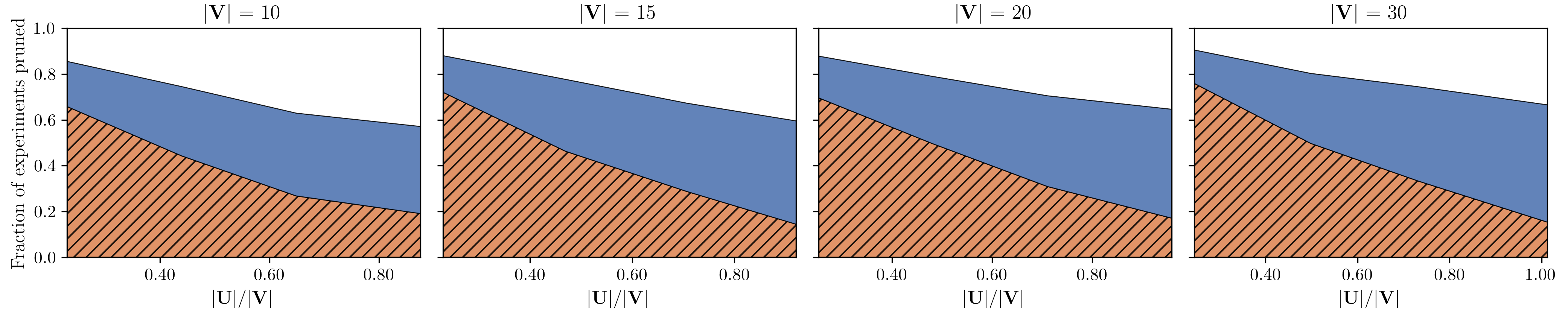}
    \caption{Erd\H{o}s--R\'enyi graphs. For different sizes of $\bs{V}$, we plot the fraction of pruned experiments against the density $\frac{|\bs{U}|}{|\bs{V}|}$. The hatched orange area represents experiments pruned by the interception mechanism (Algorithm~\ref{alg:all_paths_intercepted}); the blue area represents those pruned by the ID check. In total, $1{,}596$ simulations were performed.}
    \label{fig:er_main}
    \vspace{-.2cm}
\end{figure}

\textit{{bnlearn}:}
We evaluate on 11 benchmark networks ranging from $|\bs{V}|{=}8$
(\texttt{ASIA}) to $|\bs{V}|{=}76$ (\texttt{WIN95PTS}). Across $1{,}100$ simulations, Algorithm~\ref{alg:get_useless_experiments} prunes on average $63.4\% \pm 12.4\%$ of candidate experiments ($47.1\% \pm 14.5\%$ via the ID check, $16.2\% \pm 17.5\%$ via interception). Figure~\ref{fig:bnlearn_main} displays the pruning rates per graph. The contribution of the interception rule varies dramatically across graphs, ranging from $2.3\% \pm 4.3\%$ on \texttt{MILDEW} to $29.2\% \pm 25.5\%$ on \texttt{WIN95PTS}.

\begin{figure}[h]
    \centering
    \includegraphics[width=.8\linewidth]{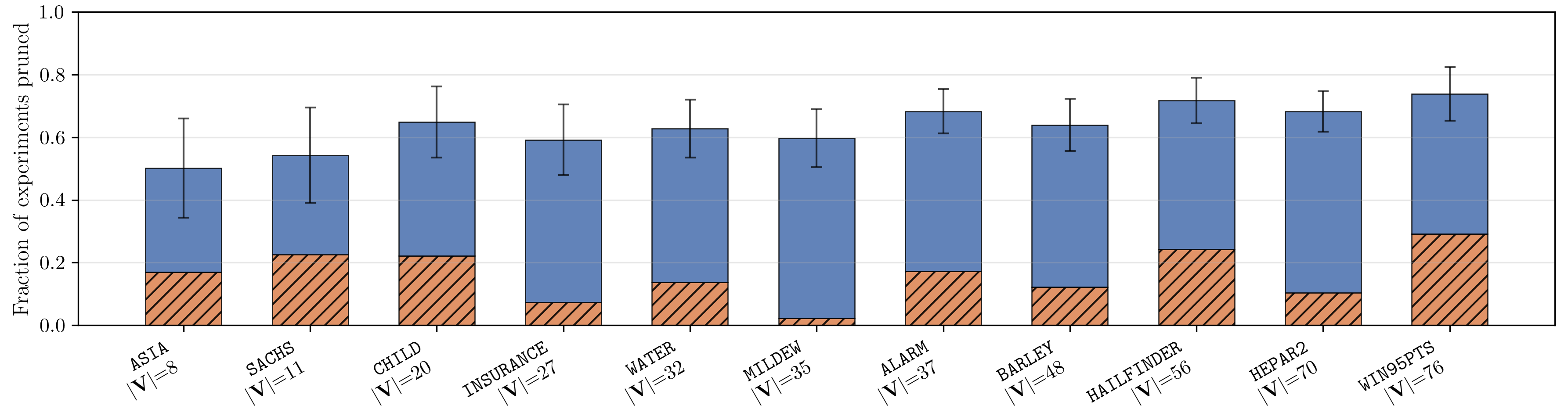}
    \caption{\emph{bnlearn}. For each graph, we plot the mean fraction of pruned experiments. The hatched orange area represents experiments pruned by the interception mechanism (Algorithm~\ref{alg:all_paths_intercepted}); the blue area represents those pruned by the ID check. Error bars indicate $\pm 1$ standard deviation of the total pruning rate across 100 simulations per graph. In total, $1{,}100$ simulations were performed.}
    \label{fig:bnlearn_main}
   \vspace{-.1cm}
\end{figure}

The high variability in the interception rate (e.g. $\pm 25.5\%$ on \texttt{WIN95PTS}) reflects its strong dependence on the specific topology. Figure~\ref{fig:bnlearn_Rstartheta} in Appendix \ref{app:eval_bnlearn} shows the main structural driver of
interception pruning.
When the query is localized (small $|\bs{R}_\theta^*|/|\bs{V}|$), most
experiments fail to interact with $\bs{R}_\theta^*$ and are pruned immediately;
as the hull grows, this mechanism weakens monotonically. This matches the
Erd\H{o}s--R\'enyi findings: denser confounding produces larger districts,
which enlarge $\bs{R}_\theta^*$ and shift the pruning burden onto the ID
check. Together, the two criteria yield substantial pruning across singleton candidates without solving a single polynomial program.

\textit{Subset search space reduction:}
The pruning rates reported above determine the search space
reduction when selecting a single optimal experiment, a practically
important setting. For
the general subset search over $2^{|\mathcal{A}|}$ subsets,
Corollary~\ref{thm-id_inertness} reduces the exponent by
$|\mathcal{A}^\dagger_{\mathrm{ID}}|$ (Appendix~\ref{subsec:solving-max-potency}). Running the ID check on all
candidates independently (without prior interception filtering), it
certifies $61.4\% \pm 11.3\%$ on Erd\H{o}s--R\'enyi and
$60.0\% \pm 10.8\%$ on \emph{bnlearn}---for $|\mathcal{A}|=200$, the
exponent drops from $200$ to roughly $80$ on average.

\textbf{Physical Activity and Diabetes.} We demonstrate the full pipeline, pruning followed by potency computation, on observational data from NHANES
2017--2018.\footnote{\url{https://wwwn.cdc.gov/nchs/nhanes/}}
 
\textit{Graph and query:}
Five binary variables are drawn from the survey: balance problems
($A$), falls ($B$), health insurance ($Z$), physical activity ($X$),
and not diabetes ($Y$).\footnote{We invert $Y$ for better interpretability.} Three latent confounders could represent inner-ear
disorders ($U_3$, confounding $A \leftrightarrow B$), healthcare
access ($U_1$, confounding $Z \leftrightarrow X$), and genetic
predisposition ($U_2$, confounding $X \leftrightarrow Y$). The
assumed causal structure is shown in Figure \ref{fig:nhanes_dag} in Appendix \ref{sec:nhanes_details}, which also reports experiment types, computational details, and robustness analyses.
 The query of interest is $\theta = \text{ATE} =P(y_x)-P(y_{x'})$, the average treatment effect of physical activity on not getting diabetes. The observational data yields a bound width of $\mathcal{W}_\theta=1$ with $\theta \in [-0.48, 0.53]$.
 
\textit{Experiments:} In practice, an analyst need not commit the entire research budget at once; a natural workflow is to select a small optimal batch, execute it, and re-solve with the realized outcome incorporated. We consider the candidate experiments and combinations in Table~\ref{tab:nhanes_results}. Algorithm~\ref{alg:get_useless_experiments} certifies $\tilde{\mathcal{a}}_1$ and $\tilde{\mathcal{a}}_2$ as useless; we compute potency for the rest.

\begin{table}
\centering\small
\caption{Pruning and potency results. Left: cost-1 singleton experiments ($\tilde{\mathcal{a}}_1, \tilde{\mathcal{a}}_2$ pruned by ID and Interception, respectively). Right: cost-2 singletons and combinations.}
\label{tab:nhanes_results}
\begin{minipage}[t]{0.48\textwidth}
\centering
\begin{tabular}{cllcc}
  \toprule
   & Experiment & Pruned? & $pot_\theta$ & cost \\
  \midrule
  $\tilde{\mathcal{a}}_1$ & $P(X \mid do(B))$    & \textbf{ID}   & $0$    & $1$ \\
  $\tilde{\mathcal{a}}_2$ & $P(B \mid do(A))$    & \textbf{Int.} & $0$    & $1$ \\
  $\tilde{\mathcal{a}}_3$ & $P(X,Y \mid do(z))$  & --            & $0.10$ & $1$ \\
  $\tilde{\mathcal{a}}_4$ & $P(y_{x'},y'_{x})$   & --            & $0.48$ & $1$ \\
  \bottomrule
\end{tabular}
\end{minipage}%
\hfill
\begin{minipage}[t]{0.48\textwidth}
\centering
\begin{tabular}{clcc}
  \toprule
   & Experiment & $pot_\theta$ & cost \\
  \midrule
  $\tilde{\mathcal{a}}_5$ & $P(Y \mid do(x'))$                & $0.47$ & $2$ \\
  $\tilde{\mathcal{a}}_6$ & $P(Y \mid do(x))$                 & $0.53$ & $2$ \\
                  & $\{\tilde{\mathcal{a}}_4, \tilde{\mathcal{a}}_6\}$ & $0.53$ & $3$ \\
                  & $\{\tilde{\mathcal{a}}_4, \tilde{\mathcal{a}}_5\}$ & $0.85$ & $3$ \\

  \bottomrule
\end{tabular}
\end{minipage}
\vspace{-.3cm}
\end{table}

The solution to the max-potency problem depends on the budget $B$ committed per round.
For $B=1$, the optimal choice is $\{\tilde{\mathcal{a}}_4\}$. For $B=2$, we should carry out the positive intervention $\{\tilde{\mathcal{a}}_6\}$. For $B=3$, the most potent combination is $\{\tilde{\mathcal{a}}_4,\tilde{\mathcal{a}}_5\}$---even though the positive intervention is more potent by itself, in combination with $\tilde{\mathcal{a}}_4$, the negative intervention is superior.
Committing to $B=1$ and finding $P(y_{x'},y'_{x})=0$ (intuitively, physical activity cannot cause diabetes) would yield $\theta \in [0, 0.53]$, after which the analyst re-solves against the updated program.

%% file: appendix.tex
\clearpage
\appendix

\begin{center}
{\Large\textbf{Appendix}}
\end{center}

\section{Notation}
\label{app:notation}

Table~\ref{tab:notation} summarizes the notation used throughout the paper.
The Level column separates element-level from set-level symbols: a single
candidate experiment is written $\tilde{\mathcal{a}}$, a set of experiments
$\mathcal{a}$, and the candidate pool $\mathcal{A}$; for singletons we use the
shorthand of Definition~\ref{def:potency}.

\begin{table}[H]
\centering
\caption{Summary of notation.}
\label{tab:notation}
\footnotesize
\setlength{\tabcolsep}{4pt}
\begin{tabular}{@{}lll@{}}
\toprule
Symbol & Level & Meaning \\
\midrule
$X$; $x$ & --- & random variable; realized value ($x$/$x'$ for binary true/false) \\
$\bs{X}$, $\bs{x}$ & set & bold symbols denote sets of variables/values \\
$supp(X)$ & --- & support of $X$ \\
$\mathcal{M} = (\bs{V}, \bs{U}, \bs{F}, P(\bs{U}))$ & --- & structural causal model \\
$\mathcal{G} = (\bs{V}, E^d, E^b)$ & --- & ADMG: directed ($E^d$) and bidirected ($E^b$) edges \\
$\bs{pa}(V)$, $\bs{ch}(V)$, $\bs{anc}(V)$ & --- & parents, children, ancestors of $V$ \\
$do(\bs{X}=\bs{x})$; $P(\bs{Y}_{\bs{x}})$ & --- & intervention; post-intervention distribution \\
$\frak{C}$; $\frak{c}$ & --- & counterfactual worlds of a statement; a single world \\
$\bs{X}^{(\frak{c})}$, $\bs{Y}^{(\frak{c})}$ & set & intervened variables and outcomes of world $\frak{c}$ \\
$R_k$; $\bs{R}$; $\bs{r}$ & --- & response type of disturbance $U_k$; all response types; a configuration \\
$\theta$ & --- & target causal query \\
$h(\cdot)$ & --- & response-type configurations compatible with a statement \\
$R^\star(\cdot)$ & --- & relevant response types of a causal quantity \\
$\bs{R}_\theta$; $\bs{D}^*_\theta$; $\bs{R}^*_\theta$ & set & query-relevant response types; their districts; district hull \\
$Q_k$ & --- & c-factor of district $\bs{D}_k$ \\
\midrule
$\mathcal{A}$ & pool & set of all candidate experiments \\
$\tilde{\mathcal{a}} \in \mathcal{A}$ & element & a single candidate experiment \\
$\mathcal{a} \subseteq \mathcal{A}$ & set & a set of experiments \\
$\mathcal{A}^\dagger \subseteq \mathcal{A}$ & set & experiments certified useless by pruning \\
$p_{\tilde{\mathcal{a}}}$; $p_{\mathcal{a}}$ & element; set & outcome of one experiment; collection of outcomes \\
$[l_{p_{\tilde{\mathcal{a}}}}, u_{p_{\tilde{\mathcal{a}}}}]$ & element & feasible range of an outcome \\
$f_{\tilde{\mathcal{a}}}$ & element & constraint function of an experiment \\
$U_\theta$, $L_\theta$ & --- & observational upper/lower bound on $\theta$ \\
$U_\theta(\mathcal{a}, p_{\mathcal{a}})$, $L_\theta(\mathcal{a}, p_{\mathcal{a}})$ & set & bounds under experimental constraints \\
$\mathcal{W}_\theta(\mathcal{a})$; $\mathcal{W}_\theta$ & set & worst-case width; observational width \\
$pot_\theta(\mathcal{a})$ & set & epistemic potency (Definition~\ref{def:potency}) \\
$c(\cdot)$; $B$ & set & cost function on experiment sets; budget \\
$\mathcal{P}_\emptyset$; $\mathcal{P}_{\mathcal{a}}$ & --- & base and augmented polynomial programs \\
\bottomrule
\end{tabular}
\end{table}

\section{Definition of Response-Type Variables}\label{app:response-type-def}

Following \citet{duarte2023automated}, we define response-type variables per disturbance and assume the given ADMG $\mathcal{G}$ to be \emph{canonical} without loss of generality (see their paper for details on canonicalization).

\begin{definition}[Response-Type Variable]\label{def:response-type}
Let $\mathcal{G}$ be a canonical ADMG with discrete observed variables 
$\bs{V}$ and latent disturbances $\bs{U}$. For a disturbance $U_k \in 
\bs{U}$ with observed children $\bs{ch}(U_k) \subseteq \bs{V}$, declare 
two values $u_k, u_k' \in supp(U_k)$ equivalent, written $u_k \sim_k u_k'$, 
if for every $V \in \bs{ch}(U_k)$ the partially applied structural 
equation
\[
f_V^{(u_k)} : \prod_{P \in \bs{Pa}(V) \setminus \{U_k\}} supp(P) \;\to\; 
supp(V)
\]
is identical under $u_k$ and $u_k'$. The equivalence classes 
$supp(U_k)/\!\sim_k$ are the \emph{generalized principal strata} of 
$U_k$ \citep{frangakis2002principal, balke1997bounds}, and the 
\emph{response-type variable} $R_k$ associated with $U_k$ is the 
discrete categorical variable whose support is exactly these classes. 
The induced distribution $P(R_k)$ aggregates $P(U_k)$ over each class. 
We write $\bs{R} = \{R_k\}_{U_k \in \bs{U}}$ for the collection of all 
response-type variables.
\end{definition}

\begin{proposition}[\citealp{duarte2023automated}, Proposition 1]
Replacing each $U_k$ in the canonical SCM with its response-type 
variable $R_k$ yields an equivalent SCM with respect to the full data 
law (every factual, interventional, and counterfactual joint 
distribution is preserved).
\end{proposition}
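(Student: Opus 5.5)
The plan is to build the replacement SCM explicitly and show, by induction along a topological order, that every potential outcome is a function of $\bs{U}$ only through the class map $\pi_k: u_k \mapsto [u_k] \in supp(U_k)/\!\sim_k$. Equality of the full data laws then follows by pushing $P(\bs{U})$ forward along $\pi=(\pi_k)_k$.

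\textbf{Step 1: structural equations on classes.} For each $V\in\bs{V}$ with latent parents $U_{k_1},\dots,U_{k_m}$, I define
\[
g_V(\bs{pa}_o,[u_{k_1}],\dots,[u_{k_m}]) := f_V(\bs{pa}_o,u_{k_1},\dots,u_{k_m}),
\]
where $\bs{pa}_o$ denotes the observed parents. The only real issue is that this is well defined, and I expect it to be the main obstacle, because in a canonical ADMG a node may have several latent parents. Definition~\ref{def:response-type} treats the remaining latent parents as ordinary arguments of the partial map $f_V^{(u_k)}$. So if $u_{k_1}\sim u'_{k_1}$, then $f_V(\bs{pa}_o,u_{k_1},u_{k_2},\dots)=f_V(\bs{pa}_o,u'_{k_1},u_{k_2},\dots)$ for all values of the other arguments. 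I will then swap representatives one coordinate at a time: each swap leaves $f_V$ unchanged, so after $m$ swaps every choice of representatives gives the same value. I set $R_k:=\pi_k(U_k)$ and $P(R_k)=P(U_k)\circ\pi_k^{-1}$. Mutual independence of the disturbances is preserved, since the $R_k$ are coordinatewise functions of independent variables.

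\textbf{Step 2: induction on potential outcomes.} Fix any counterfactual world $\frak{c}$ with intervention $\bs{X}^{(\frak{c})}=\bs{x}$. The claim is that for every $V$ and every $\bs{u}$, $F_V^{(\frak{c})}(\bs{u})$ computed in the original model equals $G_V^{(\frak{c})}(\pi(\bs{u}))$ computed in the new model. I prove it by induction along a topological order.
\begin{itemize}
\item If $V\in\bs{X}^{(\frak{c})}$, both sides equal the intervention value.
\item Otherwise, the observed parents agree by the induction hypothesis, and Step 1 gives $f_V=g_V\circ\pi$ on the latent arguments.
\end{itemize}
The argument is uniform in $\frak{c}$, so it covers every world simultaneously for the same $\bs{u}$.

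\textbf{Step 3: equality of laws.} Take any finite conjunction of events across worlds. By Step 2 its preimage under the original model is $\pi^{-1}(S)$, where $S$ is the corresponding event in the new model. Therefore
\[
P_{\bs{U}}\big(\pi^{-1}(S)\big)=P_{\bs{R}}(S),
\]
which is exactly the definition of the pushforward. Factual, interventional, and counterfactual joints all arise this way, so the full data law is preserved. The supports $supp(R_k)$ are finite because there are only finitely many functions between finite sets. This also justifies treating $\bs{R}$ as discrete decision variables in the later polynomial programs.
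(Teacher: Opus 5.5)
The paper gives no proof of this proposition; it imports it from \citet{duarte2023automated}, so there is no in-paper argument to compare yours with. Your argument for the stated equivalence is correct and self-contained. The one-coordinate-at-a-time swap is what makes $g_V$ well defined when a node has several latent parents, and it relies on Definition~\ref{def:response-type} quantifying over the other latent parents' values. The topological induction runs uniformly over all worlds $\frak{c}$ for a fixed $\bs{u}$, which is what gives the cross-world joints. The pushforward of a product measure under a coordinatewise map then yields both independence of the $R_k$ and equality of every joint law.

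Your closing remark, which goes beyond what the proposition asserts, needs a correction. Finiteness of $supp(R_k)$ does not follow from ``finitely many functions between finite sets.'' As you stressed in Step~1, the domain of $f_V^{(u_k)}$ includes $supp(U_{k'})$ for every other latent parent $U_{k'}$ of $V$, and these supports need not be finite. For example, take the canonical graph with $U_1\to\{A,B\}$ and $U_2\to\{B,C\}$, let $U_1,U_2$ be uniform on $[0,1]$, and set $f_B(u_1,u_2)=\mathbf{1}[u_1<u_2]$. Any $u_1<u_1'$ are separated by some $u_2\in(u_1,u_1')$, so every class of $\sim_1$ is a singleton and $supp(R_1)$ is uncountable. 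Finiteness is immediate only if the $U_k$ already have finite supports, or if each observed node has at most one latent parent. In general it requires a separate finite-cardinality reduction, so treating $\bs{R}$ as finitely many discrete decision variables cannot rest on this proposition alone.
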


\section{A Primer on the Polynomial-Programming Framework}
\label{app:primer}

This appendix gives a jargon-free walkthrough of the bounding framework of
\citet{duarte2023automated} on which our method builds, using the running
example of Section~\ref{sec:bounds}; Appendix~\ref{app:response-type-def}
gives the formal construction.

\paragraph{The idea.}
When a causal quantity cannot be determined exactly from the available data, we
instead ask: what is the smallest and largest value it could take while
remaining consistent with the data? Consider the PNS in
Figure~\ref{fig:confDAG}, where the binary variables $X$ and $Y$ share an
unobserved common cause $U$. Because $X$ and $Y$ are binary, $U$ can influence
them in only finitely many deterministic ways. It can fix $X$ to either $0$ or
$1$, and it can select one of four possible relationships between $X$ and $Y$:
$Y = 1$ regardless of $X$; $Y = 1$ exactly when $X = 1$ (the ``complier''
type); $Y = 1$ exactly when $X = 0$; and $Y = 0$ regardless of $X$. Thus $U$
can be replaced, without loss of generality, by a discrete response type with
$2 \times 4 = 8$ possible values ($2$ possible behaviors for $X$, $4$ for
$Y$). The population is then described by eight unknown probabilities, one for
each type, and every observational, interventional, or counterfactual quantity
can be written in terms of these probabilities.

\paragraph{Why this gives bounds.}
The PNS is the proportion of the population belonging to the complier type,
summed over the two possible response types for $X$. The observed distribution
$P(X, Y)$ restricts where this complier probability can be placed: a complier
has $Y = X$, so compliers may contribute to the observed cells
$(X,Y) = (1,1)$ and $(0,0)$, but not to $(1,0)$ or $(0,1)$. Therefore the total
complier probability cannot exceed $P(X{=}1, Y{=}1) + P(X{=}0, Y{=}0)$, which
is exactly the observational upper bound in Example~\ref{ex:obs_bounds}.
Bounding is therefore an optimization problem: maximize or minimize the causal
quantity of interest subject to the constraints imposed by the observed data.

\paragraph{What \citet{duarte2023automated} automate.}
Their method mechanizes these steps for general discrete causal graphs and
queries: it constructs the possible response types from the graph, expresses
the target query and the observed probabilities in terms of the response-type
probabilities, and sends the resulting minimization or maximization problem to
a global optimization solver such as Couenne. When the graph contains several
independent unobserved variables, the probability of a joint response-type
configuration is a \emph{product} of their individual probabilities; the
resulting optimization problem therefore contains polynomial, rather than only
linear, expressions and requires a global solver.

Our work builds on this framework: an experiment contributes additional
constraints of the same general form, as described in
Section~\ref{sec:formulating}, and potency measures how much these new
constraints narrow the resulting bounds.

\section{Alternative Criteria for $\mathcal{W}_\theta$}
\label{sec:alternative_W_cal}

In Section~\ref{sec:bounds}, we defined $\mathcal{W}_\theta(\mathcal{a})$ using the worst-case (maximum) over feasible experimental outcomes. Here, we briefly discuss two alternative criteria and compare them on our running example.

\paragraph{Option 1: Expected width.}
One could assume a distribution over the unknown experimental outcome, e.g. $p_{\tilde{\mathcal{a}}} \sim \mathrm{Uni}(l_{p_{\tilde{\mathcal{a}}}},u_{p_{\tilde{\mathcal{a}}}})$, and define
\[
\mathcal{W}_\theta(\tilde{\mathcal{a}}) = \mathbb{E}[U_\theta(\tilde{\mathcal{a}}, {p_{\tilde{\mathcal{a}}}}) -L_\theta(\tilde{\mathcal{a}}, {p_{\tilde{\mathcal{a}}}})]
= \frac{1}{u_{p_{\tilde{\mathcal{a}}}}-l_{p_{\tilde{\mathcal{a}}}}}\int_{l_{p_{\tilde{\mathcal{a}}}}}^{u_{p_{\tilde{\mathcal{a}}}}} U_\theta(\tilde{\mathcal{a}}, {p_{\tilde{\mathcal{a}}}}) -L_\theta(\tilde{\mathcal{a}}, {p_{\tilde{\mathcal{a}}}})\, dp
\]
for a singleton $\tilde{\mathcal{a}}\in \mathcal{A}$ as the expected width.

\paragraph{Option 2: Best-case width.}
One may change the maximum to a minimum in the original definition and define
\[
\mathcal{W}_\theta(\mathcal{a}) = \underset{\substack{p_{\mathcal{a}}\in\prod_{\tilde{\mathcal{a}} \in \mathcal{a}}[l_{p_{\tilde{\mathcal{a}}}}, u_{p_{\tilde{\mathcal{a}}}}]}}{\min} \bigl(U_\theta(\mathcal{a}, p_{\mathcal{a}}) - L_\theta(\mathcal{a}, p_{\mathcal{a}})\bigr)
\]
as the smallest possible bound width an experiment could yield.

\paragraph{Comparison on the running example.}
\begin{table}[H]
\centering
\caption{Comparison of alternative width criteria on the running example.}
\label{tab:alt_width}
\begin{tabular}{lcccc}
\toprule
Criterion 
& $\mathcal{W}_\theta$ 
& $\mathcal{W}_\theta(\tilde{\mathcal{a}}_1)$ 
& $\mathcal{W}_\theta(\tilde{\mathcal{a}}_2)$ 
& $\mathcal{W}_\theta(\mathcal{a}_3)$ \\
\midrule
Expected width 
& $0.6$ & $0.371$ & $0.467$ & $0.238$ \\
Best-case width 
& $0.6$ & $0.2$ & $0.4$ & $0.0$ \\
Worst-case width 
& $0.6$ & $0.5$ & $0.5$ & $0.4$ \\
\bottomrule
\end{tabular}
\end{table}

\noindent Table~\ref{tab:alt_width} displays alternative definitions of $\mathcal{W}_\theta$ under the setting of Example~\ref{ex:obs_bounds}. Notably, at $(p_{\tilde{\mathcal{a}}_1}, p_{\tilde{\mathcal{a}}_2}) = (0.2, 0.6)$, the bounds collapse to $\text{PNS} = 0$; the PNS is not identifiable from observational data alone, but becomes point-identified for this specific experiment realization.

Under all three criteria, $\mathcal{a}_3$ yields the smallest bound width; among singleton experiments, both $\tilde{\mathcal{a}}_1$ and $\tilde{\mathcal{a}}_2$ are equivalent under the worst-case criterion, while $\tilde{\mathcal{a}}_1$ is strictly preferred under the alternatives. Only the worst-case criterion provides a \emph{guaranteed} reduction in bound width regardless of the experimental outcome, which is why we adopt it throughout this paper.

\subsection{Criterion Sensitivity on the NHANES Candidates}
\label{subsec:criteria-nhanes}

To assess how much the choice of criterion matters in practice, we recomputed
the unpruned NHANES candidates of Table~\ref{tab:nhanes_results} under all
three criteria, on the same fine grid used throughout
(Appendix~\ref{subsec:nhanes-computation}).
Table~\ref{tab:criteria_nhanes} reports the resulting potencies.

\begin{table}[H]
\centering\small
\caption{NHANES candidates under the worst-case, expected, and best-case
criteria (potency values; observational width $\mathcal{W}_\theta = 1$).}
\label{tab:criteria_nhanes}
\begin{tabular}{lcccc}
\toprule
Candidate & Worst & Expected & Best & Cost \\
\midrule
$\tilde{\mathcal{a}}_3 = P(X,Y \mid do(z))$   & $0.10$ & $0.38$ & $0.70$ & $1$ \\
$\tilde{\mathcal{a}}_4 = P(y_{x'}, y'_{x})$   & $0.48$ & $0.48$ & $0.48$ & $1$ \\
$\tilde{\mathcal{a}}_5 = P(Y \mid do(x'))$    & $0.47$ & $0.47$ & $0.47$ & $2$ \\
$\tilde{\mathcal{a}}_6 = P(Y \mid do(x))$     & $0.53$ & $0.53$ & $0.53$ & $2$ \\
$\{\tilde{\mathcal{a}}_4, \tilde{\mathcal{a}}_6\}$ & $0.53$ & $0.55$ & $0.58$ & $3$ \\
$\{\tilde{\mathcal{a}}_4, \tilde{\mathcal{a}}_5\}$ & $0.85$ & $0.86$ & $0.90$ & $3$ \\
\bottomrule
\end{tabular}
\end{table}

The criterion changes the \emph{decision}, not merely the numbers: at $B = 1$,
the worst-case and expected criteria both select $\tilde{\mathcal{a}}_4$,
whereas the best-case criterion selects $\tilde{\mathcal{a}}_3$---the candidate
ranked last among informative singletons under the worst case ($0.10$), but
with a wide spread across outcomes ($0.70$ in the best case). At $B = 3$,
$\{\tilde{\mathcal{a}}_4, \tilde{\mathcal{a}}_5\}$ is optimal under all three
criteria among the sets evaluated.

That said, the criterion is a modular choice within our framework.
(i) The pruning results hold under all three criteria: our proofs show
something stronger than zero worst-case potency---for pruned experiments, the
bounds remain unchanged under \emph{every} feasible outcome
(Theorems~\ref{th-doesnt_touch_r_theta_implies_pot0} and~\ref{id_implies_pot0}).
Pruning is therefore unaffected by the criterion; only the ranking among the
survivors depends on it.
(ii) For potency evaluation, the worst-case criterion is what allows collapsing
the whole computation into a single polynomial program
(Appendix~\ref{subsec:evaluating-potency}); under other criteria this collapse
may no longer be possible, and one would instead solve several programs across
possible outcomes.

\subsection{Beyond a Single Budget: Money and Time}
\label{subsec:two-budget}

The budget model is equally modular. Practitioners may care about resources
beyond money, such as the time needed to conduct an experiment. Concretely, let
each experiment set carry a pair of costs $(c(\mathcal{a}), t(\mathcal{a}))$,
money and time, with two budgets $B_c$ and $B_t$. The max-potency problem
becomes
$\max_{\mathcal{a} \subseteq \mathcal{A}} pot_\theta(\mathcal{a})$ subject to
$c(\mathcal{a}) \leq B_c$ \emph{and} $t(\mathcal{a}) \leq B_t$. Costs never
enter the potency computation itself; they only decide which sets are
affordable, so the only change to the solution method is that the single budget
check per candidate set becomes double. The pruning procedure is likewise
unaffected: it uses the graph, the query, and the candidate list, never the
costs---a zero-potency experiment is useless no matter how cheap or fast it is.
Since the original problem is the special case $B_t = \infty$, the extension
remains NP-hard, and the computational bottleneck is unchanged. Note that our cost functions are defined over
\emph{sets} of experiments, so time need not be additive---experiments
performed in parallel cost the maximum of their durations rather than the sum,
and the formulation captures this as-is. In the sequential setting, which we
leave as future work, time becomes even more relevant: for example, one may
seek to reach a target bound width within a given deadline at least cost.

\section{NP-hardness of the Max-Potency Problem}
\label{app:np-hardness}

In this appendix we prove Theorem~\ref{thm:np-hard} (max-potency is
NP-hard) via a polynomial-time reduction from the 0-1 knapsack problem.

At a high level, the reduction maps every knapsack object to an experiment: the
object's weight becomes the experiment's cost, and the object's value becomes
the experiment's potency. The gadget that makes this work is the
\emph{TP-cell} defined below: a minimal ADMG with a treatment $X_i$, an outcome
$Y_i$, and confounding between them, in which the query of interest is the
$\mathrm{PNS}_i$ with its symbolic bounds from \citet{tian_probabilities_2000}.
Given a knapsack instance with values $v_i$, weights $w_i$, and budget $B$, we
build one TP-cell per item, choose each cell's observational distribution so
that the potency of the cell's single experiment
$\tilde{\mathcal{a}}_i = P(y_{i,x_i})$ encodes the item's value
(Lemma~\ref{lem:achievability}; the formal proof first rescales the values into
the achievable range, which does not change the optimal knapsack solution),
take the query to be the sum of all $\mathrm{PNS}_i$, and assign
$\tilde{\mathcal{a}}_i$ cost $w_i$. Because distinct cells share no variables,
performing several experiments tightens each cell's bounds independently, so
the potency of a set of experiments is simply the sum of the per-cell potencies
(Lemma~\ref{lem:additivity})---just like the value of a set of packed objects.
Solving the constructed max-potency instance therefore solves the knapsack
instance, and each step of the construction runs in polynomial time.

\subsection{The atomic unit: TP-cells}

\begin{definition}[TP-cell]
\label{def:tp-cell}
A \emph{TP-cell} $\mathcal{C}_i$ is the ADMG with binary observed
variables $X_i, Y_i$, a directed edge $X_i \to Y_i$, and a bidirected
edge $X_i \leftrightarrow Y_i$, equipped with an observational
distribution $P_i(X_i, Y_i)$ parameterized by
\[
\alpha_i = P_i(x_i, y_i), \quad
\beta_i  = P_i(x_i, y_i'), \quad
\gamma_i = P_i(x_i', y_i), \quad
\delta_i = P_i(x_i', y_i'),
\]
with $(\alpha_i, \beta_i, \gamma_i, \delta_i) \in \Delta$, where
\[
\Delta := \bigl\{(\alpha, \beta, \gamma, \delta) \in
\mathbb{R}_{\geq 0}^{4} :
\alpha + \beta + \gamma + \delta = 1\bigr\}
\]
denotes the probability simplex over the four joint outcomes of
$(X_i, Y_i)$.
\end{definition}

\begin{figure}[h]
\centering
\begin{tikzpicture}[
  >=Latex,
  node distance=14mm,
  every node/.style={inner sep=1.5pt}
]
  \node (X) {$X_i$};
  \node (Y) [right=of X] {$Y_i$};
  \draw[->] (X) -- (Y);
  \draw[<->, dashed, bend left=40] (X) to (Y);
\end{tikzpicture}
\caption{A TP-cell $\mathcal{C}_i$: a confounded treatment-outcome pair.}
\label{fig:tp-cell}
\end{figure}

The query of interest within a single cell is the probability of
necessity and sufficiency,
\[
\mathrm{PNS}_i = P(y_{i,x_i}, y'_{i,x_i'}).
\]
For a TP-cell, \citet{tian_probabilities_2000} give the
sharp observational bounds
\[
0 \leq \mathrm{PNS}_i \leq \alpha_i+\delta_i,
\]
so that the observational worst-case width is
\[
\mathcal{W}_i(\emptyset)=\alpha_i+\delta_i.
\]

We write $pot_i(\tilde{\mathcal{a}}_i) = \mathcal{W}_i(\emptyset) - \mathcal{W}_i(\tilde{\mathcal{a}}_i)$
for the per-cell potency, in line with Definition~\ref{def:potency}. Furthermore, we use $W_{(\cdot)}(\mathcal{a}, p_{\mathcal{a}}) := U_{(\cdot)}(\mathcal{a}, p_{\mathcal{a}}) - L_{(\cdot)}(\mathcal{a}, p_{\mathcal{a}})$ to denote the instantiated bound width.

\subsection{The construction}
\label{app:np-hardness_construction}

Given a 0-1 knapsack instance with values $v_1, \dots, v_n \geq 0$,
weights $w_1, \dots, w_n \geq 0$, and budget $B \geq 0$, we construct
a max-potency instance as follows.

\begin{enumerate}
\item \textbf{Graph.} Let $\mathcal{G}$ be the disjoint union of $n$
TP-cells $\mathcal{C}_1, \dots, \mathcal{C}_n$. The observed variables
are $\bs{V} = \{X_1, Y_1, \dots, X_n, Y_n\}$.
\item \textbf{Observational distribution.} Choose per-cell distributions
$P_i(X_i, Y_i)$ such that $pot_i(\tilde{\mathcal{a}}_i) = v_i$, where
$\tilde{\mathcal{a}}_i = P(y_{i,x_i})$. Constructive achievability is established in
Lemma~\ref{lem:achievability} below; if necessary, rescale the values
$v_i$ so that they lie in the achievable range. Set
$P(\bs{V}) = \prod_{i=1}^{n} P_i(X_i, Y_i)$.
\item \textbf{Query.} Let
\[
\theta = \sum_{i=1}^{n} \mathrm{PNS}_i
       = \sum_{i=1}^{n} P(y_{i,x_i}, y'_{i,x_i'}).
\]
\item \textbf{Experiment set.} Let $\mathcal{A} = \{\tilde{\mathcal{a}}_1, \dots, \tilde{\mathcal{a}}_n\}$
with $\tilde{\mathcal{a}}_i = P(y_{i,x_i})$. For any $S\subseteq\{1,\dots,n\}$, we write $\mathcal{a}_S:=\{\tilde{\mathcal{a}}_i:i \in S\}$.
\item \textbf{Cost function and budget.} Set $c(\mathcal{a}_S) = \sum_{i \in S} w_i$
for $S \subseteq \{1, \dots, n\}$, and use the original budget $B$.
\end{enumerate}

The construction is polynomial in $n$: each TP-cell is a constant-size
object, and Lemma~\ref{lem:achievability} produces the required
$P_i$ in time polynomial in the bit-length of $v_i$.

\subsection{Additivity of potency across cells}

The key technical claim is that the potency of any subset $\mathcal{a}_S$
decomposes as a sum of per-cell potencies. This follows from the
district structure of $\mathcal{G}$ via the machinery developed in
Section~\ref{sec:pruning}.

\begin{lemma}[Additivity]
\label{lem:additivity}
For any $S \subseteq \{1, \dots, n\}$,
\[
pot_\theta(\mathcal{a}_S) \;=\; \sum_{i \in S} pot_i(\tilde{\mathcal{a}}_i).
\]
\end{lemma}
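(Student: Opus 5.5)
We will show that, for the disjoint union of TP-cells, the bounding program for $\theta=\sum_i \mathrm{PNS}_i$ under the experiment set $\mathcal{a}_S$ splits into $n$ independent per-cell programs, one per cell. The plan has three steps: decompose the feasible set, decompose the optimum, and decompose the worst case over outcomes.

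\textbf{Step 1: the feasible set is a product.} Each cell $\mathcal{C}_i$ is its own district $\bs{D}_i=\{X_i,Y_i,R_i\}$, so $\bs{R}=(R_1,\dots,R_n)$ with the $R_i$ mutually independent. Hence $P(\bs{r})=\prod_i P(r_i)$. The decision variables can therefore be taken as the per-cell marginals $q_i:=(P(r_i))_{r_i}$, each ranging over a simplex.

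By Lemma~\ref{lemma:cartesian} and Lemma~\ref{lemma:district_cfactor}, the observational constraint $P(\bs{v})=\sum_{\bs{r}\in h(\bs{v})}P(\bs{r})$ becomes $\prod_i Q_i(\bs{v})=\prod_i \sum_{r_i\in h_i(\bs{v})}P(r_i)$. Since $P(\bs{V})=\prod_i P_i(X_i,Y_i)$, we have $Q_i(\bs v)=P_i(x_i,y_i)$. All these quantities are positive in the non-degenerate range of Lemma~\ref{lem:achievability}. Fixing all coordinates but one and taking ratios, the product equalities are equivalent to the per-cell equalities $P_i(x_i,y_i)=\sum_{r_i\in h_i}P(r_i)$.

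Each experiment $\tilde{\mathcal{a}}_i=P(y_{i,x_i})$ has $R^\star(\tilde{\mathcal{a}}_i)=\{R_i\}$, by \citet[Proposition~3]{duarte2023automated}. Its constraint is therefore linear in $q_i$ alone. Consequently the feasible set $\mathcal{F}(S,p)$ factors as $\prod_i \mathcal{F}_i$, where $\mathcal{F}_i$ is cell $i$'s feasible set, with the constraint $p_{\tilde{\mathcal{a}}_i}$ imposed if and only if $i\in S$. The outcome ranges $[l_{p_{\tilde{\mathcal{a}}_i}},u_{p_{\tilde{\mathcal{a}}_i}}]$ are likewise computed cellwise, so the outcome box is also a product.

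\textbf{Step 2: the bounds add cellwise.} The objective is $\mathcal{T}=\sum_i \mathrm{PNS}_i(q_i)$, a separable sum. Maximizing and minimizing a separable sum over a product set gives $U_\theta(\mathcal{a}_S,p)=\sum_i U_i(\cdot,p_i)$ and $L_\theta(\mathcal{a}_S,p)=\sum_i L_i(\cdot,p_i)$. The instantiated width is therefore $W_\theta(\mathcal{a}_S,p)=\sum_{i\in S}W_i(\tilde{\mathcal{a}}_i,p_i)+\sum_{i\notin S}\mathcal{W}_i(\emptyset)$.

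\textbf{Step 3: the worst case adds cellwise.} Each summand depends only on its own outcome $p_i$, and the outcome box is a product. So the maximum of the sum is the sum of the maxima: $\mathcal{W}_\theta(\mathcal{a}_S)=\sum_{i\in S}\mathcal{W}_i(\tilde{\mathcal{a}}_i)+\sum_{i\notin S}\mathcal{W}_i(\emptyset)$. Taking $S=\emptyset$ gives $\mathcal{W}_\theta=\sum_i\mathcal{W}_i(\emptyset)$. Subtracting the two expressions yields $pot_\theta(\mathcal{a}_S)=\sum_{i\in S}pot_i(\tilde{\mathcal{a}}_i)$, as claimed.

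\textbf{Main obstacle.} The delicate point is Step 1. We must justify that the product-form observational constraints of the joint program are equivalent to the separate per-cell constraints, so that no coupling remains across cells. The cleanest route is to argue directly in terms of the marginals $q_i$ and use positivity of the $P_i$ to divide. If some cells have zero-probability cells in $P_i$, I would instead sum the joint constraints over all $\bs v_{-i}$, which recovers the marginal constraint for cell $i$ without any division. The converse direction, that per-cell constraints imply the joint ones, is immediate by multiplying them.
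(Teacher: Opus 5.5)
Your proposal is correct and follows essentially the same route as the paper: the districts are exactly the cells, so the objective, the observational constraints and the experimental constraints all separate; the bounds then add cellwise, and the worst case over the product outcome box splits into per-cell maxima. Your handling of the observational constraints is slightly more complete than the paper's, which only notes that the per-cell constraints imply the joint one; your marginalization over $\bs{v}_{-i}$ supplies the converse without needing positivity, which in any case fails at $v_i=0$, where $P^{0}=(1,0,0,0)$.
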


\begin{proof}
Each TP-cell $\mathcal{C}_i$ is a connected component of the bidirected
graph and shares no variables with any other cell, so the districts
of $\mathcal{G}$ are exactly $\mathcal{C}_1, \dots, \mathcal{C}_n$.
Let $R_i$ denote the response-types of cell $i$. By
Lemma~\ref{lemma:cartesian} (Cartesian product decomposition),
$h(\bs{v}) = \prod_{i=1}^{n} h_i(v_i)$ for any
$\bs{v} = (v_1, \dots, v_n) \in \mathrm{supp}(\bs{V})$, and
response-types in distinct cells are independent:
$P(\bs{r}) = \prod_{i=1}^{n} P(r_i)$.

We now show that the polynomial program $\mathcal{P}_{\mathcal{a}_S}$ decomposes
into $n$ independent per-cell programs.

\paragraph{Objective separates.}
Each $\mathrm{PNS}_i$ involves only interventions, outcomes, and
response-types of cell $i$, so the target polynomial
$\mathcal{T} = \sum_i \mathrm{PNS}_i$ is a sum where the $i$-th
summand depends only on $R_i$.

\paragraph{Observational constraints separate.}
The joint constraint $P(\bs{v}) = \sum_{\bs{r} \in h(\bs{v})} P(\bs{r})$
is implied by the per-cell constraints
$P_i(v_i) = \sum_{r_i \in h_i(v_i)} P(r_i)$ for $i = 1, \dots, n$,
since $P(\bs{v}) = \prod_i P_i(v_i)$ and $h(\bs{v}) = \prod_i h_i(v_i)$.

\paragraph{Experimental constraints separate.}
Each constraint $f_{\tilde{\mathcal{a}}_i}(\bs{R}) = p_{\tilde{\mathcal{a}}_i}$ involves only $R_i$.

Hence $\mathcal{P}_{\mathcal{a}_S}$ decomposes as
\[
U_\theta(\mathcal{a}_S, p_{\mathcal{a}_S})
   = \sum_{i \in S} U_i(\tilde{\mathcal{a}}_i, p_{\tilde{\mathcal{a}}_i})
   + \sum_{i \notin S} U_i(\emptyset),
\]
and similarly for $L_\theta$, giving the instantiated width
\[
W_\theta(\mathcal{a}_S, p_{\mathcal{a}_S})
   = \sum_{i \in S} W_i(\tilde{\mathcal{a}}_i, p_{\tilde{\mathcal{a}}_i})
   + \sum_{i \notin S} W_i(\emptyset).
\]

The worst-case is taken over the product domain
$p_{\mathcal{a}_S} \in \prod_{i \in S}[l_{p_{\tilde{\mathcal{a}}_i}}, u_{p_{\tilde{\mathcal{a}}_i}}]$. Since
the sum is separable in the $p_{\tilde{\mathcal{a}}_i}$, the maximum decomposes:
\[
\mathcal{W}_\theta(\mathcal{a}_S)
   = \max_{p_{\mathcal{a}_S}} W_\theta(\mathcal{a}_S, p_{\mathcal{a}_S})
   = \sum_{i \in S} \mathcal{W}_i(\tilde{\mathcal{a}}_i)
   + \sum_{i \notin S} W_i(\emptyset).
\]
For $S = \emptyset$ this gives
$\mathcal{W}_\theta(\emptyset) = \sum_i W_i(\emptyset)$.
Combining,
\begin{align*}
pot_\theta(\mathcal{a}_S)
&= \mathcal{W}_\theta(\emptyset) - \mathcal{W}_\theta(\mathcal{a}_S) \\
&= \sum_i W_i(\emptyset)
   - \sum_{i \in S} \mathcal{W}_i(\tilde{\mathcal{a}}_i)
   - \sum_{i \notin S} W_i(\emptyset) \\
&= \sum_{i \in S} \bigl[W_i(\emptyset) - \mathcal{W}_i(\tilde{\mathcal{a}}_i)\bigr]
 = \sum_{i \in S} pot_i(\tilde{\mathcal{a}}_i). \qedhere
\end{align*}
\end{proof}

\subsection{Achievability of target potencies}

\begin{lemma}[Constructive achievability]
\label{lem:achievability}
There exists $v^{*} > 0$ and a polynomial-time algorithm that, given
any rational $v \in [0, v^{*}]$, outputs rational parameters
$(\alpha, \beta, \gamma, \delta) \in \Delta$ with
$pot_i(\tilde{\mathcal{a}}_i) = v$.
\end{lemma}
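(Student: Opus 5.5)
The plan is to compute $pot_i(\tilde{\mathcal{a}}_i)$ in closed form as a function of $(\alpha,\beta,\gamma,\delta)$, and then invert that formula. The natural guess, read off from Example~\ref{ex:potency}, is that the instantiated width is
\[
W_i(\tilde{\mathcal{a}}_i,p)=\min\{p,\;\alpha+\delta\}-\max\{0,\;p-(\alpha+\gamma)\}
\qquad\text{for } p\in[l_{p_{\tilde{\mathcal{a}}_i}},u_{p_{\tilde{\mathcal{a}}_i}}]=[\alpha,\,1-\beta].
\]
This agrees with all three pieces of the example, which has $\alpha=0.2$, $\beta=0.1$, $\gamma=0.3$, $\delta=0.4$.

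\textbf{Step 1: sharp bounds given $P(y_x)=p$.} I would derive these directly on the eight response types of the TP-cell: $r_X\in\{x,x'\}$ and $r_Y\in\{\text{never},\text{complier},\text{defier},\text{always}\}$. This is a small linear program, since there is a single district and hence a single joint response-type distribution.
\begin{itemize}
\item \emph{Validity.} Compliers with $r_X=x$ contribute to $\alpha$ and to $p$. Compliers with $r_X=x'$ contribute to $\delta$. Hence $\mathrm{PNS}\le\alpha+\delta$, and trivially $\mathrm{PNS}\le p$. Conversely, $p-\mathrm{PNS}$ is the mass of the ``always'' type plus the $r_X=x$ mass of $Y=1$ types that is not complier. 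That quantity is at most $\alpha+\gamma=P(y)$, which gives the lower bound $\mathrm{PNS}\ge p-\alpha-\gamma$.
\item \emph{Sharpness.} I would exhibit explicit response-type distributions attaining each bound.
\end{itemize}
The alternative is to project the Tian--Pearl bounds by letting $P(y_{x'})$ range freely, but that requires a joint-feasibility argument, so I prefer the direct LP. I expect Step 1 to be the main obstacle. In particular, sharpness of the lower bound $\max\{0,p-\alpha-\gamma\}$ needs a case split on whether $p\le\alpha+\gamma$.

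\textbf{Step 2: maximise the width over $p$.} The function $W_i(\tilde{\mathcal{a}}_i,p)$ is concave and piecewise linear in $p$. Its values at the endpoints are:
\begin{itemize}
\item at $p=\alpha$, the width is $\alpha$;
\item at $p=1-\beta$, the width is $\alpha+\delta-(1-\beta-\alpha-\gamma)=\alpha$, using $\alpha+\beta+\gamma+\delta=1$.
\end{itemize}
Between the endpoints it rises with slope $1$ and then falls with slope $-1$, possibly with a plateau. The plateau lies between the kinks $p=\alpha+\delta$ and $p=\alpha+\gamma$, both of which lie inside $[\alpha,1-\beta]$. The maximum is therefore $\min\{\alpha+\delta,\alpha+\gamma\}$. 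Since $\mathcal{W}_i(\emptyset)=\alpha+\delta$, this gives
\[
pot_i(\tilde{\mathcal{a}}_i)=\max\{0,\;\delta-\gamma\}.
\]

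\textbf{Step 3: inversion.} Take $v^*=1/2$. Given a rational $v\in[0,1/2]$, output
\[
\alpha=\beta=\tfrac14,\qquad \gamma=\tfrac{1/2-v}{2},\qquad \delta=\tfrac{1/2+v}{2}.
\]
These parameters are rational, nonnegative and sum to $1$, so they lie in $\Delta$. They satisfy $\delta-\gamma=v$, and they are computable with a constant number of arithmetic operations on $v$. Hence the algorithm runs in time polynomial in the bit-length of $v$, and by Step 2 it achieves $pot_i(\tilde{\mathcal{a}}_i)=v$.
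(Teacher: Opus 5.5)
Your proposal is correct, and it follows a different route from the paper.

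\textbf{The paper's route.} The paper never derives a general formula. It fixes the family $P^{v}=(1-8v,\,v,\,3v,\,4v)$, $v\in[0,1/8]$, and takes the Tian--Pearl PNS bounds with both $p=P(y_x)$ and $q=P(y_{x'})$ known. It eliminates $q$ by monotonicity over $[\gamma,1-\delta]$, which gives $L_i=\max\{0,p+5v-1\}$ and $U_i=\min\{p,1-4v\}$. It then reads off the plateau width $1-5v$, so the potency is $v$ and $v^{*}=0.1$. These are exactly your formulas specialized to $\alpha+\gamma=1-5v$, $\alpha+\delta=1-4v$ and $\delta-\gamma=v$.

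\textbf{What your route buys.} First, your direct LP is self-contained. The paper's projection implicitly relies on the feasible set of $(P(y_x),P(y_{x'}))$ being the full product $[\alpha,1-\beta]\times[\gamma,1-\delta]$, and on Tian--Pearl being sharp at every such pair. That is precisely the joint-feasibility issue you chose to avoid. Your LP also re-derives $\mathcal{W}_i(\emptyset)=\alpha+\delta$ rather than citing it. Second, it characterizes every TP-cell: $pot_i(\tilde{\mathcal{a}}_i)=\max\{0,\delta-\gamma\}$. Hence every $v\in[0,1]$ is achievable, e.g.\ with $(1-v,0,0,v)$, which makes the rescaling step of the reduction essentially unnecessary. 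The paper's route, in turn, needs only one monotonicity check on a concrete family and leans on a published closed form.

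\textbf{The step you flagged is immediate.} Write $\pi_t=P(r_X{=}x,r_Y{=}t)$ and $\rho_t=P(r_X{=}x',r_Y{=}t)$. The data and the experiment impose
$\pi_{\mathrm{com}}+\pi_{\mathrm{alw}}=\alpha$, $\pi_{\mathrm{nev}}+\pi_{\mathrm{def}}=\beta$, $\rho_{\mathrm{def}}+\rho_{\mathrm{alw}}=\gamma$, $\rho_{\mathrm{nev}}+\rho_{\mathrm{com}}=\delta$, and $\rho_{\mathrm{com}}+\rho_{\mathrm{alw}}=p-\alpha$.
These constraints separate into independent $\pi$- and $\rho$-blocks. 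So $\mathrm{PNS}=\pi_{\mathrm{com}}+\rho_{\mathrm{com}}$, where $\pi_{\mathrm{com}}$ ranges freely over $[0,\alpha]$ and $\rho_{\mathrm{com}}$ over $[\max\{0,p-\alpha-\gamma\},\min\{\delta,p-\alpha\}]$. Both sharp bounds follow at once, with no case split.

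\textbf{A wording fix in your validity argument.} $p-\mathrm{PNS}$ is exactly the always-type mass. All of it is observed with $Y=y$, so it is at most $\alpha+\gamma$. Your extra clause about the non-complier $r_X=x$ mass double-counts that mass.
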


\begin{proof}
We exhibit a one-parameter family of cells indexed directly by $v$.
For $v \in [0,\, 1/8]$, define
\[
P^{v} := \bigl(\alpha,\,\beta,\,\gamma,\,\delta\bigr)
       = \bigl(1 - 8v,\; v,\; 3v,\; 4v\bigr).
\]
All four entries are non-negative on $[0,\,1/8]$ and sum to $1$, so
$P^{v} \in \Delta$. We will show $pot_i(\tilde{\mathcal{a}}_i;\,P^{v}) = v$ on
this range, and set $v^{*} := 0.1$.

\paragraph{Observational width.}
By the Tian--Pearl observational bound stated above,
\begin{equation}
\label{eq:Wobs-Pv}
W_i(\emptyset;\,P^{v}) = \alpha + \delta = 1 - 4v.
\end{equation}

\paragraph{Width with the experimental constraint.}
We now use the Tian--Pearl bounds for PNS when the interventional
marginal \(p=P(y_{i,x_i})\) is known. With the experimental constraint
\(\tilde{\mathcal{a}}_i=P(y_{i,x_i})=p\) added to the observational data, the bounds still
involve the unobserved interventional marginal
\[
q := P(y_{i,x_i'}).
\]
Since \(q\) is not part of the allowed experiment set in the reduction,
it is optimized out over its observationally feasible range
\[
q\in[\gamma,1-\delta]=[3v,1-4v].
\]
Thus the sharp lower and upper bounds on \(\mathrm{PNS}_i\), conditional
on observing \(p\), are obtained as
\begin{align*}
L_i(\tilde{\mathcal{a}}_i,\,p)
&= \min_{q \in [3v,\,1-4v]}
   \max\bigl\{0,\; p - q,\; P(y) - q,\; p - P(y)\bigr\}, \\
U_i(\tilde{\mathcal{a}}_i,\,p)
&= \max_{q \in [3v,\,1-4v]}
   \min\bigl\{p,\; 1 - q,\; \alpha + \delta,\;
              p - q + \beta + \gamma\bigr\}.
\end{align*}
Substituting the marginals $P(y) = \alpha + \gamma = 1 - 5v$,
$\alpha + \delta = 1 - 4v$, and $\beta + \gamma = 4v$:

\medskip\noindent
\emph{Lower bound.}
The expression
$\max\{0,\, p - q,\, (1-5v) - q,\, p - (1-5v)\}$
is non-increasing in $q$ (each term is either independent of $q$ or
non-increasing in $q$), so the minimum over $q \in [3v,\,1-4v]$ is
attained at $q = 1 - 4v$. Substituting,
\[
L_i(\tilde{\mathcal{a}}_i,\,p)
= \max\bigl\{0,\; p + 4v - 1,\; -v,\; p + 5v - 1\bigr\}
= \max\bigl\{0,\; p + 5v - 1\bigr\},
\]
where we drop $-v \leq 0$ and note $p + 5v - 1 \geq p + 4v - 1$.

\medskip\noindent
\emph{Upper bound.}
The expression $\min\{p,\, 1 - q,\, 1 - 4v,\, p - q + 4v\}$ is
non-decreasing in $(-q)$, so the maximum over $q \in [3v,\,1-4v]$ is
attained at $q = 3v$. Substituting,
\[
U_i(\tilde{\mathcal{a}}_i,\,p)
= \min\bigl\{p,\; 1 - 3v,\; 1 - 4v,\; p + v\bigr\}
= \min\bigl\{p,\; 1 - 4v\bigr\},
\]
since $1 - 4v \leq 1 - 3v$ and $p \leq p + v$.

\medskip\noindent
The bound width $W_i(\tilde{\mathcal{a}}_i,\,p;\,P^{v}) = U_i(\tilde{\mathcal{a}}_i,\,p) - L_i(\tilde{\mathcal{a}}_i,\,p)$
follows by case analysis on $p \in [1 - 8v,\, 1 - v]$:
\begin{equation}
\label{eq:Wpointwise-Pv}
W_i(\tilde{\mathcal{a}}_i,\,p;\,P^{v}) =
\begin{cases}
p, & p \in [\,1 - 8v,\; 1 - 5v\,], \\[2pt]
1 - 5v, & p \in [\,1 - 5v,\; 1 - 4v\,], \\[2pt]
(2 - 9v) - p, & p \in [\,1 - 4v,\; 1 - v\,].
\end{cases}
\end{equation}
The function $p \mapsto W_i(\tilde{\mathcal{a}}_i,\,p;\,P^{v})$ is continuous and
piecewise-linear: it rises with slope $+1$ from $W = 1 - 8v$ to
$W = 1 - 5v$, plateaus at $1 - 5v$ on $[1-5v,\,1-4v]$, then falls
with slope $-1$ to $W = 1 - 8v$. The maximum is attained on the
plateau, giving
\begin{equation}
\label{eq:Wmax-Pv}
\mathcal{W}_i(\tilde{\mathcal{a}}_i;\,P^{v})
   = \max_{p \in [1-8v,\,1-v]} W_i(\tilde{\mathcal{a}}_i,\,p;\,P^{v})
   = 1 - 5v.
\end{equation}

\paragraph{Potency.}
Combining \eqref{eq:Wobs-Pv} and \eqref{eq:Wmax-Pv},
\[
pot_i(\tilde{\mathcal{a}}_i;\,P^{v})
   = W_i(\emptyset;\,P^{v}) - \mathcal{W}_i(\tilde{\mathcal{a}}_i;\,P^{v})
   = (1 - 4v) - (1 - 5v)
   = v.
\]
At $v = 0.1$ this recovers $pot_i(\tilde{\mathcal{a}}_i;\,P^{0.1}) = 0.1$ in
agreement with Example~\ref{ex:potency}.

\paragraph{Polynomial time.}
Given a rational $v$ with bit-length $L$, the four entries
$(1 - 8v,\, v,\, 3v,\, 4v)$ of $P^{v}$ are computed by a constant
number of rational arithmetic operations and have bit-length $\mathcal{O}(L)$.
The cell is therefore constructed in time polynomial in the
bit-length of $v$.
\end{proof}

\subsection{The reduction}

\begin{theorem}[Theorem~\ref{thm:np-hard}, restated]
\label{thm:np-hard-restated}
The max-potency problem is NP-hard.
\end{theorem}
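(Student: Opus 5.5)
We reduce from the decision or optimization version of 0-1 knapsack, which is NP-hard. The reduction assembles the construction of Appendix~\ref{app:np-hardness_construction} together with Lemmas~\ref{lem:additivity} and~\ref{lem:achievability}.

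Given a knapsack instance $(v_i, w_i)_{i=1}^n$ with budget $B$ and rational values, we first rescale. Set $V := \max_i v_i$; if $V = 0$ the instance is trivial. Otherwise define $v_i' := v^* v_i / V \in [0, v^*]$ with $v^* = 0.1$. Multiplying every value by the common positive constant $v^*/V$ preserves the ordering of subset sums $\sum_{i\in S} v_i$. Hence the feasible subsets maximizing $\sum_{i \in S} v_i'$ are exactly those maximizing $\sum_{i\in S} v_i$. The rescaling is rational arithmetic of polynomial bit-length.

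For each $i$ we invoke Lemma~\ref{lem:achievability} to obtain the TP-cell distribution $P^{v_i'}$, which satisfies $pot_i(\tilde{\mathcal{a}}_i) = v_i'$. We then form the product distribution over the disjoint union of cells, the query $\theta = \sum_i \mathrm{PNS}_i$, the candidate set $\mathcal{A} = \{\tilde{\mathcal{a}}_i\}$, and the cost $c(\mathcal{a}_S) = \sum_{i\in S} w_i$ with the same budget $B$. The empty set has cost $0 \le B$, so the paper's standing assumption that some set is affordable holds.

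By Lemma~\ref{lem:additivity}, $pot_\theta(\mathcal{a}_S) = \sum_{i\in S} v_i'$ for every $S$, and $c(\mathcal{a}_S) \le B$ holds iff $S$ is knapsack-feasible. The map $S \mapsto \mathcal{a}_S$ is therefore a bijection between feasible knapsack solutions and feasible max-potency solutions that preserves objective values up to the factor $v^*/V$. An optimal $\mathcal{a}_S$ for max-potency thus yields an optimal knapsack set $S$. For the decision version, asking whether $\sum_{i \in S} v_i \ge K$ is equivalent to asking whether $pot_\theta \ge v^* K / V$.

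The main obstacle is verifying that the whole instance is polynomial-size. There are $n$ constant-size cells, but the joint $P(\bs{V})$ over $2n$ binary variables has $4^n$ entries. I would argue that the instance is specified by the per-cell factors $P_i$, since the product form is part of the construction. Equivalently, Lemma~\ref{th-all_constr_on_Rtheta_in_Dtheta_c-factor} reduces the observational constraints to per-district c-factors, so the input can be encoded by the $n$ c-factors $Q_i = P_i$. Under that encoding the reduction runs in polynomial time, and NP-hardness of max-potency follows.
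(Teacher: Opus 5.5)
Your proposal is correct and follows the paper's proof essentially step for step: you rescale the values into $[0,v^{*}]$ (you always rescale, while the paper does so only when $\max_j v_j > v^{*}$, which is immaterial), build one TP-cell per item via Lemma~\ref{lem:achievability}, and invoke Lemma~\ref{lem:additivity} so that $pot_\theta(\mathcal{a}_S)$ equals the scaled knapsack value and the argmax transfers. Your remark that $P(\bs{V})$ must be encoded by its per-cell factors makes explicit an assumption the paper leaves implicit (the additive cost function on $2^{\mathcal{A}}$ needs the same succinct encoding), but you should cite the per-cell separation step in the proof of Lemma~\ref{lem:additivity} (equivalently, Lemma~\ref{lemma:district_cfactor} plus cross-district independence) rather than Lemma~\ref{th-all_constr_on_Rtheta_in_Dtheta_c-factor}, because here $\theta=\sum_i \mathrm{PNS}_i$ touches every cell, so that lemma's district hull is the entire graph and it eliminates nothing.
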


\begin{proof}
We exhibit a polynomial-time reduction from 0-1 knapsack to
max-potency. Let $(v_1, \dots, v_n;\, w_1, \dots, w_n;\, B)$ be a
0-1 knapsack instance.

\paragraph{Rescaling.}
If $\max_i v_i > v^{*}$, replace each $v_i$ with
$\tilde{v}_i := v_i \cdot v^{*} / \max_j v_j$, so that $\tilde{v}_i
\in [0, v^{*}]$. Since this rescaling multiplies every subset sum
$\sum_{i \in S} v_i$ by the same positive constant
$v^{*} / \max_j v_j$, it preserves the argmax: the optimal $S^{*}$
under $(\tilde{v}, w, B)$ is identical to the optimal $S^{*}$ under
$(v, w, B)$. The rescaled values $\tilde{v}_i$ are rational with
bit-length polynomial in the original input. Henceforth we work
with $\tilde{v}_i$ in place of $v_i$.

\paragraph{Construction.}
Construct the max-potency instance
$(\mathcal{G}, P(\bs{V}), \theta, \mathcal{A}, c, B)$ as in
Section~\ref{app:np-hardness_construction}, with cell distributions chosen via
Lemma~\ref{lem:achievability} so that
$pot_i(\tilde{\mathcal{a}}_i) = \tilde{v}_i$. The construction runs in time
polynomial in the size of the knapsack instance.

\paragraph{Equivalence.}
By Lemma~\ref{lem:additivity}, for any $S \subseteq \{1, \dots, n\}$,
\[
pot_\theta(\mathcal{a}_S) = \sum_{i \in S} pot_i(\tilde{\mathcal{a}}_i)
                         = \sum_{i \in S} \tilde{v}_i,
\]
and by construction $c(\mathcal{a}_S) = \sum_{i \in S} w_i$. The max-potency
problem on the constructed instance is therefore
\[
\max_{S \subseteq \{1,\dots,n\}} \sum_{i \in S} \tilde{v}_i
\quad \text{s.t.} \quad \sum_{i \in S} w_i \leq B,
\]
whose argmax coincides, by the rescaling argument, with the argmax
of the original 0-1 knapsack instance. Since 0-1 knapsack is
NP-hard, max-potency is NP-hard.
\end{proof}
\section{Additional Examples}\label{app:example}
\begin{figure}[h]
\centering
\begin{subfigure}{0.32\textwidth}
  \centering
  \includegraphics[width=\linewidth]{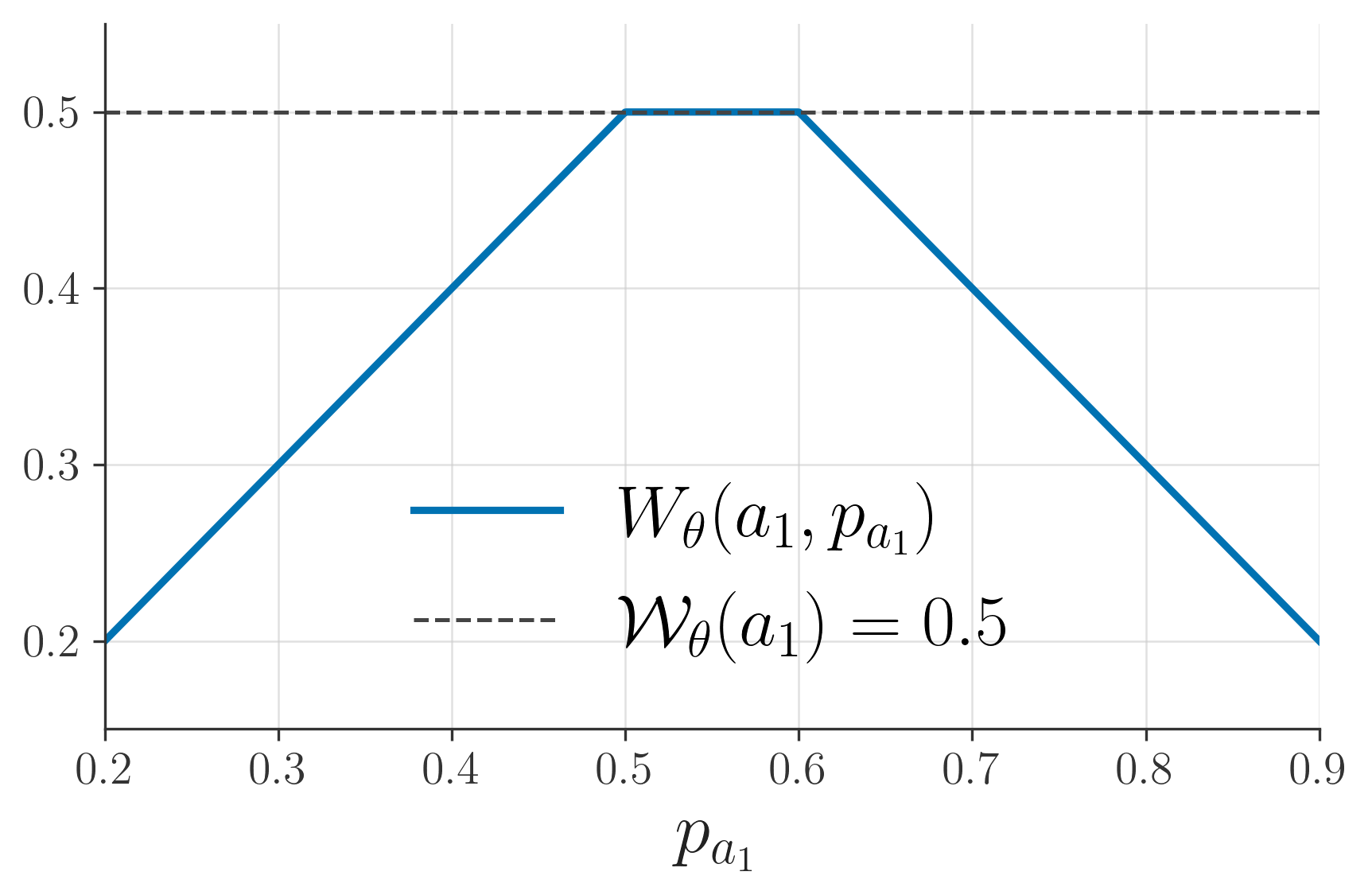}
  \caption{Experiment $\tilde{\mathcal{a}}_1$. }
  \label{fig:a1_W}
\end{subfigure}
\hfill
\begin{subfigure}{0.32\textwidth}
  \centering
  \includegraphics[width=\linewidth]{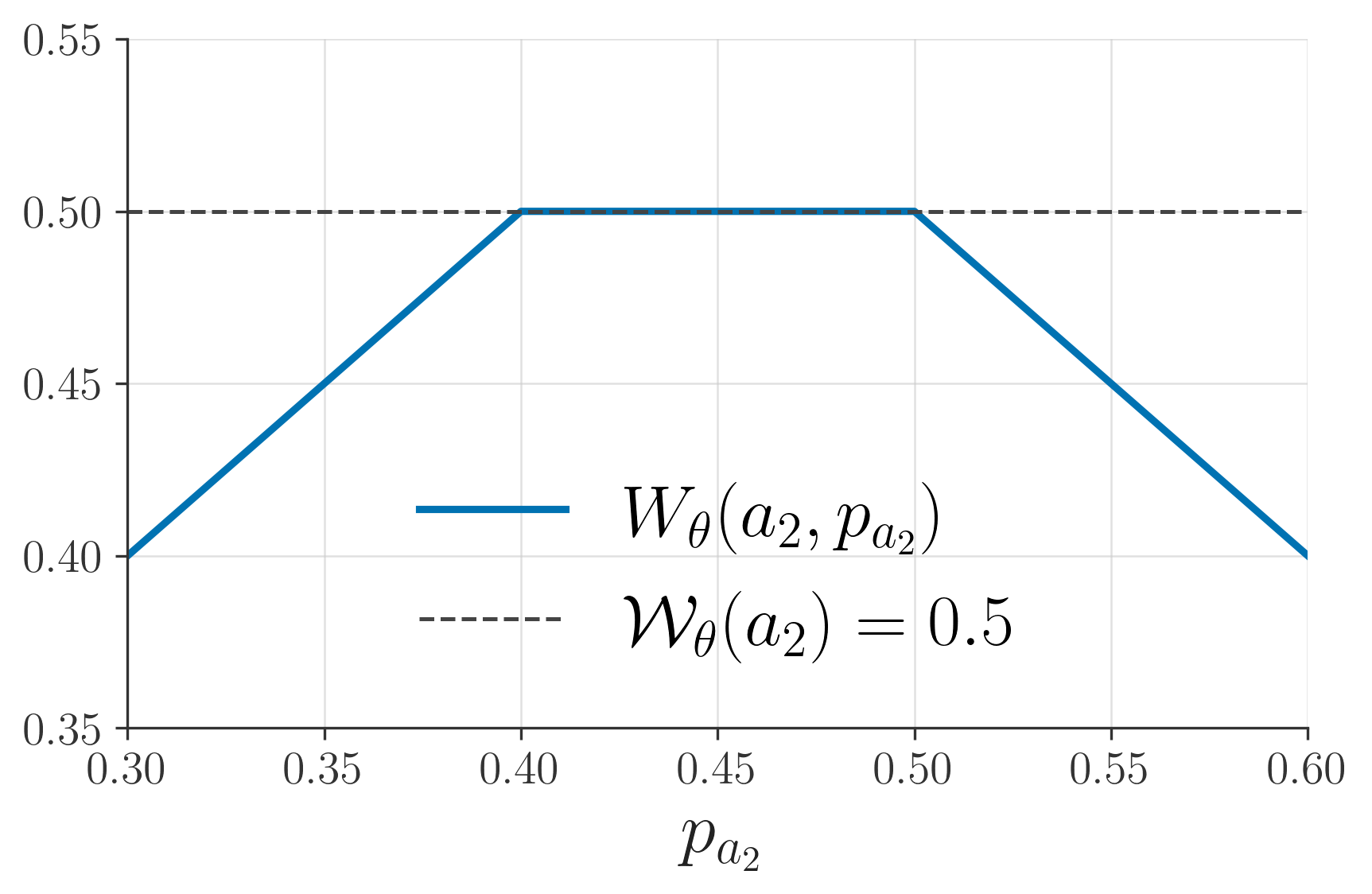}
  \caption{Experiment $\tilde{\mathcal{a}}_2$.}
  \label{fig:a2_W}
\end{subfigure}
\hfill
\begin{subfigure}{0.32\textwidth}
  \centering
  \includegraphics[width=\linewidth, height=3.5cm]{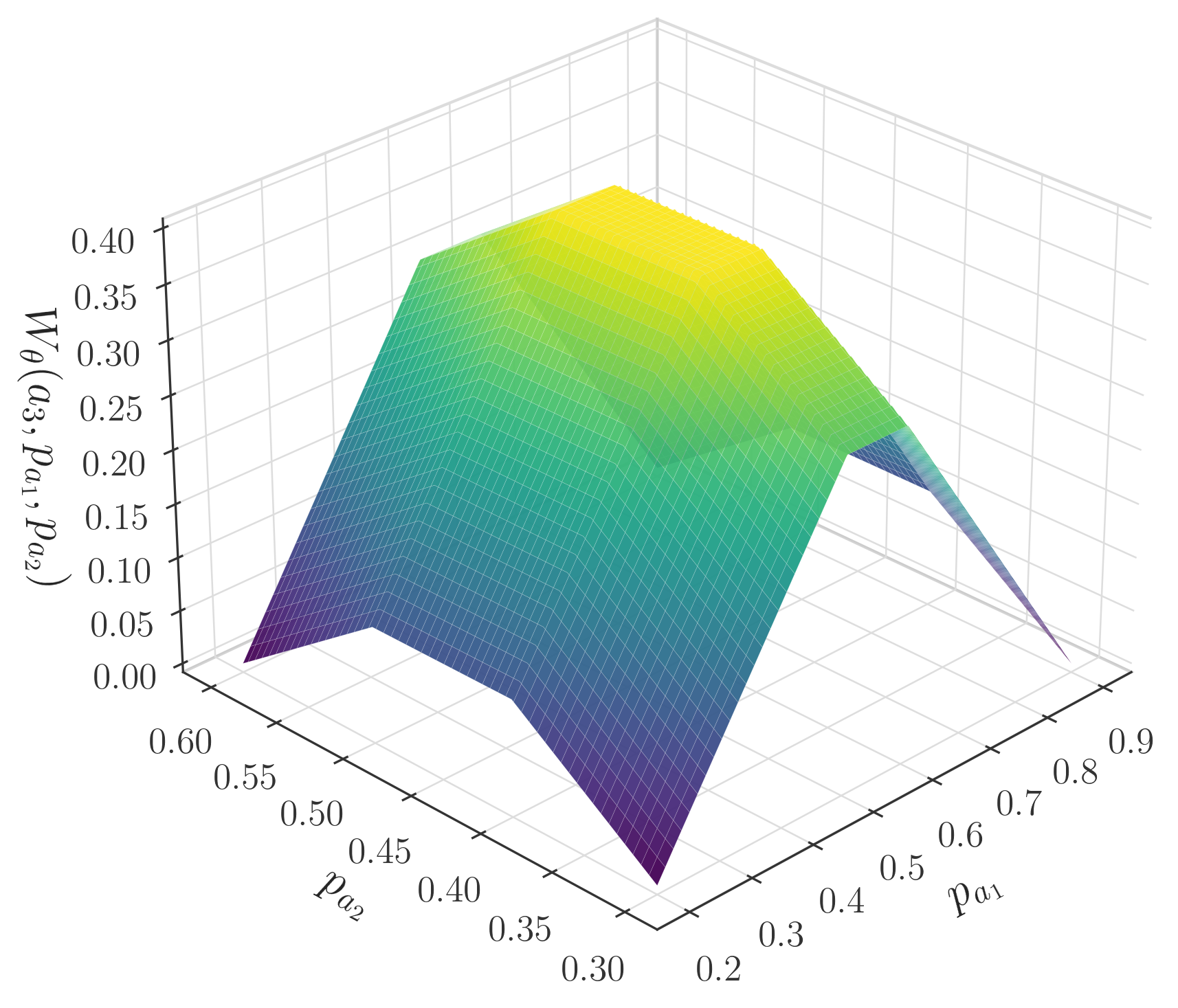}
  \caption{Experiment $\mathcal{a}_3= \{\tilde{\mathcal{a}}_1, \tilde{\mathcal{a}}_2\}$.}
  \label{fig:a3_W}
\end{subfigure}
\caption{Example~\ref{ex:potency}. The realized bound width $W_\theta(\mathcal{a}, p_\mathcal{a}):= U_\theta(\mathcal{a},p_\mathcal{a})-L_\theta(\mathcal{a},p_\mathcal{a})$ is shown as a function of $p_\mathcal{a}$.}
\label{fig:a_W}
\end{figure}

\begin{example}[Cartesian product decomposition]
\label{ex:cartesian}
Consider the chain $A \to B \to C \to D$ with confounders $R_1$ on $\{A,B\}$ and $R_2$ on $\{C,D\}$ (Figure~\ref{fig:two_district_chain}). Structural equations: $A = f_A(R_1)$, $B = f_B(A, R_1)$, $C = f_C(B, R_2)$, $D = f_D(C, R_2)$.
The districts are $\bs{D}_1=\{R_1,A, B\}$ and $\bs{D}_2=\{R_2, C, D\}$ with response-types $\bs{R}_{\bs{D}_1}=\{R_1\}$ and $\bs{R}_{\bs{D}_2}=\{R_2\}$.
\begin{figure}[H]
\centering
\begin{tikzpicture}[->, node distance=2cm, thick]
  \node[draw, circle] (A) {\( A \)};
  \node[draw, circle, right of=A] (B) {\( B \)};
  \node[draw, circle, right of=B] (C) {\( C \)};
  \node[draw, circle, right of=C] (D) {\( D \)};
  \node[draw, circle, dashed, above of=A] (R1) {\( R_1 \)};
  \node[draw, circle, dashed, above of=C] (R2) {\( R_2 \)};
  \draw (A)--(B); \draw (B)--(C); \draw (C)--(D);
  \draw (R1)--(A); \draw (R1)--(B); \draw (R2)--(C); \draw (R2)--(D);
\end{tikzpicture}
\caption{Chain graph with two districts and confounders $R_1$, $R_2$.}
\label{fig:two_district_chain}
\end{figure}
We label response-type values by the function they encode.
For $A$ (no parents), $a_0$ and $a_1$ denote the constants
$A=0$ and $A=1$. For $B$ with parent $A$, the four functions
$\{0,1\} \to \{0,1\}$ are labeled following the response-type
terminology of the literature \citep{AngristResType, GreenlandResTypes}:
\begin{table}[H]
\centering
\begin{tabular}{lcccc}
\toprule
 & $b_{\mathrm{nev}}$ & $b_{\mathrm{com}}$ & $b_{\mathrm{def}}$ & $b_{\mathrm{alw}}$ \\
 & (never-taker) & (complier) & (defier) & (always-taker) \\
\midrule
$f_B(A{=}0,\,\cdot)$ & $0$ & $0$ & $1$ & $1$ \\
$f_B(A{=}1,\,\cdot)$ & $0$ & $1$ & $0$ & $1$ \\
\bottomrule
\end{tabular}
\end{table}
The behaviors of $C$ (w.r.t.\ parent $B$) and $D$ (w.r.t.\ parent $C$) are labeled analogously.
Fix $\bs{v} = (A{=}1, B{=}0, C{=}1, D{=}0)$:
\begin{align*}
h_1(\bs{v}) &= \big\{ r_1 : f_A(r_1) = 1 \;\land\; f_B(1, r_1) = 0 \big\} = \big\{ (a_1, b_{nev}),\; (a_1, b_{def}) \big\}, \\
h_2(\bs{v}) &= \big\{ r_2 : f_C(0, r_2) = 1 \;\land\; f_D(1, r_2) = 0 \big\}
= \big\{ (c_{def}, d_{nev}),\; (c_{def}, d_{def}),\; (c_{alw}, d_{nev}),\; (c_{alw}, d_{def}) \big\}.
\end{align*}
By Lemma~\ref{lemma:cartesian}, $h(\bs{v}) = h_1(\bs{v}) \times h_2(\bs{v})$ with $|h(\bs{v})| = 2 \times 4 = 8$. For verification: $\bs{r}=(a_1, b_{nev}, c_{def}, d_{nev})$ yields $f_A(a_1)=1$, $f_B(1,b_{nev})=0$, $f_C(0,c_{def})=1$, $f_D(1,d_{nev})=0$ --- consistent with $\bs{v}$ and therefore $\bs{r} \in h(\bs{v})$. 
\end{example}

\begin{example}[Reducing observational constraints via c-factors in Figure~\ref{fig:complexDAG}]
\label{ex:cfactor_complexDAG}
Under topological order $A \prec B \prec X \prec C \prec D \prec M \prec Y$, the c-factors are:
\begin{align*}
Q_{1}(\bs{v}) &= P(a)\, P(b \mid a)\, P(c \mid a, b, x), \\
Q_{2}(\bs{v}) &= P(x \mid a, b)\, P(m \mid a, b, x, c, d)\, P(y \mid a, b, x, c, d, m), \\
Q_{3}(\bs{v}) &= P(d \mid a, b, x, c).
\end{align*}
By Lemma~\ref{th-all_constr_on_Rtheta_in_Dtheta_c-factor}, the full observational constraint $P(\bs{v}) = \sum_{\bs{r}\in h(\bs{v})} P(r_1,r_2,r_3,r_4)$ reduces to:
$$Q_{2}(\bs{v}) = \sum_{(r_1, r_2) \in h_{2}(\bs{v})} P(r_1, r_2).$$
$R_3$ and $R_4$ have been divided out. The base program $\mathcal{P}_\emptyset$ therefore optimizes over $|supp(R_1)| \cdot |supp(R_2)|$ decision variables rather than $\prod_{i=1}^4 |supp(R_i)|$.
\end{example}

\begin{example}[Step-by-step trace of Algorithm~\ref{alg:all_paths_intercepted}]
\label{ex:pruning}
We run \textsc{AllPathsIntercepted} on Figure~\ref{fig:complexDAG} with $\bs{R}_\theta^* = \{R_1, R_2\}$, $\bs{W}=\{D\}$, $\bs{Z}=\{C\}$.
Removing $\bs{Z}$ yields the graph in Figure~\ref{fig:complexDAG_minusC} (edges to/from $C$ deleted).

\begin{figure}[H]
\centering
\begin{tikzpicture}[->, node distance=1.8cm, thick]
  \node[draw, circle] (A) {\( A \)};
  \node[draw, circle, right of=A] (B) {\( B \)};
  \node[draw, circle, right of=B] (X) {\( X \)};
  \node[draw, circle, right of=X] (M) {\( M \)};
  \node[draw, circle, right of=M] (Y) {\( Y \)};
  \node[draw, circle, below of=X, yshift=-0.3cm] (C) {\( C \)};
  \node[draw, circle, right of=C] (D) {\( D \)};
  \node[draw, circle, dashed, above of=B, yshift=0.3cm] (R3) {\( R_3 \)};
  \node[draw, circle, dashed, above of=X, xshift=0.9cm, yshift=0.3cm] (R1) {\( R_1 \)};
  \node[draw, circle, dashed, above of=M, xshift=0.9cm, yshift=0.3cm] (R2) {\( R_2 \)};
  \node[draw, circle, dashed, below of=B, xshift=0.0cm, yshift=-0.3cm] (R4) {\( R_4 \)};
  \draw (A) -- (B); \draw (B) -- (X); \draw (X) -- (M); \draw (M) -- (Y);
  \draw (D) -- (M);
  \draw[dashed] (R3) -- (A); \draw[dashed] (R3) -- (B);
  \draw[dashed] (R1) -- (X); \draw[dashed] (R1) -- (M);
  \draw[dashed] (R2) -- (M); \draw[dashed] (R2) -- (Y);
  \draw[dashed] (R4) -- (B);
\end{tikzpicture}
\caption{Figure~\ref{fig:complexDAG} with all edges to/from $C$ removed (i.e., after deleting $\bs{Z}=\{C\}$).}
\label{fig:complexDAG_minusC}
\end{figure}

After initialization: $\texttt{visited} = \{R_1, R_2\}$, $\texttt{queue} = (R_1, R_2)$.

\textit{Iteration 1}: $\texttt{current} = R_1$. Not in $\bs{W}$. Children in $\bs{E}'$: $X$, $M$ (unvisited). Update $\texttt{visited} = \{R_1, R_2, X, M\}$, $\texttt{queue} = (R_2, X, M)$.

\textit{Iteration 2}: $\texttt{current} = R_2$. Unvisited child: $Y$. Update $\texttt{visited} = \{R_1, R_2, X, M, Y\}$, $\texttt{queue} = (X, M, Y)$.

\textit{Iterations 3--5}: $X$ has no unvisited children; $M$'s child $Y$ already visited; $Y$ has no children.

\noindent Return \texttt{True}. No path from $\bs{R}_\theta^*$ to $D$ exists in $\bs{E}'$. By Corollary~\ref{th-experiments_with_intercepted_paths_useless}, this intervention is useless.
\end{example}

\begin{example}[Pruning experiments in Figure~\ref{fig:complexDAG}]
Recall the graph from Figure~\ref{fig:complexDAG}. Every observed variable $V$ is binary and can take values in $\{v, v'\}$. Our query is $\theta = P(y|do(m))$. Therefore, $\bs{R}^*_\theta = \{R_1, R_2\}$. We define:
\begin{itemize}
    \item $\tilde{\mathcal{a}}_1$: $C,D|do(x)$, 
    \item $\tilde{\mathcal{a}}_2$: $Y,M,X|do(a,b)$,
    \item $\tilde{\mathcal{a}}_3$: $Y,M|do(x)$,
    \item $\tilde{\mathcal{a}}_4$: $Y, M|do(x')$,
    \item $\tilde{\mathcal{a}}_5$: $B|do(a)$,
    \item $\tilde{\mathcal{a}}_6$: $d_{c'}, d'_c$ --- that is, testing for monotonicity along $C\to D$.
\end{itemize}
Our candidate experiments are $\mathcal{A}=\{ \tilde{\mathcal{a}}_1,\dots,\tilde{\mathcal{a}}_6\}$ on which we run Algorithm~\ref{alg:get_useless_experiments}. Since $P(\tilde{\mathcal{a}}_2)$ is ID and $\tilde{\mathcal{a}}_1, \tilde{\mathcal{a}}_5, \tilde{\mathcal{a}}_6$ fulfill the criterion from Corollary~\ref{th-experiments_with_intercepted_paths_useless}, we obtain the set of useless experiments $\mathcal{A}^\dagger = \{\tilde{\mathcal{a}}_1,\tilde{\mathcal{a}}_2,\tilde{\mathcal{a}}_5,\tilde{\mathcal{a}}_6\}$.

\end{example}

\section{Algorithmic Details}
\label{app:alg-details}
\subsection{Constructing the Polynomial Program}
\label{app:program}

\textbf{Canonicalization.} We assume throughout that the input ADMG is 
canonicalized---no disturbance has a parent in $\mathcal{G}$, and no 
disturbance influences a strict subset of another's children. This is 
without loss of generality for the full data law 
\citep[Proposition~1]{duarte2023automated}; see Appendix~B.1 of that 
paper for the canonicalization procedure.

\textbf{Constructing $\mathcal{P}_\emptyset$.} The base program is 
obtained by invoking Algorithm~2 of \citet{duarte2023automated} with 
inputs $(\mathcal{G}, P(\bs{V}), \emptyset, supp(\bs{V}), \theta)$.

\textbf{Constructing $\mathcal{P}_{\mathcal{a}}$.} Given the base program, 
each experimental statement is polynomialized via the same procedure 
applied to the equality $\tilde{\mathcal{a}} = p_{\tilde{\mathcal{a}}}$:

\begin{algorithm}[H]
\caption{Formulating $\mathcal{P}_{\mathcal{a}}$}
\label{alg:formulation}
\begin{algorithmic}[1]
\Require Base Program $\mathcal{P}_\emptyset = (\mathcal{T}, \mathcal{C}_\emptyset)$, Experiments $\mathcal{a} \subseteq \mathcal{A}$
\Ensure Program $\mathcal{P}_{\mathcal{a}} = (\mathcal{T}, \mathcal{C}_{\mathcal{a}})$
\State $\mathcal{C}_{\mathcal{a}} \gets \mathcal{C}_\emptyset$ \Comment{Initialize with base constraints}
\For{$\tilde{\mathcal{a}} \in \mathcal{a}$}
    \State $c_{\tilde{\mathcal{a}}} \gets \text{duarte.polynomialize}(\tilde{\mathcal{a}} = p_{\tilde{\mathcal{a}}})$
    \State $\mathcal{C}_{\mathcal{a}} \gets \mathcal{C}_{\mathcal{a}} \cup \{c_{\tilde{\mathcal{a}}}\}$
\EndFor
\State \Return $\mathcal{P}_{\mathcal{a}} = (\mathcal{T}, \mathcal{C}_{\mathcal{a}})$
\end{algorithmic}
\end{algorithm}

\subsection{Evaluating Potency}
\label{subsec:evaluating-potency}
For a fixed experiment set $\mathcal{a}$, potency evaluation requires the
worst-case post-experimental width
\[
\mathcal{W}_\theta(\mathcal{a})
=
\max_{p_{\mathcal{a}}}
\left[
U_\theta(\mathcal{a},p_{\mathcal{a}})
-
L_\theta(\mathcal{a},p_{\mathcal{a}})
\right].
\]
Unfolding the inner bounds as polynomial programs over independent
optimizers $P^U$ and $P^L$, both required to satisfy the observational
constraints and to reproduce the experimental outcome $p_{\mathcal{a}}$,
\[
\mathcal{W}_\theta(\mathcal{a})
=
\max_{p_{\mathcal{a}}}\;
\max_{\substack{P^U \in \mathcal{C}_\emptyset \\ f(P^U)=p_{\mathcal{a}}}}\;
\max_{\substack{P^L \in \mathcal{C}_\emptyset \\ f(P^L)=p_{\mathcal{a}}}}
\bigl[\mathcal{T}(P^U) - \mathcal{T}(P^L)\bigr].
\]
The variable $p_{\mathcal{a}}$ appears only as the common value of
$f(P^U)$ and $f(P^L)$. We can therefore drop it as a decision variable
and replace its two binding constraints with the single equality
$f(P^U) = f(P^L)$. Whatever value the two optimizers agree on \emph{is}
the experimental outcome; no separate enumeration of $p_{\mathcal{a}}$
is needed.

This also resolves the question of feasibility for $p_{\mathcal{a}}$.
In the outer-loop view one would first compute bounds on
$p_{\mathcal{a}}$ under $\mathcal{C}_\emptyset$ and grid over them; here
those bounds are enforced implicitly, since the existence of $P^U, P^L
\in \mathcal{C}_\emptyset$ producing a common value $p$ is exactly the
definition of $p$ being feasible. See Algorithm~\ref{alg:evaluate-potency}.

\begin{algorithm}[H]
\caption{\textsc{EvaluatePotency}}
\label{alg:evaluate-potency}
\begin{algorithmic}[1]
\Require Base program $\mathcal{P}_\emptyset=(\mathcal{T},\mathcal{C}_\emptyset)$, experiment set $\mathcal{a}$
\Ensure Potency $pot_\theta(\mathcal{a})$
\State $L_\theta \gets \min \mathcal{T}$ subject to $\mathcal{C}_\emptyset$
\State $U_\theta \gets \max \mathcal{T}$ subject to $\mathcal{C}_\emptyset$
\State $\mathcal{W}_\theta(\emptyset) \gets U_\theta-L_\theta$
\State Create two copies of the response-type probabilities:
\[
\{P^U(\bs{r}) : \bs{r}\in supp(\bs{R})\},
\qquad
\{P^L(\bs{r}) : \bs{r}\in supp(\bs{R})\}.
\]
\State Let $\mathcal{C}_\emptyset^U$ and $\mathcal{C}_\emptyset^L$ denote
copies of $\mathcal{C}_\emptyset$ using $P^U(\bs r)$ and $P^L(\bs r)$,
respectively.
\State Form the coupled constraint set
\[
\mathcal{C}^{\mathrm{cpl}}_{\mathcal{a}}
=
\mathcal{C}_\emptyset^U
\cup
\mathcal{C}_\emptyset^L
\cup
\left\{
f_{\tilde{\mathcal{a}}}(\bs{R}^U)
=
f_{\tilde{\mathcal{a}}}(\bs{R}^L)
:
\tilde{\mathcal{a}}\in\mathcal{a}
\right\}.
\]
\State Compute
\[
\mathcal{W}_\theta(\mathcal{a})
\gets
\max
\left[
\mathcal{T}(\bs{R}^U)-\mathcal{T}(\bs{R}^L)
\right]
\quad
\text{s.t.}
\quad
\mathcal{C}^{\mathrm{cpl}}_{\mathcal{a}}.
\]
\State \Return $\mathcal{W}_\theta(\emptyset)-\mathcal{W}_\theta(\mathcal{a})$
\end{algorithmic}
\end{algorithm}
The formulation is exact at the mathematical level; numerical solutions
are subject to the tolerances of the polynomial optimizer.

\subsection{AllPathsIntercepted Algorithm}\label{app:algorithms}

\begin{algorithm}[h]
\caption{AllPathsIntercepted}
\label{alg:all_paths_intercepted}
\begin{algorithmic}
\Require Graph $\mathcal{G} = (\bs{V}, \bs{R}, \bs{E})$, $\bs{R}_\theta^*$, Outcomes $\bs{W}$, Interventions $\bs{Z}$
\Ensure \texttt{True} iff every directed path $\bs{R}_\theta^* \to \bs{W}$ passes through $\bs{Z}$
\State $\bs{E}' \gets \{ (u,v) \in \bs{E} : u \notin \bs{Z} \land v \notin \bs{Z} \}$ \Comment{Remove edges involving $\bs{Z}$}
\State $\texttt{visited} \gets \bs{R}_\theta^*$; $\quad\texttt{queue} \gets \texttt{list}(\bs{R}_\theta^*)$
\While{$\texttt{queue} \neq \emptyset$}
    \State $\texttt{current} \gets \texttt{queue.pop}()$
    \If{$\texttt{current} \in \bs{W}$} \Return \texttt{False} \Comment{Unintercepted path found} \EndIf
    \For{$c \in \texttt{children}(\texttt{current}, \bs{E}')$ with $c \notin \texttt{visited}$}
        \State $\texttt{visited} \gets \texttt{visited} \cup \{c\}$; $\quad\texttt{queue.append}(c)$
    \EndFor
\EndWhile
\State \Return \texttt{True} \Comment{All paths intercepted}
\end{algorithmic}
\end{algorithm}

\textbf{Runtime of Algorithm~\ref{alg:all_paths_intercepted}}
\label{proof:runtime}

We analyze each component of Algorithm~\ref{alg:all_paths_intercepted}:

\textit{Initialization:}
Constructing $\bs{E}'$ requires iterating through all edges and performing two membership checks per edge, yielding $\mathcal{O}(|\bs{E}|)$. Initializing \texttt{visited} and \texttt{queue} with $\bs{R}_\theta^*$ takes $\mathcal{O}(|\bs{R}^*_\theta|) \leq \mathcal{O}(|\bs{V}|)$ (this inequality holds in canonical DAGs).

\textit{BFS Loop:}
Each node in $\bs{V} \setminus \bs{Z}$ is added to \texttt{visited} at most once and consequently to \texttt{queue} at most once. Therefore, the while-loop executes at most $|\bs{V} \setminus \bs{Z}| \leq |\bs{V}|$ iterations, contributing $\mathcal{O}(|\bs{V}|)$ for dequeue operations and membership checks.

For the inner loop, each edge $(u,v) \in \bs{E}'$ is examined at most once---when node $u$ is dequeued and we iterate over its children. Since every edge is examined at most once across the entire execution, the total work by the inner loop is $\mathcal{O}(|\bs{E}'|) \subseteq \mathcal{O}(|\bs{E}|)$.

\textit{Total:}
$$\underbrace{\mathcal{O}(|\bs{V}| + |\bs{E}|)}_{\text{init.}} + \underbrace{\mathcal{O}(|\bs{V}|)}_{\text{while-loop}} + \underbrace{\mathcal{O}(|\bs{E}|)}_{\text{edge exams.}} = \mathcal{O}(|\bs{V}| + |\bs{E}|). \qedhere$$

\subsection{Solving the Max-Potency Problem}
\label{subsec:solving-max-potency}

We use \textsc{EvaluatePotency} as a subroutine to solve the
budget-constrained max-potency problem exactly. Since the problem is
NP-hard (Appendix~\ref{app:np-hardness}), the algorithm is exponential
in the worst case.

Algorithm~\ref{alg:get_useless_experiments} certifies a set
$\mathcal{A}^\dagger$ of individually useless experiments. Let
$\mathcal{A}^\dagger_{\mathrm{ID}} \subseteq \mathcal{A}^\dagger$
denote those pruned by the ID check and
$\mathcal{A}^\dagger_{\mathrm{Int}} = \mathcal{A}^\dagger \setminus
\mathcal{A}^\dagger_{\mathrm{ID}}$ those pruned by interception.
By Corollary~\ref{thm-id_inertness},
$\mathcal{A}^\dagger_{\mathrm{ID}}$ can be removed from the search
entirely. The algorithm then enumerates all subsets of
$\mathcal{A} \setminus \mathcal{A}^\dagger_{\mathrm{ID}}$, skipping
singletons in $\mathcal{A}^\dagger_{\mathrm{Int}}$.

\begin{algorithm}[H]
\caption{\textsc{SolveMaxPotency}}
\label{alg:solve-max-potency}
\begin{algorithmic}[1]
\Require Graph $\mathcal G$, query $\theta$, base program
$\mathcal P_\emptyset=(\mathcal T,\mathcal C_\emptyset)$, candidate
experiments $\mathcal A$, cost function
$c:2^{\mathcal A}\to\mathbb R_+$, budget $B$
\Ensure Optimal experiment set $\mathcal a^\star$ and potency
value $v^\star$
\State $\mathcal A^\dagger
\gets
\textsc{GetUselessExperiments}(\mathcal G,\theta,\mathcal A)$
\State Partition $\mathcal{A}^\dagger$ into
$\mathcal{A}^\dagger_{\mathrm{ID}}$ and
$\mathcal{A}^\dagger_{\mathrm{Int}}$
\State $\mathcal a^\star \gets \emptyset$;\;
$v^\star \gets 0$
\For{$\mathcal a\subseteq \mathcal{A} \setminus
\mathcal{A}^\dagger_{\mathrm{ID}}$}
    \If{$|\mathcal a| = 1$ \textbf{and}
    $\mathcal{a} \subseteq \mathcal{A}^\dagger_{\mathrm{Int}}$}
        \State \textbf{continue}
    \EndIf
    \If{$c(\mathcal a)>B$}
        \State \textbf{continue}
    \EndIf
    \State $v \gets
    \textsc{EvaluatePotency}(\mathcal P_\emptyset,\mathcal a)$
    \If{$v>v^\star$}
        \State $v^\star \gets v$
        \State $\mathcal a^\star \gets \mathcal a$
    \EndIf
\EndFor
\State \Return $\mathcal a^\star, v^\star$
\end{algorithmic}
\end{algorithm}

The algorithm is exact provided that \textsc{EvaluatePotency}
(Algorithm~\ref{alg:evaluate-potency}) globally solves the coupled
polynomial program. Its runtime is
$\mathcal{O}(2^{|\mathcal{A}| - |\mathcal{A}^\dagger_{\mathrm{ID}}|}
\cdot T_{\mathrm{pot}})$,
where $T_{\mathrm{pot}}$ denotes the cost of one potency evaluation.
Corollary~\ref{thm-id_inertness} reduces the exponent from $|\mathcal{A}|$ to
$|\mathcal{A}| - |\mathcal{A}^\dagger_{\mathrm{ID}}|$.

\section{Omitted Proofs}
\label{sec:proofs}

\paragraph{Feasible sets.}
We define two types of feasible sets used in the proofs below. The \emph{observational feasible set} $\mathcal{F}_\emptyset := \{ \bs{r} \in supp(\bs{R}): \text{all observational constraints from } P(\bs{V}) \text{ are satisfied}\}$ is the feasible set of the base program $\mathcal{P}_\emptyset$ as constructed by \citet{duarte2023automated}. For an experiment $\mathcal{a}$ with observed outcome $p_\mathcal{a}$, the \emph{experimental feasible set} is $\mathcal{F}_\mathcal{a}(p_\mathcal{a}) := \{ \bs{r} \in supp(\bs{R}): f_\mathcal{a}(\bs{r}) = p_\mathcal{a} \}$. The combined feasible set is $\mathcal{F}_\emptyset \cap \mathcal{F}_\mathcal{a}(p_\mathcal{a})$. 

\subsection{Proof of Lemma~\ref{lemma-pot_greater_zero}}
\label{proof:pot_greater_zero}

\begin{proof}
Let $p_\mathcal{a}\in\prod_{\tilde{\mathcal{a}} \in \mathcal{a}}[l_{p_{\tilde{\mathcal{a}}}}, u_{p_{\tilde{\mathcal{a}}}}]$. The combined feasible set $\mathcal{F}_\emptyset \cap \mathcal{F}_\mathcal{a}(p_\mathcal{a}) \subseteq \mathcal{F}_\emptyset$ by definition of intersection. From optimization theory, restricting to a smaller feasible set can only decrease the maximum and increase the minimum. Hence $U_\theta(\mathcal{a}, p_\mathcal{a}) \le U_\theta$ and $L_\theta \le L_\theta(\mathcal{a}, p_\mathcal{a})$.
It follows $\forall p_\mathcal{a}$:
$$
0 \leq U_\theta- U_\theta(\mathcal{a}, p_\mathcal{a}) \;\land\; L_\theta - L_\theta(\mathcal{a}, p_\mathcal{a}) \leq 0 \implies L_\theta - L_\theta(\mathcal{a}, p_\mathcal{a}) \leq U_\theta- U_\theta(\mathcal{a}, p_\mathcal{a})
$$$$\implies U_\theta(\mathcal{a}, p_\mathcal{a}) - L_\theta(\mathcal{a}, p_\mathcal{a}) \leq U_\theta- L_\theta =\mathcal{W}_\theta
$$
Since this holds for all $p_\mathcal{a}$, we have $\mathcal{W}_\theta(\mathcal{a}) = \underset{\substack{p_{\mathcal{a}}}}{\max} \bigl(U_\theta(\mathcal{a}, p_{\mathcal{a}}) - L_\theta(\mathcal{a}, p_{\mathcal{a}})\bigr) \leq \mathcal{W}_\theta$, and hence $pot_\theta(\mathcal{a}) \geq 0$.
\end{proof}

\subsection{Proof of Lemma~\ref{lemma_pot0_implies_boundsEqual}}
\label{proof:pot0_implies_boundsEqual}

\begin{proof}
\textbf{($\Rightarrow$)} Unpacking the definition yields
    $$pot_\theta(\mathcal{a}) =0 \implies \mathcal{W}_\theta(\mathcal{a}) = \mathcal{W}_\theta \implies \underset{\substack{p_{\mathcal{a}}\in\prod_{\tilde{\mathcal{a}} \in \mathcal{a}}[l_{p_{\tilde{\mathcal{a}}}}, u_{p_{\tilde{\mathcal{a}}}}]}}{\max} \bigl(U_\theta(\mathcal{a}, p_{\mathcal{a}}) - L_\theta(\mathcal{a}, p_{\mathcal{a}})\bigr) = U_\theta- L_\theta.$$
    Let $p_\mathcal{a}^\dagger = \underset{\substack{p_{\mathcal{a}}\in\prod_{\tilde{\mathcal{a}} \in \mathcal{a}}[l_{p_{\tilde{\mathcal{a}}}}, u_{p_{\tilde{\mathcal{a}}}}]}}{\operatorname{arg\,max}} \ U_\theta(\mathcal{a}, p_\mathcal{a}) - L_\theta(\mathcal{a}, p_\mathcal{a})$:
    $$U_\theta(\mathcal{a}, p_\mathcal{a}^\dagger) - L_\theta(\mathcal{a}, p_\mathcal{a}^\dagger) = U_\theta- L_\theta \implies U_\theta- U_\theta(\mathcal{a}, p_\mathcal{a}^\dagger) = L_\theta - L_\theta(\mathcal{a}, p_\mathcal{a}^\dagger).$$
    As we saw in the proof of Lemma~\ref{lemma-pot_greater_zero}, $L_\theta \leq L_\theta(\mathcal{a}, p_\mathcal{a}^\dagger)$ and $U_\theta(\mathcal{a}, p_\mathcal{a}^\dagger) \leq U_\theta$, which implies that\footnote{Since $U_\theta- U_\theta(\mathcal{a}, p_\mathcal{a}^\dagger) \geq 0$ and $L_\theta - L_\theta(\mathcal{a}, p_\mathcal{a}^\dagger) \leq 0$, their equality forces both to be zero.}
    $$U_\theta- U_\theta(\mathcal{a}, p_\mathcal{a}^\dagger) =0 \;\land\; L_\theta - L_\theta(\mathcal{a}, p_\mathcal{a}^\dagger)=0 \implies U_\theta(\mathcal{a}, p_\mathcal{a}^\dagger)= U_\theta\;\land\; L_\theta(\mathcal{a}, p_\mathcal{a}^\dagger) = L_\theta.$$

\textbf{($\Leftarrow$)} Suppose there exists $p_\mathcal{a}^\dagger \in \prod_{\tilde{\mathcal{a}} \in \mathcal{a}}[l_{p_{\tilde{\mathcal{a}}}}, u_{p_{\tilde{\mathcal{a}}}}]$ with $U_\theta(\mathcal{a}, p_\mathcal{a}^\dagger) = U_\theta$ and $L_\theta(\mathcal{a}, p_\mathcal{a}^\dagger) = L_\theta$. Then $$U_\theta(\mathcal{a},p_\mathcal{a}^\dagger) - L_\theta(\mathcal{a}, p_\mathcal{a}^\dagger) = U_\theta- L_\theta = \mathcal{W}_\theta.$$ Since $\mathcal{W}_\theta(\mathcal{a})$ is the maximum over all $p_\mathcal{a}$, it must hold that $\mathcal{W}_\theta(\mathcal{a}) \geq \mathcal{W}_\theta$. By Lemma~\ref{lemma-pot_greater_zero}, $\mathcal{W}_\theta(\mathcal{a}) \leq \mathcal{W}_\theta$, hence $\mathcal{W}_\theta(\mathcal{a}) = \mathcal{W}_\theta$ and $pot_\theta(\mathcal{a}) = 0$.
\end{proof}

\subsection{Proof of Lemma~\ref{lemma:cartesian} (Cartesian Product Decomposition)}
\label{proof:cartesian}

\begin{proof}
We start by applying Definition~\ref{def:compatible-restypes} to $h(\bs{v})$, simplified for the case of $|\frak{C}|=1$:

$$
h(\bs{v})=\{\bs{r} : \forall V \in \bs{V}, F_V(\bs{r})=v\}.
$$
Since $\bs{X}=\emptyset$ (no interventions), $F_V(\bs{r})$ reduces to the structural equation with all parent values recursively fixed by $\bs{v}$:
$$F_V(\bs{r})=f_V\!\left(\bigwedge_{P \in \bs{pa}(V)} \underbrace{f(f(\dots f(\bs{r}),\bs{r}),\bs{r})}_{\text{equals elements of }\bs{v}}, \bs{r}\right).$$
Using Definition~\ref{def:district_compatible_restype} to replace the recursive evaluation:
$$
h(\bs{v})=\{\bs{r} : \forall V \in \bs{V},\; f_V (\bs{pa}_{\bs{v}}, \bs{r})=v \}.$$

Rewriting $\forall V \in \bs{V}$ as $\forall \bs{D}_k \in \bs{D}, \forall V \in \bs{D}_k \cap \bs{V}$ and partitioning $\bs{r}$ into per-district vectors $\bs{r}_{1},\dots,\bs{r}_{|\bs{D}|}$:
$$
h(\bs{v})=\{(\bs{r}_1, \dots, \bs{r}_{|\bs{D}|}) : \forall \bs{D}_k \in \bs{D}, \forall V \in \bs{D}_k \cap \bs{V},\; f_V (\bs{pa}_{\bs{v}}, \bs{r})=v \}.
$$
Since $f_V(\bs{pa}_{\bs{v}}, \bs{r})=f_V(\bs{pa}_{\bs{v}}, \bs{r}_{k})$ whenever $V \in \bs{D}_k$:
$$h(\bs{v})=\{(\bs{r}_1, \dots, \bs{r}_{|\bs{D}|}) : \forall \bs{D}_k \in \bs{D}, \forall V \in \bs{D}_k \cap \bs{V},\; f_V(\bs{pa}_{\bs{v}}, \bs{r}_{k}) =v \}.$$
The predicate is a conjunction where each conjunct depends only on the response-type configuration $\bs{r}_{k}$ of one district $\bs{D}_k$. By Lemma~\ref{lemma:cartesian-filter} (Appendix~\ref{sec:helper_lemmas}), we can factorize this set:
$$h(\bs{v})=\prod_{\bs{D}_k\in \bs{D}} \{\bs{r}_k : \forall V \in \bs{V}\cap \bs{D}_k,\; f_V(\bs{pa}_{\bs{v}}(V), \bs{r}_k)=v \} = \prod_{\bs{D}_k\in \bs{D}} h_k(\bs{v}).$$
\end{proof}

\subsection{Proof of Lemma~\ref{lemma:district_cfactor}}
\label{proof:district_cfactor}

\begin{proof}
\textbf{Single district.}
By Eq.~(26) from \cite{tianPearlQfactor}, the c-factor for district $\bs{D}_k$ is
$$Q_k(\bs{v})=\sum_{\bs{u}_k \in supp(\bs{U}_k)} \prod_{V_i \in \bs{V}_k} P(v_i \mid \bs{pa}_{\bs{v}}(V_i),\bs{u}_k)\, P(\bs{u}_k),$$
where $\bs{U}_k$ denotes the latent confounders in $\bs{D}_k$. Replacing $\bs{U}_k$ by our discrete $\bs{R}_k$:
$$Q_k(\bs{v})=\sum_{\bs{r}_k} \prod_{V_i \in \bs{V}_k} P(v_i \mid \bs{pa}_{\bs{v}}(V_i),\bs{r}_k)\, P(\bs{r}_k).$$
Each realization $\bs{r}_k$ deterministically fixes the structural equations, so the conditional probability becomes an indicator:
$$Q_k(\bs{v})=\sum_{\bs{r}_k} \prod_{V_i \in \bs{V}_k} \bs{1}\!\big[f_{V_i}(\bs{pa}_{\bs{v}}(V_i),\, \bs{r}_k) = v_i\big]\, P(\bs{r}_k).$$
The product of indicators equals 1 if and only if $\bs{r}_k \in h_k(\bs{v})$, yielding $Q_{k}(\bs{v}) = \sum_{\bs{r}_{k} \in h_{k}(\bs{v})} P(\bs{r}_k)$.

\textbf{Set of districts.}
We show the result for $|\bs{D}'|=2$; the general case follows by induction. Let $\bs{D}' = \{\bs{D}_1, \bs{D}_2\}$. Applying the single-district result:
$$\prod_{\bs{D}_k \in \bs{D}'} Q_k(\bs{v}) = \left(\sum_{r_{1} \in h_{1}(\bs{v})} P(r_{1})\right)\left(\sum_{r_{2} \in h_{2}(\bs{v})} P(r_{2})\right).$$
Expanding the product of sums:
$$= \sum_{r_{1} \in h_{1}(\bs{v})} \sum_{r_{2} \in h_{2}(\bs{v})} P(r_{1})\, P(r_{2}).$$
Since response-types in distinct districts are independent, $P(r_{k_1})\, P(r_{k_2}) = P(\bs{r}_{\bs{D}'})$. Collapsing the double sum into a sum over the Cartesian product\footnote{Formally justified by Lemma~\ref{lemma:cartesian-double-sum}.} and applying Lemma~\ref{lemma:cartesian}:
$$= \sum_{\bs{r}_{\bs{D}'} \in h_{1}(\bs{v}) \times h_{2}(\bs{v})} P(\bs{r}_{\bs{D}'}) = \sum_{\bs{r}_{\bs{D}'} \in h_{\bs{D}'}(\bs{v})} P(\bs{r}_{\bs{D}'}). \qedhere$$
\end{proof}

\subsection{Proof of Lemma~\ref{th-all_constr_on_Rtheta_in_Dtheta_c-factor}}
\label{proof:constr_on_Rtheta}

\begin{proof}
We write the observational constraint for a fixed $\bs{v} \in supp(\bs{V})$:
$$P(\bs{v})= \sum_{\bs{r} \in h(\bs{v})}P(\bs{r}).$$
Let $\bs{D}_{\bar\theta} = \bs{D} \setminus \bs{D}^*_\theta$ denote all districts not in $\bs{D}^*_\theta$ and $\bs{R}_{\bar\theta} = \bs{D}_{\bar\theta} \cap \bs{R}$ their response types. Since response-types in distinct districts are independent, $P(\bs{r}) = P(\bs{r}_{\theta}^*)\, P(\bs{r}_{\bar\theta})$:
$$P(\bs{v})= \sum_{\bs{r} \in h(\bs{v})} P(\bs{r}_{\theta}^*)\, P(\bs{r}_{\bar\theta}).$$
By Lemma~\ref{lemma:cartesian}, $h(\bs{v}) = \prod_{\bs{D}_k \in \bs{D}} h_k(\bs{v})$, which we partition as $h_{\bs{D}_\theta^*}(\bs{v}) \times h_{\bs{D}_{\bar\theta}}(\bs{v})$. Writing as a double summation and pulling out $P(\bs{r}_\theta^*)$ (independent of $\bs{r}_{\bar\theta}$):
$$P(\bs{v}) = \sum_{\bs{r}_\theta^* \in h_{\bs{D}_\theta^*}(\bs{v})} P(\bs{r}_\theta^*) \sum_{\bs{r}_{\bar\theta} \in h_{\bs{D}_{\bar\theta}}(\bs{v})}  P(\bs{r}_{\bar\theta}).$$
Since $\bs{D}^*_\theta \cup \bs{D}_{\bar\theta} = \bs{D}$, the c-factor decomposition gives $P(\bs{v}) = \prod_{\bs{D}_k \in \bs{D}^*_\theta}Q_k(\bs{v}) \cdot \prod_{\bs{D}_l \in \bs{D}_{\bar\theta}}Q_l(\bs{v})$:
$$\prod_{\bs{D}_k \in \bs{D}^*_\theta}Q_k(\bs{v}) \prod_{\bs{D}_l \in \bs{D}_{\bar\theta}}Q_l(\bs{v}) =
\sum_{\bs{r}_\theta^* \in h_{\bs{D}_\theta^*}(\bs{v})} P(\bs{r}_\theta^*) \sum_{\bs{r}_{\bar\theta} \in h_{\bs{D}_{\bar\theta}}(\bs{v})}  P(\bs{r}_{\bar\theta}).$$
By Lemma~\ref{lemma:district_cfactor}, $\prod_{\bs{D}_l \in \bs{D}_{\bar\theta}}Q_l(\bs{v})= \sum_{\bs{r}_{\bar\theta} \in h_{\bs{D}_{\bar\theta}}(\bs{v})} P(\bs{r}_{\bar\theta})$. Dividing both sides by $\prod_{\bs{D}_l \in \bs{D}_{\bar\theta}}Q_l(\bs{v}) > 0$ yields the desired identity:
$$\prod_{\bs{D}_k \in \bs{D}^*_\theta}Q_k(\bs{v}) = \sum_{\bs{r}_\theta^* \in h_{\bs{D}_\theta^*}(\bs{v})} P(\bs{r}_\theta^*).$$
\end{proof}

\begin{lemma}
\label{th-P_empty_optimizes_over_R_theta}
The base program $\mathcal{P}_\emptyset$ optimizes only over $\bs{R}_\theta^*$.
\end{lemma}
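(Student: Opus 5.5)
The plan is to show that the optimal values of $\mathcal{P}_\emptyset$ coincide with those of a reduced program whose decision variables are only the joint probabilities $P(\bs{r}_\theta^*)$, $\bs{r}_\theta^*\in supp(\bs{R}_\theta^*)$. Write $\bs{r}=(\bs{r}_\theta^*,\bs{r}_{\bar\theta})$ and $\bs{R}_{\bar\theta}=\bs{R}\setminus\bs{R}_\theta^*$, as in the proof of Lemma~\ref{th-all_constr_on_Rtheta_in_Dtheta_c-factor}. Since $\bs{D}$ partitions $\bs{V}\cup\bs{R}$, response types in different districts are independent, so $P(\bs{r})=P(\bs{r}_\theta^*)P(\bs{r}_{\bar\theta})$. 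The reduced program $\mathcal{P}^*$ keeps three pieces: the objective $\mathcal{T}$, the normalization and nonnegativity of $P(\bs{r}_\theta^*)$, and the reduced observational constraints of Lemma~\ref{th-all_constr_on_Rtheta_in_Dtheta_c-factor}. The claim is that $\mathcal{P}_\emptyset$ and $\mathcal{P}^*$ attain the same minimum and maximum.

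\textbf{Step 1 (objective).} By \citet[Proposition~3]{duarte2023automated}, $\theta$ is expressed using only $\bs{R}_\theta\subseteq\bs{R}_\theta^*$. Hence $h(\theta)$ is a cylinder set $h^*(\theta)\times supp(\bs{R}_{\bar\theta})$. Summing out $\bs{r}_{\bar\theta}$ and using $\sum_{\bs{r}_{\bar\theta}}P(\bs{r}_{\bar\theta})=1$ gives $\mathcal{T}=\sum_{\bs{r}_\theta^*\in h^*(\theta)}P(\bs{r}_\theta^*)$. So $\mathcal{T}$ depends only on the marginal of $\bs{R}_\theta^*$.

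\textbf{Step 2 (projection: $\mathcal{P}_\emptyset$-feasible implies $\mathcal{P}^*$-feasible).} Take any feasible point of $\mathcal{P}_\emptyset$ and marginalize to $P(\bs{r}_\theta^*)$. The proof of Lemma~\ref{th-all_constr_on_Rtheta_in_Dtheta_c-factor} shows that this marginal satisfies the reduced constraints, after dividing by $\prod_{\bs{D}_l\in\bs{D}_{\bar\theta}}Q_l(\bs{v})$ via Lemma~\ref{lemma:district_cfactor}. Normalization and nonnegativity carry over trivially, and by Step 1 the objective value is preserved. Therefore the optimal values of $\mathcal{P}^*$ are at least as extreme as those of $\mathcal{P}_\emptyset$.

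\textbf{Step 3 (lifting: $\mathcal{P}^*$-feasible implies $\mathcal{P}_\emptyset$-feasible).} Given a feasible $P(\bs{r}_\theta^*)$ for $\mathcal{P}^*$, I extend it to a full distribution using a \emph{fixed} distribution on the outside response types. The natural choice is the marginal $P^\circ(\bs{r}_{\bar\theta})$ of any feasible point of $\mathcal{P}_\emptyset$; such a point exists because $P(\bs{V})$ is assumed to come from some model. By Lemma~\ref{lemma:district_cfactor}, this $P^\circ$ satisfies $\sum_{\bs{r}_{\bar\theta}\in h_{\bs{D}_{\bar\theta}}(\bs{v})}P^\circ(\bs{r}_{\bar\theta})=\prod_{\bs{D}_l\in\bs{D}_{\bar\theta}}Q_l(\bs{v})$. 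The c-factors $Q_l$ are functionals of $P(\bs{V})$ alone, so this identity holds for every feasible model. Set $P(\bs{r})=P(\bs{r}_\theta^*)P^\circ(\bs{r}_{\bar\theta})$. Then Lemma~\ref{lemma:cartesian} factors $h(\bs{v})$, and the product of the two district-level identities together with $P(\bs{v})=\prod_kQ_k(\bs{v})$ recovers every full observational constraint. The objective value is unchanged by Step 1. Together with Step 2, the two programs have identical optima.

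The main obstacle is Step 3, in particular two points. First, the lifted point must respect the product structure between districts, which it does by construction. Second, I must handle $\bs{v}$ with $\prod_{l}Q_l(\bs{v})=0$, where the division in Lemma~\ref{th-all_constr_on_Rtheta_in_Dtheta_c-factor} is invalid. For such $\bs{v}$, the product form makes the full constraint $0=0\cdot(\cdot)$ hold automatically. The reduced constraint for such $\bs{v}$ should then be dropped, or argued to be implied. I expect to handle this by restricting the reduced constraints to $\bs{v}$ with positive outside c-factors and verifying that both directions of the argument still go through.
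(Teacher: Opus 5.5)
Your proposal is correct, under the same positivity assumption the paper makes, and it uses the paper's two ingredients. The objective depends only on $\bs{R}_\theta^*$ by \citet[Proposition~3]{duarte2023automated}, and the observational constraints reduce as in Lemma~\ref{th-all_constr_on_Rtheta_in_Dtheta_c-factor}. The paper stops there and concludes that optimizing over $\bs{R}_\theta^*$ suffices. However, the proof of Lemma~\ref{th-all_constr_on_Rtheta_in_Dtheta_c-factor} only derives the reduced constraints from the full ones, which is your Step~2. On its own, that shows only that the reduced program is a relaxation. Your Step~3 supplies the converse that the word ``equivalently'' in that lemma takes for granted: you lift a reduced-feasible point to $P(\bs{r}_\theta^*)\,P^\circ(\bs{r}_{\bar\theta})$ with a fixed feasible outside marginal, i.e.\ you observe that the feasible set of $\mathcal{P}_\emptyset$ is a product with a nonempty outside factor. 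So yours is the more complete version of the same argument.

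There are two points of precision. First, $\mathcal{P}^*$ must keep whatever factorization $\mathcal{P}_\emptyset$ imposes among the individual response-type variables inside the hull (e.g.\ $R_1$ and $R_2$ in Example~\ref{ex:districts}). With a free joint over $supp(\bs{R}_\theta^*)$, $\mathcal{P}^*$ would be a relaxation, and your lifted point need not be feasible for the polynomial program.

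Second, the paper simply divides by $\prod_l Q_l(\bs{v})>0$, so you may assume positivity too. If you want the general case, one fact makes your restricted-constraint fix work in both directions. Suppose $P(\bs{v})=0$, and let $j$ be the first index with $P(\bs{v}^{(j)})=0$. The prefix marginal $P(\bs{v}^{(j)})$ factorizes over districts, and every factor other than that of $V_j$'s district coincides with a factor of $P(\bs{v}^{(j-1)})>0$, so it is positive. Hence $V_j$'s district has vanishing prefix factor, and therefore vanishing c-factor at $\bs{v}$, in \emph{every} model compatible with $P(\bs{V})$. A zero is thus always charged to the same side of the hull, and the feasible set remains a product.
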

\begin{proof}
First, by construction of $\bs{R}_\theta^*$, it covers all response-types necessary for expressing $\theta$. Therefore the target polynomial $\mathcal{T}$ is only a function of $\bs{R}_\theta^*$.
Second, Lemma~\ref{th-all_constr_on_Rtheta_in_Dtheta_c-factor} shows that all observational constraints depend only on $\bs{R}_\theta^*$. Hence $\mathcal{C}_\emptyset$ is only a function of $\bs{R}_\theta^*$.
Since both the target function and the constraints depend only on $\bs{R}_\theta^*$, it is sufficient to optimize over $\bs{R}_\theta^*$ in $\mathcal{P}_\emptyset$.
\end{proof}

\subsection{Proof of Theorem~\ref{th-doesnt_touch_r_theta_implies_pot0}}
\label{proof:th-doesnt_touch_r_theta_implies_pot0}
\begin{proof}
Since $\mathcal{P}_\emptyset$ optimizes over $\bs{R}_\theta^*$ only (Lemma~\ref{th-P_empty_optimizes_over_R_theta}), there is no sequence of co-occurring decision variables connecting $R^\star(\tilde{\mathcal{a}})$ to $\mathcal{T}$. By \citet{duarte2023automated}, the constraint for $\tilde{\mathcal{a}}$ can be dropped, resulting in  $\mathcal{P}_{\tilde{\mathcal{a}}} = \mathcal{P}_\emptyset$ and $pot_\theta(\tilde{\mathcal{a}}) = 0$.
\end{proof}

\subsection{Proof of Corollary~\ref{th-experiments_with_intercepted_paths_useless}}
\label{proof:intercepted_paths}

\begin{proof}
By assumption, for every $\frak{c} \in \frak{C}$, every directed path from $\bs{R}_\theta^*$ to $\bs{Y}^{(\frak{c})}$ passes through $\bs{X}^{(\frak{c})}$. By the definition of $R^\star(\tilde{\mathcal{a}})$, a response-type $R$ is in $R^\star(\tilde{\mathcal{a}})$ only if there exists some $\frak{c} \in \frak{C}$ and a directed path from $R$ to $\bs{Y}^{(\frak{c})}$ that does \emph{not} pass through $\bs{X}^{(\frak{c})}$. Since no such path exists for any $\frak{c}$, no element of $\bs{R}_\theta^*$ can be in $R^\star(\tilde{\mathcal{a}})$. Hence $R^\star(\tilde{\mathcal{a}}) \cap \bs{R}_\theta^* = \emptyset$. By Theorem~\ref{th-doesnt_touch_r_theta_implies_pot0}, we have $pot_\theta(\tilde{\mathcal{a}})=0$.
\end{proof}

\subsection{Proof of Theorem~\ref{id_implies_pot0}}
\label{proof:id_implies_pot0}

\begin{proof}
Since $\tilde{\mathcal{a}}$ is ID, it is uniquely computable from $P(\bs{V})$. Therefore, we can determine a unique value $p_{\tilde{\mathcal{a}}}^*$ for the experimental outcome, which satisfies $\forall \bs{r} \in \mathcal{F}_\emptyset: f_{\tilde{\mathcal{a}}}(\bs{r})= p_{\tilde{\mathcal{a}}}^*$. Hence $\mathcal{F}_\emptyset \subseteq \mathcal{F}_{\tilde{\mathcal{a}}}(p_{\tilde{\mathcal{a}}}^*)$, which gives:
$$\mathcal{F}_\emptyset \cap \mathcal{F}_{\tilde{\mathcal{a}}}(p_{\tilde{\mathcal{a}}}^*) = \mathcal{F}_\emptyset.$$
Since the combined feasible set coincides with $\mathcal{F}_\emptyset$, the optimization is unchanged: $\mathcal{W}_\theta(\tilde{\mathcal{a}}) = \mathcal{W}_\theta(\emptyset)$ and $pot_\theta(\tilde{\mathcal{a}})=0$.
\end{proof}

\subsection{Proof of Corollary~\ref{thm-id_inertness}}
\label{proof:thm-id_inertness}
\begin{proof}
Since $\tilde{\mathcal{a}}$ is identifiable, it has a unique feasible outcome $p_{\tilde{\mathcal{a}}}^*$ with $\mathcal{F}_\emptyset \cap \mathcal{F}_{\tilde{\mathcal{a}}}(p_{\tilde{\mathcal{a}}}^*) = \mathcal{F}_\emptyset$ (Theorem~\ref{id_implies_pot0}). For any $\mathcal{b} \subseteq \mathcal{A}$ and feasible $p_{\mathcal{b}}$:
$$
\mathcal{F}_\emptyset \cap \mathcal{F}_{\{\tilde{\mathcal{a}}\} \cup \mathcal{b}}(p_{\tilde{\mathcal{a}}}^*, p_{\mathcal{b}}) = \mathcal{F}_\emptyset \cap \mathcal{F}_{\tilde{\mathcal{a}}}(p_{\tilde{\mathcal{a}}}^*) \cap \mathcal{F}_{\mathcal{b}}(p_{\mathcal{b}}) = \mathcal{F}_\emptyset \cap \mathcal{F}_{\mathcal{b}}(p_{\mathcal{b}}).
$$
Since $p_{\tilde{\mathcal{a}}}^*$ is the only feasible value of $p_{\tilde{\mathcal{a}}}$, the worst-case width satisfies $\mathcal{W}_\theta(\{\tilde{\mathcal{a}}\} \cup \mathcal{b}) = \mathcal{W}_\theta(\mathcal{b})$, and hence $pot_\theta(\{\tilde{\mathcal{a}}\} \cup \mathcal{b}) = pot_\theta(\mathcal{b})$.
\end{proof}
\section{Helper Lemmas}
\label{sec:helper_lemmas}

\begin{lemma}[Factorization of Cartesian Products]
\label{lemma:cartesian-filter}
Let $\bs{R}_1, \bs{R}_2$ be two sets of random variables with disjoint supports,
and let $\phi_1, \phi_2$ be predicates depending only on configurations of
$\bs{R}_1$ and $\bs{R}_2$, respectively. Then
\begin{align*}
& \Big\{ (\bs{r}_1, \bs{r}_2) \in supp(\bs{R}_1) \times supp(\bs{R}_2)
  : \phi_1(\bs{r}_1) \wedge \phi_2(\bs{r}_2) \Big\}\\
&= \big\{ \bs{r}_1 \in supp(\bs{R}_1) : \phi_1(\bs{r}_1) \big\}
  \times \big\{ \bs{r}_2 \in supp(\bs{R}_2) : \phi_2(\bs{r}_2) \big\}.   
\end{align*}

\end{lemma}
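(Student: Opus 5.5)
We prove the set equality by double inclusion, working directly from the definitions. Throughout we view a configuration of $\bs{R}_1 \cup \bs{R}_2$ as a pair $(\bs{r}_1, \bs{r}_2)$; this is legitimate because the two variable sets are disjoint, so $supp(\bs{R}_1\cup\bs{R}_2) = supp(\bs{R}_1)\times supp(\bs{R}_2)$. The one point needing care is that ``$\phi_i$ depends only on configurations of $\bs{R}_i$'' means $\phi_1(\bs{r}_1,\bs{r}_2) = \phi_1(\bs{r}_1)$ for every $\bs{r}_2$, and symmetrically for $\phi_2$. We make this explicit at the outset; once it is stated, everything else is routine.

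\emph{($\subseteq$)} Take $(\bs{r}_1,\bs{r}_2)$ in the left-hand set. Then $\bs{r}_1 \in supp(\bs{R}_1)$ and $\phi_1(\bs{r}_1)$ holds, so $\bs{r}_1$ lies in the first factor. Likewise $\bs{r}_2 \in supp(\bs{R}_2)$ and $\phi_2(\bs{r}_2)$ holds, so $\bs{r}_2$ lies in the second factor. By the definition of the Cartesian product, the pair lies in the right-hand set.

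\emph{($\supseteq$)} Take $\bs{r}_1$ in the first factor and $\bs{r}_2$ in the second. The pair lies in $supp(\bs{R}_1)\times supp(\bs{R}_2)$. Since $\phi_1(\bs{r}_1)$ is unaffected by the value of $\bs{r}_2$, and $\phi_2(\bs{r}_2)$ by the value of $\bs{r}_1$, the conjunction $\phi_1(\bs{r}_1)\wedge\phi_2(\bs{r}_2)$ holds for the pair. Hence the pair belongs to the left-hand set.

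Finally, for its use in Lemma~\ref{lemma:cartesian}, where there are $|\bs{D}|$ districts, we extend the two-factor statement by induction. Group the districts as $\bs{R}_1$ versus $\bs{R}_2\cup\dots\cup\bs{R}_k$, and group the conjunction $\phi_2\wedge\dots\wedge\phi_k$ into a single predicate that depends only on the second block. Applying the two-factor case and then the induction hypothesis to that second block gives the $k$-fold product.
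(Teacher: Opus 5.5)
Your proof is correct and follows the paper's argument: a double inclusion where each direction just unpacks the set-builder notation and the definition of the Cartesian product, followed by induction for finitely many blocks. You are somewhat more explicit than the paper in two places, namely what ``depends only on $\bs{R}_i$'' means and how the induction groups the later districts and predicates into one block, but the route is the same.
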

\begin{proof}
$(\subseteq)$: If $(\bs{r}_1, \bs{r}_2)$ is in the left-hand side, then
$\phi_1(\bs{r}_1)$ holds, so $\bs{r}_1$ is in the first factor, and
$\phi_2(\bs{r}_2)$ holds, so $\bs{r}_2$ is in the second factor.

$(\supseteq)$: If $\bs{r}_1$ is in the first factor and $\bs{r}_2$ in the second, then both $\phi_1(\bs{r}_1)$ and $\phi_2(\bs{r}_2)$ hold, so $(\bs{r}_1, \bs{r}_2)$ is in the left-hand side.

The generalization to any finite collection $\bs{R}_1, \dots, \bs{R}_K$ follows by induction.
\end{proof}

\begin{lemma}[Summation over Cartesian Product as Double Summation]
\label{lemma:cartesian-double-sum}
Let $\mathbb{A}$ be one of $\mathbb{N}, \mathbb{Z}, \mathbb{Q}, \mathbb{R}, \mathbb{C}$. Let $S, T$ be finite sets and $f \colon S \times T \to \mathbb{A}$. Then
$$\sum_{(s,t) \,\in\, S \times T} f(s,t) = \sum_{s \in S}\; \sum_{t \in T} f(s,t).$$
\end{lemma}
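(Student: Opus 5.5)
The plan is to reduce the left-hand side, a sum over the Cartesian product, to an iterated sum by reindexing. Since $S$ and $T$ are finite, I will treat $\sum_{(s,t)\in S\times T} f(s,t)$ as a finite sum. In every listed $\mathbb{A}$, addition is associative and commutative, so this sum is well defined and independent of the enumeration order of $S\times T$.

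The key observation is that $S\times T$ is the disjoint union of the fibers $\{s\}\times T$, for $s\in S$. I will therefore group the terms of the left-hand side by their first coordinate. This regrouping is a permutation and bracketing of finitely many summands, which generalized associativity and commutativity of $+$ in $\mathbb{A}$ allow. Each group is $\sum_{t\in T} f(s,t)$, and summing the groups over $s\in S$ yields the right-hand side.

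To make the argument fully rigorous, I will instead proceed by induction on $|S|$.

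\emph{Base case.} If $S=\emptyset$, then $S\times T=\emptyset$ and both sides are the empty sum $0$.

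\emph{Inductive step.} Write $S=S'\cup\{s_0\}$ with $s_0\notin S'$. Then $S\times T=(S'\times T)\,\dot\cup\,(\{s_0\}\times T)$. Additivity of finite sums over disjoint index sets gives
\[
\sum_{S\times T} f=\sum_{S'\times T} f+\sum_{t\in T} f(s_0,t).
\]
Applying the induction hypothesis to the first term, this equals $\sum_{s\in S'}\sum_{t\in T} f(s,t)+\sum_{t\in T} f(s_0,t)$, which is $\sum_{s\in S}\sum_{t\in T}f(s,t)$.

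The only nontrivial ingredient is the additivity of finite sums over a disjoint union. This follows from the definition of a finite sum via an enumeration, together with the enumeration-independence guaranteed by commutativity. I do not expect a genuine obstacle; the main point to state carefully is that this independence holds in every listed $\mathbb{A}$, since each is a commutative monoid under $+$.
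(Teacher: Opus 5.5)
Your proof is correct and follows essentially the same route as the paper: induction on the size of one factor, splitting off a singleton slice of the Cartesian product and using additivity of finite sums over a disjoint union. The only difference is that you induct on $|S|$, the outer index, while the paper inducts on $|T|$. This slightly simplifies your final step: peeling $s_0$ off the outer sum avoids the termwise merge $\sum_{s} a_s + \sum_{s} b_s = \sum_{s}(a_s+b_s)$ that the paper invokes as ``linearity.''
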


\begin{proof}
We proceed by induction on $|T|$.

\paragraph{Base case.}
If $|T| = 0$, then $T = \emptyset$, so $S \times T = \emptyset$. Both sides equal $0$ by the convention for empty summation.

\paragraph{Inductive step.}
Suppose the identity holds whenever $|T| = n \geq 0$, and let $|T| = n + 1$. Pick $t_0 \in T$, set $T' := T \setminus \{t_0\}$. Then $S \times T = (S \times T') \cup (S \times \{t_0\})$ (disjoint). By the sum over disjoint unions:
$$\sum_{(s,t) \in S \times T} f(s,t) = \sum_{(s,t) \in S \times T'} f(s,t) + \sum_{(s,t) \in S \times \{t_0\}} f(s,t).$$
By the induction hypothesis, the first summand equals $\sum_{s \in S} \sum_{t \in T'} f(s,t)$. The second equals $\sum_{s \in S} f(s, t_0)$. Combining via linearity of summation:
$$= \sum_{s \in S} \left(\sum_{t \in T'} f(s,t) + f(s,t_0)\right) = \sum_{s \in S} \sum_{t \in T} f(s,t). \qedhere$$
\end{proof}

\section{Runtime and Scalability Experiments}
\label{app:runtime}

This appendix quantifies the computational profile of the pipeline. To state
the scope precisely: we do not claim a scalable solver for partial
identification. Potency evaluation relies on solving the underlying
response-type programs and necessarily inherits their computational cost. Our
contributions are instead (i) the formulation of experiment selection under
partial identification and its hardness (Theorem~\ref{thm:np-hard}), (ii) an
exact potency evaluation procedure wherever the underlying programs are
tractable, and (iii) sound graph-based certificates that reduce how many
expensive programs must be solved. The pruning criteria---the component we
claim scales---are evaluated on graphs with up to $76$ variables in
Section~\ref{sec:evaluation}; the experiments below measure what pruning saves
within the solver's tractability range. All runtimes were measured on the
machine described in Appendix~\ref{app:compute_used} (single-threaded), using
the \texttt{autobound} library with Couenne at default tolerances; potency
evaluations follow the grid procedure of
Appendix~\ref{subsec:nhanes-computation}.

\subsection{Scalability of Exact Potency Evaluation}
\label{subsec:scalability}

To probe how far the response-type programs themselves scale, we sampled random
instances (Erd\H{o}s--R\'enyi graphs, two-world counterfactual queries,
model-consistent random $P(\bs{V})$) and checked whether the observational
bounds alone are solvable within a $60$-second solver time limit: at
$|\bs{V}|=5$, $12$ of $25$ instances were solved (median $0.2$s); at
$|\bs{V}|=6$, $3$ of $25$; at $|\bs{V}|=7$, $0$ of $12$. Solving times are
strongly bimodal, and this behavior is specific to our random-graph generation
protocol; in structured instances the computational cost can be significantly
lower. One structural observation softens the picture: by
Lemma~\ref{th-all_constr_on_Rtheta_in_Dtheta_c-factor}, the size of the
optimization program is governed not by $|\bs{V}|$ but by the district hull of
the \emph{query}, so even a large graph can lead to a small program when the
query has a localized district hull. This is a structural statement about when
large graphs remain tractable, not a claim about how often practical queries
are localized. In general, the max-potency problem remains NP-hard and thus
computationally challenging.

\subsection{Runtime With and Without Pruning}
\label{subsec:runtime-pruning}

\paragraph{Selecting a single experiment.}
On a benchmark of $26$ randomly generated graphs with $|\bs{V}|=5$ (sampled
until $26$ passed the solvability check above), using all single-variable
interventions supported by each graph ($2$--$12$ candidates per graph, $152$ in
total), evaluating the potency of every candidate without pruning required $64$
minutes of solver time, whereas pruning reduced this to $17$ minutes. The
pruning pass itself took about $12$ milliseconds per instance.

\paragraph{Subset search.}
On a randomly generated graph ($|\bs{V}|=5$) with $8$ experiments, unit costs, and total
budget $B=2$, enumerating all affordable sets required $18.5$ minutes; removing
the candidates pruned by the ID criterion beforehand reduced this to $12.4$
minutes while yielding exactly the same optimum, as guaranteed by
Corollary~\ref{thm-id_inertness}. This is an illustration of the resulting
saving on an instance small enough to enumerate exhaustively, not a scaling
study; we prove the problem is NP-hard, and our pruning does not alter this
worst-case complexity---what Corollary~\ref{thm-id_inertness} provides is
exactness of the reduction.

\paragraph{Baselines.}
On the same candidate set and budget, greedy selection (repeatedly adding the
experiment with the best potency gain per cost) matched the exhaustive optimum;
picking the single best experiment reached $92\%$ of it; random selection
$23\%$.\footnote{However, the search strategy is not where
the cost lives---greedy, random, and exhaustive all rest on the same potency
evaluations, and pruning cuts that shared cost for all of them alike.} Greedy's
strong showing is also not a guarantee: potency is not additive across
experiments---in Table~\ref{tab:nhanes_results}, the best singleton
$\tilde{\mathcal{a}}_6$ is not part of the optimal budget-$3$ set
$\{\tilde{\mathcal{a}}_4, \tilde{\mathcal{a}}_5\}$, so settling for the single
best experiment reaches only $63\%$ of the optimum there---and experiments that
are useless alone can matter in combination.

\subsection{Incompleteness of the Pruning Rules}
\label{subsec:pruning-incompleteness}

The pruning rules are one-sided certificates: they never discard a useful
experiment, but they might keep some useless ones. To quantify this, we
evaluated every surviving candidate numerically on the $|\bs{V}|=5$ benchmark
above. Across the $16$ instances in which any candidate survived pruning, $48$
candidates survived both rules, and $33$ of them turned out to have zero
potency anyway ($69\%$). The rate is instance-dependent: in the NHANES example
(Table~\ref{tab:nhanes_results}), all four survivors have strictly positive
potency. On the singleton-vs-subset distinction: ID-pruned candidates can be
dropped from \emph{any} subset (Corollary~\ref{thm-id_inertness}), whereas
interception-pruned ones are certified useless only on their own and may still
contribute in combinations---which is exactly why our subset-search algorithm
(Appendix~\ref{subsec:solving-max-potency}) keeps them in the enumeration.

\section{Experimental Setup Details}
\label{app:eval_setup}

This appendix details the synthetic evaluation protocol of Section~\ref{sec:evaluation}.

\subsection{Common Parameters}
\label{app:eval_common}

Both the Erd\H{o}s--R\'enyi and \emph{bnlearn} evaluations share the following configuration:
\begin{center}
\begin{tabular}{ll}
\toprule
$|\mathcal{A}|$ (candidate experiments per simulation) & $200$ \\
Master seed & $42$ \\
$\theta$ \texttt{CF\_fraction} & $1.0$ \\
$\theta$ outcome size $|\bs{Y}^{(\frak{c})}|$ & $1$ \\
$\theta$ intervention size $|\bs{X}^{(\frak{c})}|$ & $1$ \\
Experiment \texttt{CF\_fraction} & $0.3$ \\
Experiment outcome size $|\bs{W}^{(\frak{c})}|$ & $\mathrm{Unif}\{1,2,3\}$ \\
Experiment intervention size $|\bs{Z}^{(\frak{c})}|$ & $\mathrm{Unif}\{1,2,3\}$ \\
\bottomrule
\end{tabular}
\end{center}
We fix $\theta$ to be a two-world counterfactual to guarantee that the query is not point-identifiable, placing every simulation in the partial-identification regime our framework targets. The $30\%$ counterfactual share for candidate experiments reflects a plausible mix of interventional and counterfactual experiments a practitioner might consider.

\subsection{Query Sampling}
\label{app:eval_query}

Each query $\theta$ is sampled as follows:
\begin{enumerate}[leftmargin=*,topsep=2pt,itemsep=1pt]
    \item Pick $Y^{(1)}$ uniformly from $\bs{V}$; pick $X^{(1)}$ uniformly from $\mathrm{anc}(Y^{(1)}) \setminus \{Y^{(1)}\}$.
    \item For the second world, pick $Y^{(2)}$ uniformly from $\bs{V}$ subject to $X^{(1)} \in \mathrm{anc}(Y^{(2)})$, then set $X^{(2)} = X^{(1)}$.
\end{enumerate}

\subsection{Candidate Experiment Sampling}
\label{app:eval_experiments}

Each $\tilde{\mathcal{a}} \in \mathcal{A}$ is sampled independently. With probability $0.3$, $\tilde{\mathcal{a}}$ has two counterfactual worlds; otherwise one.
\begin{enumerate}[leftmargin=*,topsep=2pt,itemsep=1pt]
    \item Sample $|\bs{W}^{(1)}|, |\bs{Z}^{(1)}| \sim \mathrm{Unif}\{1,2,3\}$.
    \item Pick $\bs{W}^{(1)} \subseteq \bs{V}$ uniformly; pick $\bs{Z}^{(1)} \subseteq \mathrm{anc}(\bs{W}^{(1)}) \setminus \bs{W}^{(1)}$ uniformly.
    \item For two-world experiments, sample $\bs{W}^{(2)}, \bs{Z}^{(2)}$ analogously, but seed each with a shared element: pick $w \in \bs{W}^{(1)}$ uniformly and require $w \in \bs{W}^{(2)}$; pick $z \in \bs{Z}^{(1)} \cap (\mathrm{anc}(\bs{W}^{(2)}) \setminus \bs{W}^{(2)})$ uniformly and require $z \in \bs{Z}^{(2)}$. Remaining variables fill from $\bs{V}$ and $\mathrm{anc}(\bs{W}^{(2)}) \setminus \bs{W}^{(2)}$, respectively.
\end{enumerate}

\subsection{Erd\H{o}s--R\'enyi Graphs}
\label{app:eval_er}

Each ER graph has a fixed topological order $V_1 < \cdots < V_N$. Directed edges $V_i \to V_j$ for $i<j$ are drawn independently with probability $p$; bidirected edges between each pair are drawn independently with probability $q$.
\begin{center}
\begin{tabular}{ll}
\toprule
$N = |\bs{V}|$ & $\{10, 15, 20, 30\}$ \\
$p$ & $0.5 \cdot \log(N) / N$ \\
$q$ & $c/N$, with $c \in \{0.5, 1.0, 1.5, 2.0\}$ \\
Simulations per $(N, c)$ & $100$ \\
\bottomrule
\end{tabular}
\end{center}
The full sweep yields $4 \times 4 \times 100 = 1{,}600$ simulations, of which $1{,}596$ are reported.

\subsection{\emph{bnlearn} Graphs}
\label{app:eval_bnlearn}

We use $11$ networks from the \emph{bnlearn} repository: \texttt{ASIA}, \texttt{SACHS}, \texttt{ALARM}, \texttt{BARLEY}, \texttt{CHILD}, \texttt{INSURANCE}, \texttt{MILDEW}, \texttt{WATER}, \texttt{HAILFINDER}, \texttt{HEPAR2}, \texttt{WIN95PTS}. Only the graph structure is retained; conditional probability tables are discarded. Since these DAGs are not associated with latent variables, confounders are introduced synthetically. For each simulation, after sampling $\theta$ on the bare DAG:
\begin{enumerate}[leftmargin=*,topsep=2pt,itemsep=1pt]
    \item Add a forced confounder between $X^{(1)}$ and $Y^{(1)}$.
    \item Draw a target ratio $r \sim \mathrm{Unif}(0.1, 0.9)$, clamped below by $1/|\bs{V}|$ for achievability.
    \item Add confounders between uniformly random pairs of observed variables until the total reaches $\lfloor r \cdot |\bs{V}| \rfloor$.
\end{enumerate}
With $100$ simulations per graph, this yields $11 \times 100 = 1{,}100$ simulations.

\begin{figure}
    \centering
    \includegraphics[width=0.8\linewidth]{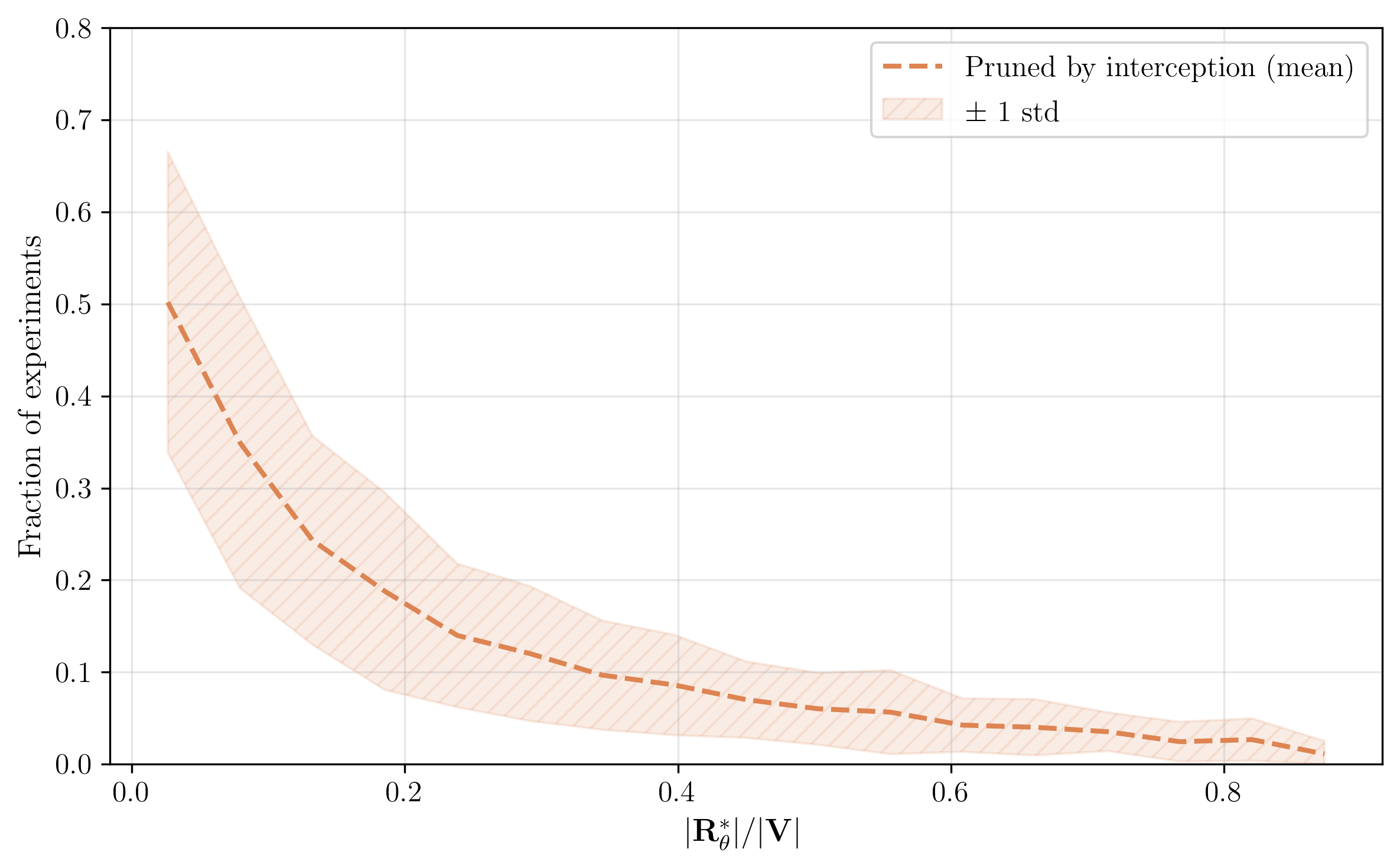}
    \caption{\emph{bnlearn}. The fraction of experiments pruned by the interception mechanism (Algorithm~\ref{alg:all_paths_intercepted}) against the relative size of $\bs{R}^*_\theta$.
    The line represents the mean fraction of experiments pruned; the band around it represents the $\pm 1$ standard deviation.}
    \label{fig:bnlearn_Rstartheta}
\end{figure}

\subsection{Compute Used}
\label{app:compute_used}
All experiments were executed on a single Microsoft Surface Laptop Studio with an 11th Gen Intel Core i7-11370H CPU (3.30~GHz base) and 32~GB RAM. No GPU was used and no parallelization was employed; all simulations ran single-threaded. Each of the three experimental components---the Erd\H{o}s--R\'enyi sweep (1{,}596 simulations), the \emph{bnlearn} sweep (1{,}100 simulations across 11 networks), and the NHANES end-to-end demonstration (Section~\ref{sec:evaluation}, Table~\ref{tab:nhanes_results})---completed in a matter of hours. No additional compute infrastructure was used.

\subsection{Per-graph results (\emph{bnlearn})}
\label{app:eval_bnlearn_table}
Table~\ref{tab:bnlearn_per_graph} reports the per-graph mean and standard
deviation of pruning rates across 100 simulations per graph, broken down by
mechanism. Across all 1{,}100 simulations, total pruning has mean $63.4\%$
(std $12.4\%$), of which $47.1\% \pm 14.5\%$ is contributed by the ID check
and $16.2\% \pm 17.5\%$ by interception. The interception rate varies by more
than an order of magnitude across networks---from $2.3\% \pm 4.3\%$ on
\texttt{MILDEW} to $29.2\% \pm 25.5\%$ on \texttt{WIN95PTS}---driven primarily
by the relative size of the district hull $|\bs{R}^*_\theta|/|\bs{V}|$
(Figure~\ref{fig:bnlearn_Rstartheta}).
\begin{table}[H]
\centering
\caption{Per-graph pruning rates on \emph{bnlearn} networks
(mean $\pm$ std over 100 simulations).}
\label{tab:bnlearn_per_graph}
\begin{tabular}{lccc}
\toprule
Graph & Interception & ID check & Total \\
\midrule
\texttt{ASIA}       & $0.170 \pm 0.134$ & $0.332 \pm 0.130$ & $0.502 \pm 0.158$ \\
\texttt{SACHS}      & $0.226 \pm 0.194$ & $0.316 \pm 0.113$ & $0.543 \pm 0.152$ \\
\texttt{INSURANCE}  & $0.073 \pm 0.073$ & $0.519 \pm 0.090$ & $0.592 \pm 0.113$ \\
\texttt{MILDEW}     & $0.023 \pm 0.043$ & $0.574 \pm 0.085$ & $0.597 \pm 0.092$ \\
\texttt{WATER}      & $0.138 \pm 0.136$ & $0.490 \pm 0.102$ & $0.628 \pm 0.092$ \\
\texttt{BARLEY}     & $0.122 \pm 0.122$ & $0.517 \pm 0.098$ & $0.639 \pm 0.083$ \\
\texttt{CHILD}      & $0.221 \pm 0.181$ & $0.428 \pm 0.124$ & $0.649 \pm 0.113$ \\
\texttt{ALARM}      & $0.173 \pm 0.173$ & $0.510 \pm 0.132$ & $0.682 \pm 0.070$ \\
\texttt{HEPAR2}     & $0.104 \pm 0.116$ & $0.579 \pm 0.095$ & $0.682 \pm 0.064$ \\
\texttt{HAILFINDER} & $0.243 \pm 0.193$ & $0.475 \pm 0.137$ & $0.718 \pm 0.073$ \\
\texttt{WIN95PTS}   & $0.292 \pm 0.255$ & $0.446 \pm 0.179$ & $0.738 \pm 0.085$ \\
\bottomrule
\end{tabular}
\end{table}

\subsection{Per-density breakdown (Erd\H{o}s--R\'enyi)}
\label{app:eval_er_table}

Table~\ref{tab:er_per_density} reports the breakdown of pruning rates across
the four confounding-density settings $c \in \{0.5, 1.0, 1.5, 2.0\}$, where
edges of the bidirected graph are drawn independently with probability $c/N$
(Appendix~\ref{app:eval_er}). The mean achieved confounder ratio $|\bs{U}|/|\bs{V}|$
grows monotonically with $c$, from $0.24$ at $c{=}0.5$ to $0.94$ at $c{=}2.0$.
As confounding density increases, interception weakens (mean drops from
$70.9\%$ to $16.5\%$) and the ID check picks up the slack (mean rises from
$17.2\%$ to $45.6\%$), but total pruning still declines from $88.1\%$ to
$62.1\%$.

\begin{table}[H]
\centering
\caption{Per-density breakdown for the Erd\H{o}s--R\'enyi evaluation
(mean $\pm$ std). $|\bs{U}|/|\bs{V}|$ is the mean confounder ratio for the given parameter $c$;
remaining columns are pruning rates by mechanism.}
\label{tab:er_per_density}
\begin{tabular}{lccccc}
\toprule
$c$ & $|\bs{U}|/|\bs{V}|$ & Interception & ID check & Total \\
\midrule
0.5 & 0.240 & $0.709 \pm 0.310$ & $0.172 \pm 0.190$ & $0.881 \pm 0.136$ \\
1.0 & 0.472 & $0.475 \pm 0.348$ & $0.304 \pm 0.216$ & $0.780 \pm 0.163$ \\
1.5 & 0.699 & $0.300 \pm 0.345$ & $0.389 \pm 0.218$ & $0.689 \pm 0.185$ \\
2.0 & 0.942 & $0.165 \pm 0.300$ & $0.456 \pm 0.201$ & $0.621 \pm 0.184$ \\
\bottomrule
\end{tabular}
\end{table}

\subsection{ID-only pruning (subset search)}
\label{app:eval_id_only}

For the subset search space reduction reported in
Section~\ref{sec:evaluation}, we run the ID check on all
$|\mathcal{A}|$ candidates independently, without prior interception
filtering. Tables~\ref{tab:bnlearn_id_only}
and~\ref{tab:er_id_only} report the results across the same
simulations.

\begin{table}[H]
\centering
\caption{ID-only pruning on \emph{bnlearn} networks
(mean $\pm$ std over 100 simulations).}
\label{tab:bnlearn_id_only}
\begin{tabular}{lcc}
\toprule
Graph & $|\bs{V}|$ & ID-only \\
\midrule
\texttt{ASIA}       & 8  & $0.466 \pm 0.146$ \\
\texttt{SACHS}      & 11 & $0.479 \pm 0.121$ \\
\texttt{CHILD}      & 20 & $0.612 \pm 0.093$ \\
\texttt{INSURANCE}  & 27 & $0.582 \pm 0.093$ \\
\texttt{WATER}      & 32 & $0.600 \pm 0.074$ \\
\texttt{MILDEW}     & 35 & $0.611 \pm 0.088$ \\
\texttt{ALARM}      & 37 & $0.634 \pm 0.064$ \\
\texttt{BARLEY}     & 48 & $0.618 \pm 0.063$ \\
\texttt{HAILFINDER} & 56 & $0.660 \pm 0.050$ \\
\texttt{HEPAR2}     & 70 & $0.664 \pm 0.050$ \\
\texttt{WIN95PTS}   & 76 & $0.669 \pm 0.046$ \\
\midrule
Overall & & $0.600 \pm 0.108$ \\
\bottomrule
\end{tabular}
\end{table}

\begin{table}[H]
\centering
\caption{ID-only pruning for Erd\H{o}s--R\'enyi, pooled over
$|\bs{V}|$ (mean $\pm$ std).}
\label{tab:er_id_only}
\begin{tabular}{lccc}
\toprule
$c$ & $|\bs{U}|/|\bs{V}|$ & ID-only \\
\midrule
0.5 & 0.217 & $0.673 \pm 0.060$ \\
1.0 & 0.473 & $0.637 \pm 0.093$ \\
1.5 & 0.719 & $0.595 \pm 0.113$ \\
2.0 & 0.942 & $0.552 \pm 0.134$ \\
\midrule
Overall & & $0.614 \pm 0.113$ \\
\bottomrule
\end{tabular}
\end{table}

\section{NHANES Experiment Details}
\label{sec:nhanes_details}

This appendix details the construction of the NHANES experiment in
Section~\ref{sec:evaluation}. We use the NHANES 2017--2018
cycle\footnote{\url{https://wwwn.cdc.gov/nchs/nhanes/}} and restrict the
sample to adult respondents ($\text{RIDAGEYR}\ge 18$) with non-missing
entries on all five variables, yielding $n=5{,}833$ complete cases.

The assumed causal structure (Figure~\ref{fig:nhanes_dag}) encodes the
following substantive assumptions: balance problems ($A$) may cause falls
($B$), and both may influence the propensity to engage in physical activity
($X$); health insurance ($Z$) acts as an instrument-like predictor of
physical activity through access to preventive care; and physical activity
affects the (negated) diabetes outcome ($Y$). Three latent confounders are
posited: $U_3$ between $A$ and $B$ (e.g., inner-ear or vestibular disorders
affecting both balance and falls), $U_1$ between $Z$ and $X$ (e.g., overall
healthcare access and socioeconomic factors), and $U_2$ between $X$ and $Y$
(e.g., genetic predisposition or general health-consciousness). The query
of interest is the average treatment effect $\theta = \mathrm{ATE} =
P(y_x) - P(y_{x'})$ of physical activity on not having diabetes.

\begin{figure}[H]
\centering
\begin{tikzpicture}[->, node distance=1.5cm, thick,
  obs/.style={draw, circle, minimum size=0.65cm, inner sep=1pt},
  lat/.style={draw, circle, dashed, minimum size=0.65cm, inner sep=1pt}]
  \node[obs] (A) {$A$};
  \node[obs, above=1.2cm of A] (B) {$B$};
  \node[obs, above=0.5cm of B] (Z) {$Z$};
  \node[obs, right=1.2cm of Z] (X) {$X$};
  \node[obs, right=1.2cm of X] (Y) {$Y$};
  \node[lat, left=0.5cm of $(A)!0.5!(B)$] (U3) {$U_3$};
  \node[lat, above=0.5cm of $(Z)!0.5!(X)$] (U1) {$U_1$};
  \node[lat, above=0.5cm of $(X)!0.5!(Y)$] (U2) {$U_2$};
  \draw (A)--(B); \draw (A) to[bend right=20] (X); \draw (B)--(X);
  \draw (Z)--(X); \draw (X)--(Y);
  \draw[dashed] (U3)--(A); \draw[dashed] (U3)--(B);
  \draw[dashed] (U1)--(Z); \draw[dashed] (U1)--(X);
  \draw[dashed] (U2)--(X); \draw[dashed] (U2)--(Y);
\end{tikzpicture}
\caption{Assumed causal ADMG for the NHANES experiment ($n=5{,}833$
complete cases). Solid nodes are observed; dashed nodes are latent
confounders.}
\label{fig:nhanes_dag}
\end{figure}

Table~\ref{tab:nhanes_vars} maps each variable to its corresponding
NHANES variable code and the rule used to discretize it to a binary value.
We negate the diabetes indicator (denoting the variable $\neg Y$ in the
table) so that $Y=1$ corresponds to \emph{not} having a doctor-diagnosed
diabetes condition; this preserves the natural reading of $\theta>0$ as
``physical activity is protective.''

\begin{table}[H]
  \centering\small
  \caption{Mapping of variables to NHANES 2017--2018 codes.}
  \label{tab:nhanes_vars}
  \begin{tabular}{clllc}
    \toprule
    Node & Variable & Code & Discretization rule & $|\mathrm{supp}|$ \\
    \midrule
    $A$ & Balance problems & BAQ321C & 1 if unsteady past 12\,mo & 2 \\
    $B$ & Falls & BAQ550  & 1 if $\geq 1$ fall past 12\,mo & 2 \\
    $Z$ & Health insurance & HIQ011  & 1 if currently covered & 2 \\
    $X$ & Physical activity & PAQ650/665 & 1 if vigorous or moderate recreation & 2 \\
    $\neg Y$ & Diabetes & DIQ010  & 1 if doctor-diagnosed & 2 \\
    \bottomrule
  \end{tabular}
\end{table}

The five-variable ADMG and the discretized observational distribution
$P(A, B, Z, X, Y)$ derived from these definitions form the input to the
pruning and potency-evaluation pipeline reported in Table~\ref{tab:nhanes_results}.

\subsection{Candidate Experiments: Realizable Trials vs.\ Assumption-Type Constraints}
\label{subsec:nhanes-experiments}

As defined in Section~\ref{sec:bounds}, the candidate set $\mathcal{A}$ includes
both physically realizable experiments and assumption-type constraints; the
polynomial program treats both uniformly, and the distinction is captured by the
cost function, which may assign infinite cost to experiments that cannot be
conducted (Section~\ref{sec:max_potency}). Whether a given counterfactual
quantity is physically realizable can be checked upfront with the algorithm of
\citet{raghavan2025counterfactual}. Table~\ref{tab:nhanes_experiment_types}
labels each candidate of Table~\ref{tab:nhanes_results} accordingly.

\begin{table}[H]
  \centering\small
  \caption{Candidates of Table~\ref{tab:nhanes_results}, labeled as realizable
  trials or assumption-type constraints.}
  \label{tab:nhanes_experiment_types}
  \begin{tabular}{clll}
    \toprule
     & Experiment & Type & Interpretation \\
    \midrule
    $\tilde{\mathcal{a}}_1$ & $P(X \mid do(B))$   & realizable & single-world intervention \\
    $\tilde{\mathcal{a}}_2$ & $P(B \mid do(A))$   & realizable & single-world intervention \\
    $\tilde{\mathcal{a}}_3$ & $P(X,Y \mid do(z))$ & realizable & single-world intervention \\
    $\tilde{\mathcal{a}}_4$ & $P(y_{x'},y'_{x})$  & assumption & cross-world counterfactual \\
    $\tilde{\mathcal{a}}_5$ & $P(Y \mid do(x'))$  & realizable & single-world intervention \\
    $\tilde{\mathcal{a}}_6$ & $P(Y \mid do(x))$   & realizable & single-world intervention \\
    \bottomrule
  \end{tabular}
\end{table}

Concretely, $\tilde{\mathcal{a}}_4 = P(y_{x'}, y'_{x})$ is the probability that
physical activity (necessarily and sufficiently) \emph{causes} diabetes. No
trial can measure it, but committing to $\tilde{\mathcal{a}}_4 = 0$ on
substantive grounds is exactly the monotonicity (``no harm'') assumption, which
is known to tighten PNS bounds \citep{tian_probabilities_2000}. Its cost of $1$
in Table~\ref{tab:nhanes_results} prices that commitment, not a trial. The
combinations $\{\tilde{\mathcal{a}}_4, \tilde{\mathcal{a}}_6\}$ and
$\{\tilde{\mathcal{a}}_4, \tilde{\mathcal{a}}_5\}$ mix both types.

\subsection{Computational Procedure}
\label{subsec:nhanes-computation}

Potency evaluation in our implementation consists of many small programs rather
than one large one: we compute bounds with the \texttt{autobound} library of
\citet{duarte2023automated} and take the worst case over experimental outcomes
by re-solving the bounds on a grid over the feasible outcome range, using $800$
grid points per scalar (singleton) outcome dimension and $30 \times 30$ for
ordinary pairs of scalar outcomes. The one exception is
$\tilde{\mathcal{a}}_3 = P(X,Y \mid do(z))$, whose outcome is a pair of
\emph{distributions} rather than scalars: there we sweep $30$ sampled feasible
distributions for $do(Z{=}0)$, crossed with the feasible distributions
enumerated on a $1/30$ grid for $do(Z{=}1)$ (which yields $20$ vectors, a
$30 \times 20$ sweep).
Every program is solved by Couenne at its default tolerances (no custom gap, no
time limit), i.e., to global optimality up to solver tolerance; early
termination plays no role in any reported number. The coupled formulation in
Appendix~\ref{subsec:evaluating-potency} is given for completeness (and to
provide our minimax simplification) rather than as a description of the
implementation.

The outer maximization over a finite grid is approximate, and the approximation
is one-directional: a maximum over a finite grid can only fall at or below the
true maximum, so $\mathcal{W}_\theta(\mathcal{a})$ is under-estimated and
potency correspondingly over-estimated. At a coarser resolution of $200$
points ($15 \times 15$ for pairs), all quantities in
Table~\ref{tab:nhanes_results} agree with the reported values to six decimal
places except $\tilde{\mathcal{a}}_3$, whose potency reads $0.1206$ instead of
$0.0990$---the candidate with the coarsest effective grid, since its outcome is
a joint distribution over two dimensions. The ranking of candidates and the optimal selection at
$B = 1, 2, 3$ are identical at both resolutions.

Zero-potency findings are exact rather than approximate: each is certified by an
explicit outcome at which both bounds are unchanged, which by
Lemma~\ref{lemma_pot0_implies_boundsEqual} is necessary and sufficient for zero
potency, and such a witness remains valid under any refinement of the grid.

\subsection{Robustness}
\label{subsec:nhanes-robustness}

\paragraph{Finite-sample stability.}
The pipeline uses the plug-in principle and treats the empirical distribution of
the $n = 5{,}833$ NHANES records as exact. Methodologically, sampling error can
be accounted for within the same machinery: \citet[Section~6]{duarte2023automated}
relax each observational equality constraint to a $1{-}\alpha$ confidence
region, and since our experimental constraints and the potency program are built
on the same constraint set, the relaxation carries over unchanged; the pruning
step uses only the graph and the query, never $P(\bs{V})$, so pruning decisions
are unaffected by sampling error by construction. To quantify what the plug-in
treatment neglects, we bootstrapped the sample ($50$
replicates)\footnote{For computational tractability, the bootstrap uses a
coarser outcome grid than the main analysis: $100$ points for one-dimensional
sweeps, $5\times5$ for ordinary two-dimensional sweeps, five sampled feasible
distributions for $do(Z=0)$, and an enumeration grid with spacing $1/5$ for
$do(Z=1)$ in the joint $P(X,Y\mid do(Z))$ candidate. The same configuration is
used for every replicate. This bootstrap is a finite-sample stability check; the
Table~\ref{tab:nhanes_results} potencies themselves are computed on the fine grid
of Appendix~\ref{subsec:nhanes-computation}.} and recomputed
the potencies. The observational width was $1.00$ in every replicate, and all
potencies had standard deviations below $0.007$ (e.g., $\tilde{\mathcal{a}}_4$:
$0.479 \pm 0.006$; $\{\tilde{\mathcal{a}}_4, \tilde{\mathcal{a}}_5\}$:
$0.850 \pm 0.006$). The optimal choice at $B = 1, 2, 3$ agreed with
Table~\ref{tab:nhanes_results} in all $50$ replicates. With five binary
variables, $P(\bs{V})$ has $32$ cells estimated from $5{,}833$ records, so this
stability is expected.

\paragraph{Graph misspecification.}
Dependence on the assumed graph is fundamental to any method in this space, but
our framework localizes it: the program depends on the graph only through the
query's district hull (Lemma~\ref{th-all_constr_on_Rtheta_in_Dtheta_c-factor}),
so edits that leave the query's district hull and the parents of its variables
untouched provably cannot affect the potencies. To evaluate robustness when a
change occurs \emph{inside} the hull, we added a direct edge $Z \to Y$ to the
NHANES graph---dropping the exclusion-restriction-type assumption---and
recomputed Table~\ref{tab:nhanes_results}. All potencies remained unchanged
except $\tilde{\mathcal{a}}_3 = P(X,Y \mid do(z))$, which dropped to exactly
zero: precisely the candidate whose value rested on $Z$ acting as an
instrument-like lever, which the added edge destroys. The optimal choice under
each budget $B = 1, 2, 3$ was unchanged. The graph matters, but the framework
tells the analyst which assumptions carry the conclusions, and here the
selection itself was robust to the contested edge.